\documentclass[11pt]{article}
\usepackage[labelfont=bf]{caption}
\usepackage[colorlinks=true,citecolor=blue]{hyperref} 
\usepackage{hhline}
\usepackage{geometry}


\usepackage[utf8]{inputenc} 
\usepackage[T1]{fontenc}    
\usepackage{hyperref}       
\usepackage{url}            
\usepackage{booktabs}       
\usepackage{color}
\usepackage{amsfonts}       
\usepackage{nicefrac}       
\usepackage{microtype}      
\usepackage{appendix}
\usepackage{hyperref}
\usepackage{amsthm}
\usepackage{amsmath}
\usepackage{amssymb}
\usepackage{amsfonts}
\usepackage{enumerate}
\usepackage{algorithm}
\usepackage{algorithmic} 
\usepackage{xspace}
\usepackage{mathabx}
\usepackage{graphicx}
\usepackage{eufrak}
\usepackage{multirow}
\usepackage{makecell}

\setlength{\parindent}{0pt}
\setlength{\parskip}{0.2\baselineskip}
\geometry{a4paper,scale=0.8}

\newtheorem{definition}{Definition}

\newtheorem{thm}{Theorem}
\newtheorem{lemma}[thm]{Lemma}
\newtheorem{proposition}[thm]{Proposition}

\newcommand{\UCBS}{{\sc UCB-MultiStage}\xspace}
\newcommand{\UCBSA}{{\sc UCB-MultiStage-Advantage}\xspace}

\def\clip{\mathrm{clip}}
\def\udl#1{\underline{#1
} }
\def\reff {\mathrm{ref}}
\def\Reff {\mathrm{REF}}
\title{Model-Free Reinforcement Learning: from Clipped Pseudo-Regret to Sample Complexity}
\author{%
	Zihan Zhang \\
	Department of Automation\\
	Tsinghua University\\
	\texttt{\quad zihan-zh17@mails.tsinghua.edu.cn    \quad        } \\
	\and
	Yuan Zhou \\
	Department of ISE\\
	University of Illinois at Urbana-Champaign \\
	\texttt{yuanz@illinois.edu} \\
	\and
	Xiangyang Ji \\
	Department of Automation\\
	Tsinghua University \\
	\texttt{xyji@tsinghua.edu.cn} \\
}
\begin{document}
\maketitle
\begin{abstract}
In this paper we consider the problem of learning an $\epsilon$-optimal policy for a discounted Markov Decision Process (MDP). Given an MDP with $S$ states, $A$ actions, the discount factor $\gamma \in (0,1)$, and an approximation threshold $\epsilon > 0$, we provide a model-free  algorithm to learn an $\epsilon$-optimal policy with sample complexity   $\tilde{O}(\frac{SA\ln(1/p)}{\epsilon^2(1-\gamma)^{5.5}})$ \footnote{In this work, the notation $\tilde{O}(\cdot)$ hides poly-logarithmic factors of $S,A,1/(1-\gamma)$, and $1/\epsilon$.} and success probability $(1-p)$. For small enough $\epsilon$, we show an improved algorithm with sample complexity   $\tilde{O}(\frac{SA\ln(1/p)}{\epsilon^2(1-\gamma)^{3}})$. While the first bound improves upon all known model-free algorithms and model-based ones with tight dependence on $S$, our second algorithm beats all known sample complexity bounds and matches the information theoretic lower bound up to logarithmic factors.

\end{abstract}
\allowdisplaybreaks
\section{Introduction}
Reinforcement learning (RL) \cite{Burnetas1997Optimal} studies the problem of how to make sequential decisions to learn and act in unknown environments (which is usually modeled by a Markov Decision Process (MDP)) and maximize the collected rewards. There are mainly two types of algorithms to approach the RL problems: model-based algorithms and model-free algorithms. Model-based RL algorithms keep explicit description of the learned model and make decisions based on this model. In contrast, model-free algorithms only maintain a group of value functions instead of the complete model of the system dynamics. Due to their space- and time-efficiency, model-free RL algorithms have been getting popular in a wide range of practical tasks (e.g., DQN~\cite{mnih2015human}, TRPO~\cite{schulman2015trust}, and A3C~\cite{mnih2016asynchronous}). 

In RL theory, model-free algorithms are explicitly defined to be the ones whose space complexity is always sublinear relative to the space required to store the MDP parameters \cite{jin2018q}. For tabular MDPs (i.e., MDPs with finite number of states and actions, usually denoted by $S$ and $A$ respectively), this requires that the space complexity to be $o(S^2A)$. Motivated by the empirical effectiveness of model-free algorithms, the intriguing question of whether model-free algorithms can be rigorously proved to perform as well as the model-based ones has attracted much attention and been studied in the settings such as regret minimization for episodic MDPs~\cite{azar2017minimax,jin2018q,zhang2020almost}).


In this work, we study the \textsc{Probably-Approximately-Correct-RL} (\textsc{PAC-RL}) problem, i.e., to designing an algorithm for learning an approximately optimal policy. We will focus on designing the model-free algorithms, and under the model of discounted tabular MDPs with a discount factor $\gamma$. The RL algorithm runs for infinitely many time steps. At each time step $t$, the RL agent learns a policy $\pi_t$ based on the information collected before time $t$, observes the current state $s_t$, makes an action $a_t = \pi_t(s_t)$, receives the reward $r_t$ and transits to the next state $s_{t+1}$ according to the underlying environments. The goal of the agent is to learn the policy $\pi_t$ at each time $t$ so as to maximize the \emph{$\gamma$-discounted accumulative reward} $V^{\pi_t}(s_t)$. 
More concretely, we wish to minimize the \emph{sample complexity} for the agent to learn an $\epsilon$-optimal policy, which is defined to be the number of time steps that $V^{\pi_t}(s_t) < V^*(s_t) - \epsilon$, where $V^*$ is the optimal discounted accumulative reward that starts with $s_t$, and the formal definitions of both $V^{\pi}$ and $V^*$ can be found in Section~\ref{sec:prelim}.

The \textsc{PAC-RL} addresses the important problem about how many trials are required to learn a good policy. We also note that in the \textsc{PAC-RL} definition, the exploration at each time step has to align with the learned policy (i.e., $a_t = \pi_t(s_t)$). This is stronger than the usual PAC learning definition in other online learning settings such as multi-armed bandits (see, e.g., \cite{even2006action}) and \textsc{PAC-RL} with a simulator (see Section~\ref{sec:additional-related-work}), where the exploration actions can be arbitrary and may incur a large regret compared to the optimum.

Quite a few algorithms have been proposed over the past nearly two decades for the \textsc{PAC-RL} problem. For model-based algorithms,  MoRmax \cite{szita2010model-based} achieves the $\tilde{O}(\frac{SA \ln (1/p)}{\epsilon^2(1-\gamma)^6})$ sample complexity, and UCRL-$\gamma$  \cite{lattimore2012pac} achieves $\tilde{O}(\frac{S^2A \ln (1/p)}{\epsilon^2(1-\gamma)^3})$. It is also worthwhile to mention that R-max \cite{brafman2003r-max} was designed for learning the more general stochastic games and achieves the $\tilde{O}(\frac{S^2A \ln (1/p)}{\epsilon^3(1-\gamma)^6})$ sample complexity in our setting (as analyzed in \cite{kakade2003sample}). Unfortunately, none of these algorithms matches the information theoretical lower bound $\Omega(\frac{SA}{\epsilon^2(1-\gamma)^3})$ proved by \cite{lattimore2012pac}. On the model-free side, known bounds are even less optimal -- the delayed $Q$-learning algorithm proposed by \cite{strehl2006pac} achieves the sample complexity of $\tilde{O}(\frac{SA\ln (1/p)}{\epsilon^4 (1-\gamma)^8})$, and recent work \cite{dong2019q} made an improvement to $\tilde{O}(\frac{SA\ln (1/p)}{\epsilon^2 (1-\gamma)^7})$ via a more carefully designed $Q$-learning variant. Besides the results above, \cite{pazis2016improving} provided $\tilde{O}\left( \frac{S^2A}{\epsilon^2(1-\gamma)^4}\right)$ sample complexity. However, their algorithm consumes $\tilde{O}(\frac{SAH^4}{\epsilon^2})$ space cost and $\tilde{O}\left( \frac{SA^2H^4}{\epsilon^2} \right)$ computational cost each step, which is far beyond the cost of both model-based and model-free algorithms when $\epsilon$ is small.


\subsection{Our Results}

We design a model-free algorithm that achieves asymptotically optimal sample complexity, as follows.
\begin{thm}\label{thm2}
	We present a model-free algorithm \UCBSA, such that given a discounted MDP with $S$ states, $A$ actions, and the discount factor $\gamma$, for any approximation threshold $\epsilon \in (0, 1/\mathrm{poly}(S, A, 1/(1-\gamma)))$ and failure probability parameter $p$, with probability $(1-p)$, the sample complexity to learn an $\epsilon$-optimal policy with \UCBSA is bounded by $\tilde{O}(\frac{SA\ln(1/p)}{ \epsilon^2 (1-\gamma)^3 })$.
\end{thm}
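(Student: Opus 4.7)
\textbf{Proof proposal for Theorem~\ref{thm2}.}
My plan is to build \UCBSA by combining the reference--advantage decomposition that drives \UCBADV in the episodic setting with a \emph{multi-stage} scheme tailored to the infinite-horizon discounted case, and then convert a clipped pseudo-regret bound into a PAC sample complexity bound. First I would reduce the discounted problem to a finite effective horizon: choose $H = \Theta\!\bigl(\tfrac{1}{1-\gamma}\log\tfrac{1}{\epsilon(1-\gamma)}\bigr)$ so that truncating the $\gamma$-discounted return after $H$ steps loses at most $\epsilon/2$ in any value function, and design \UCBSA as a $Q$-learning-style algorithm that, for each $(s,a)$, maintains (i) an optimistic Q-estimate $Q(s,a)$, (ii) a long-horizon reference value $V^\reff(s)$ refreshed only after many visits (so that its accumulated estimation error is tiny by Bernstein concentration), and (iii) a short, stage-wise advantage correction that is updated much more frequently. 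The stages should be geometrically increasing in length so that, in the infinite-horizon setting, updates happen at a rate that mirrors the episodic structure used by \UCBADV.

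Next I would bound the \emph{clipped pseudo-regret}
\[
\widetilde{\mathcal R}_T \;=\; \sum_{t=1}^{T}\clip\!\bigl(V^{*}(s_t)-V^{\pi_t}(s_t),\,\epsilon/2\bigr),
\]
where $\clip(x,\epsilon/2) = x\cdot\mathbb{1}[x>\epsilon/2]$. This is the quantity the paper's title advertises, and for the conversion to sample complexity it is exactly the right object: every time step contributing to the sample complexity contributes at least $\epsilon/2$ to $\widetilde{\mathcal R}_T$, so an upper bound of the form $\widetilde{\mathcal R}_T \le \tilde O\!\bigl(\sqrt{SAT}/(1-\gamma)^{1.5}\bigr) + \tilde O\!\bigl(\text{lower-order}\bigr)$ would yield $\tilde O\!\bigl(\tfrac{SA\ln(1/p)}{\epsilon^2(1-\gamma)^3}\bigr)$ after solving for the number of bad steps. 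To obtain such a bound I would mirror the \UCBADV analysis: decompose the one-step Bellman error into a reference part and an advantage part, apply Bernstein-type concentration to each, and use the law of total variance $\sum_{h=0}^{H-1}\gamma^{2h}\mathrm{Var}_{P}(V^{*}) \le O(1/(1-\gamma)^2)$ to shave a factor of $\sqrt{H}\sim (1-\gamma)^{-1/2}$ from the naive Hoeffding-based bound.

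Third, I would convert the clipped regret bound to sample complexity in two stages. The easier observation is that the number of steps with $V^{*}(s_t)-V^{\pi_t}(s_t)>\epsilon/2$ is at most $2\widetilde{\mathcal R}_T/\epsilon$; combining this with the bound above and solving $\sqrt{SAT}/(\epsilon(1-\gamma)^{1.5}) = T$ gives the claimed $\tilde O(SA/(\epsilon^2(1-\gamma)^3))$. The subtle part is that the clipped-regret bound is stated over an arbitrary time window, but the events ``$V^{\pi_t}(s_t)<V^{*}(s_t)-\epsilon$'' are measured with respect to the infinite-horizon value of the \emph{current} learned policy $\pi_t$, not with respect to the truncated policy. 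I would handle this by showing that after $H$ further rollouts the truncated and untruncated values differ by at most $\gamma^H\cdot\tfrac{1}{1-\gamma}\le \epsilon/4$, so up to a constant rescaling of $\epsilon$ the two notions coincide.

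The main obstacle, I expect, is step two --- pushing the \UCBADV machinery through in the discounted, non-episodic setting. In the episodic case one can reset and align updates to episode boundaries; here the trajectory is a single infinite chain, so the reference value at $(s,a)$ may be refreshed while that same state is being visited elsewhere, breaking the clean martingale structure. The multi-stage construction is designed precisely to fix this: one only refreshes $V^\reff$ at the end of a stage, freezes the advantage estimator within a stage, and uses a doubling schedule so that the number of refreshes per $(s,a)$ is only $O(\log T)$. Carefully bookkeeping the error terms produced by this freezing (in particular, showing that the ``stale'' bias accumulated within a stage is dominated by the Bernstein-controlled variance term) is where I expect most of the technical work to lie, and is the step whose details would determine the exact poly-logarithmic factors hidden in $\tilde O(\cdot)$.
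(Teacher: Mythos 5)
Your high-level architecture (effective horizon $H=\Theta(\frac{1}{1-\gamma}\log\frac{1}{\epsilon(1-\gamma)})$, two-speed multi-stage updates, a slowly-refreshed reference value function with Bernstein bonuses, and a total-variance argument to save a $\sqrt{H}$ factor) matches the paper. The genuine gap is in your regret-to-sample-complexity conversion. You define the clipped pseudo-regret as $\widetilde{\mathcal R}_T=\sum_{t\le T}\clip(V^*(s_t)-V^{\pi_t}(s_t),\epsilon/2)$, aim for a bound of the form $\tilde O(\sqrt{SAT}/(1-\gamma)^{1.5})$, and then ``solve for $T$.'' This does not work in the infinite-horizon \textsc{PAC-RL} setting: the algorithm runs forever, the bad steps need not be concentrated in any prefix, and the bound $2\widetilde{\mathcal R}_T/\epsilon\le \tilde O(\sqrt{SAT}/(\epsilon(1-\gamma)^{1.5}))$ diverges as $T\to\infty$. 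A $\sqrt{T}$-type regret bound permits $\epsilon$-suboptimal steps to recur indefinitely at decreasing density, so it never certifies a finite sample complexity. Setting the right-hand side equal to $T$ has no logical content here.

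What the paper actually does — and what your plan is missing — is to clip at the level of the \emph{individual per-step error terms} rather than the total value gap. The pseudo-regret is the Bellman error $\phi_t(s)=V_t(s)-(r(s,\pi_t(s))+\gamma P_{s,\pi_t(s)}V_t)$ of the learned value function, and after expanding $V_t-V^{\pi_t}=\sum_i(\gamma P_{\pi_t})^i\phi_t$ and averaging, each constituent (exploration bonus $\check b_t$ minus the ``bandit loss'' $b^*_t$, the stage-to-stage value difference $V_{\underline\rho_t}-V_t$, the reference-staleness term) is clipped at threshold $\Theta(\epsilon/H)$. Because these terms decay with the visit count, each $(s,a)$ stops contributing \emph{entirely} once its count is large enough, so the sum over all $t\ge 1$ is a finite, $T$-free quantity of order $\tilde O(SAH^2\iota/\epsilon)$; multiplying by $O(H/\epsilon)$ gives the claimed $\tilde O(SAH^3\iota/\epsilon^2)$. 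The one piece that cannot be clipped this way — the Bernstein bandit loss $b^*_t(s,a)\asymp\sqrt{\mathbb V(P_{s,a},V^*)/n_t(s,a)}$, which decays too slowly for its infinite sum to converge — is handled by a separate counting argument (Lemma~\ref{boundnew}, a knownness-style partition of state-action pairs by visit count) that directly bounds the \emph{number of time steps} on which $\sum_{s,a}w_t(s,a)b^*_t(s,a)>\epsilon/16$, rather than its sum. Your proposal would need both of these devices to close; as written, the conversion step fails.
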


In the theorem statement, $\mathrm{poly}(S, A, 1/(1-\gamma))$ stands for a universal polynomial that is independent of the MDP. Our \UCBSA algorithm is model-free, which uses only $O(SA)$ space , and its time complexity per time step is $O(1)$. In contrast, the model-based algorithms have to consume $\Omega(S^2A)$ space.
For asymptotically small $\epsilon$, the sample complexity of \UCBSA matches the information theoretic lower bound of $\Omega(\frac{SA}{\epsilon^2(1-\gamma)^3})$ up to poly-logarithmic terms, and improves upon all known algorithms in literature, even including the model-based ones. In Appendix~\ref{app:comp}, we present a tabular view of the comparison between our algorithms and the previous works.

To prove Theorem~\ref{thm2}, we make two main technical contributions. The first one is a novel relation between sample complexity and the so-called \emph{clipped pseudo-regret}, which can also be viewed as the clipped Bellman error of the learned value function and policy at each time step. This relation enables us to reduce the sample complexity analysis to bounding the clipped pseudo-regret. Our second technique is a \emph{multi-stage update rule}, where the visits to each state-action pair are partitioned according to two types of stages. An update to the $Q$-function is triggered only when a stage of either type has concluded. The lengths of the two types of stages are set by different choices of parameters so that we can reduce the clipped pseudo-regret while still maintaining a decent rate to learn the value function. Finally, we also spend much technical effort to incorporate the variance reduction technique for RL via \emph{reference-advantage decomposition} introduced in the recent work \cite{zhang2020almost}.

A more detailed overview of our techniques is available in Section~\ref{sec:tech-overview}. Since the proof of Theorem~\ref{thm2} is rather involved, we will first provide a proof of the following weaker statement, and defer the full proof of Theorem~\ref{thm2} to Appendix~\ref{app:proof-thm-2}.


\begin{thm} \label{thm1}
	We present a simpler model-free algorithm \UCBS, such that for any approximation threshold $\epsilon\in (0,\frac{1}{1-\gamma}]$ and any failure probability parameter $p$, with probability $(1-p)$,
the sample complexity to learn an $\epsilon$-policy with \UCBS is bounded by $\tilde{O}(\frac{SA \ln (1/p)}{ \epsilon^2 (1-\gamma)^{5.5} })$. 
\end{thm}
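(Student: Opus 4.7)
The plan is to prove Theorem~\ref{thm1} in two conceptual stages, following the high-level reduction advertised in the introduction. First, I would establish a generic bridge between sample complexity and a \emph{clipped pseudo-regret} quantity. If an algorithm maintains an optimistic Q-estimate $Q_t$ (i.e.\ $Q_t \ge Q^*$ coordinate-wise with high probability), then at every step $V^*(s_t) - V^{\pi_t}(s_t) \le Q_t(s_t,\pi_t(s_t)) - Q^{\pi_t}(s_t,\pi_t(s_t))$, which admits a Bellman-type recursive expansion along the sampled trajectory. The clipping step says: whenever this expansion produces a per-step surrogate below a threshold of order $\epsilon(1-\gamma)$, that term can be dropped without losing the ability to count "bad" steps (steps where $V^*(s_t) - V^{\pi_t}(s_t) > \epsilon$), because any such bad step must contribute at least one unclipped summand of size $\gtrsim \epsilon$. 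The conclusion is that the number of bad steps is bounded by $\tilde O(1)$ times the total clipped pseudo-regret divided by $\epsilon$, so that an $\tilde O(\cdot/\epsilon)$ bound on the clipped pseudo-regret yields the desired $\tilde O(\cdot/\epsilon^2)$ sample complexity.

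Second, I would design \UCBS and analyze its clipped pseudo-regret. The algorithm is Q-learning with Hoeffding-style UCB bonuses, but the key twist is that, for each state-action pair $(s,a)$, visits are grouped into \emph{stages} whose lengths grow geometrically. Within a stage the stored $Q(s,a)$ is frozen (so the greedy policy is piecewise constant per pair), and at the end of a stage $Q(s,a)$ is recomputed as an empirical Bellman backup over exactly the samples accumulated in that stage, plus a bonus chosen to dominate the concentration error of that stage. Optimism is maintained by induction over stages via a union bound over the $O(\log T)$ stages per pair, using that each stage produces i.i.d.~successor samples conditioned on the stage's history.

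Third, I would bound the total clipped pseudo-regret. Expanding $Q_t(s_t,a_t) - Q^{\pi_t}(s_t,a_t)$ using the Bellman equation and the stage-wise update rule, each term decomposes into (i) a concentration/bonus contribution from the last completed stage of $(s_t,a_t)$, (ii) the discounted clipped pseudo-regret at the successor $s_{t+1}$, and (iii) a martingale-difference term controlled by Azuma--Hoeffding. Summing over $t$, the concentration contribution telescopes into $\tilde O(\sqrt{N_{s,a}\cdot(\text{stage budget})}/(1-\gamma))$ per pair by a standard "square-root of sum" trick across geometrically growing stages, the martingale term is a $\sqrt{T}/(1-\gamma)$-type quantity, and the recursive successor term is absorbed after $1/(1-\gamma)$ unrolling steps while preserving the clipping. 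Balancing the two stage-length parameters — one controlling how often we refresh $Q$ (hence the Hoeffding noise at each refresh, which scales like $(1-\gamma)^{-2}$ per backup) against the other controlling the number of updates — produces the characteristic $(1-\gamma)^{-5.5}$ exponent: a half-power improvement over the naive $(1-\gamma)^{-6}$ Hoeffding Q-learning rate, gained precisely because the clipping at threshold $\epsilon(1-\gamma)$ lets one ignore small surrogates in the recursion.

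The main obstacle will be the third step, specifically propagating the $\clip(\cdot,\epsilon(1-\gamma))$ operator through the Bellman recursion without losing a factor of $(1-\gamma)^{-1}$ per unrolling level. The delicate point is that after one step of expansion the successor surrogate is no longer manifestly above the clip threshold, so one cannot naively re-clip without arguing that the dropped mass was already accounted for. Handling this requires introducing stage-aligned error surrogates $\tilde\xi_t$ that satisfy a contractive one-step inequality $\tilde\xi_t \le \text{bonus}_t + \gamma\,\tilde\xi_{t+1} + \text{noise}_t$ \emph{and} whose clipped versions remain faithful to the original pseudo-regret; choosing the clip threshold and the two stage-length schedules jointly so that this bookkeeping closes is, I expect, the central technical difficulty, while the optimism, concentration, and martingale arguments are standard adaptations of Q-learning analyses to the multi-stage schedule.
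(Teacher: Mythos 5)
Your overall architecture matches the paper's: reduce sample complexity to a clipped pseudo-regret via optimism and an $H$-step Bellman expansion with an averaging argument, then bound the clipped pseudo-regret for a stage-based $Q$-learning algorithm. Your first stage is essentially the paper's Lemma~\ref{lemma_bd_beta} (a martingale change of measure converting expected weighted sums into on-trajectory sums, plus accounting for the $O(SAH\cdot\#\text{updates})$ steps where the policy changes inside the $H$-window), and your optimism-by-induction over stages is Proposition~\ref{pro1}.

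The genuine gap is in your third step. Your decomposition of $Q_t(s_t,a_t)-Q^{\pi_t}(s_t,a_t)$ into (i) bonus, (ii) discounted successor pseudo-regret, (iii) martingale noise omits the term that actually drives the whole analysis: the staleness term $\gamma P_{s,a}(V_{\underline{\rho}_t(s,a)}-V_t)$, which arises because the backup at the end of a type-\uppercase\expandafter{\romannumeral1} stage averages \emph{old} value functions $V_{\check{l}_i}$ rather than $V_t$. The bonus term alone gives $\tilde O(SABH^4\iota/\epsilon)$ and \emph{grows} with $B$; the staleness term is what \emph{decreases} with $B$ (shorter type-\uppercase\expandafter{\romannumeral1} stages make $V_{\underline{\rho}_t}$ closer to $V_t$), and balancing the two at $B=\sqrt H$ is what yields the exponent $5.5$ instead of $6$. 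Bounding the staleness term is not a square-root-of-sums telescoping: it requires (a) a $B$-independent convergence guarantee $\sum_t \mathbb{I}[V_t(s_t)-V^*(s_t)\ge\epsilon_1]\le \tilde O(SAH^5\iota/\epsilon_1^2)$ driven by the type-\uppercase\expandafter{\romannumeral2} stages (the paper's Lemma~\ref{lemma_lvf}, itself a delicate induction over the states that reach the visit cap $N_0$), and (b) a multi-scale peeling over accuracy levels $\epsilon_i=2^i\epsilon/H$ combined with the geometric growth $\check{e}_{j+1}\le \frac{2}{HB}\sum_{i\le j}\check{e}_i$ to convert that rate into the bound $\tilde O(SAH^5\iota/(\epsilon B))$ on the total clipped staleness (Lemma~\ref{lemma_bound_alpha} via Lemma~\ref{martingale_gap}). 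Your plan neither names this term nor supplies either ingredient, so the ``balance of two stage-length parameters'' you invoke has nothing on the favorable side of the trade-off, and the claimed half-power improvement is unsubstantiated. Relatedly, the obstacle you single out (re-clipping through the recursion) is dispatched in the paper by a one-shot averaging argument at threshold $\epsilon/(8H)$ using $\sum_{s,a}w_t(s,a)\le H$, not by a contractive surrogate inequality; the real technical weight sits in the staleness bound.
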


We highlight that the sample complexity bound in Theorem~\ref{thm1} holds for every possible $\epsilon\in (0,\frac{1}{1-\gamma}]$. Although the dependency on $\gamma$ becomes $(1-\gamma)^{-5.5}$,
\UCBS still beats all known model-free and model-based algorithms.
The proof of Theorem~\ref{thm1} does not rely on the variance reduction technique based on reference-advantage decomposition \cite{zhang2020almost}, but is sufficient to illustrate both of our main technical contributions.

\subsection{Additional Related Works}\label{sec:additional-related-work}

The \textsc{PAC-RL} problem has also been extensively studied under the setting of finite-horizon episodic MDPs~\cite{dann2015sample,dann2017unifying, dann2019policy}, where the sample complexity is defined as the number of episodes in which the policy is not $\epsilon$-optimal. Assuming $H$ is the length of an episode, the optimal sample complexity bound is $\tilde{O}(\frac{SAH^2 \ln(1/p)}{\epsilon^2})$, proved by \cite{dann2019policy}. 
Note that the sample complexity bounds for finite-horizon episodic MDP do not imply sample complexity bounds for infinite-horizon discounted MDP because one $\epsilon$-optimal episode may contain  non-$\epsilon$-optimal steps.
Also we note that existing algorithms for the finite-horizon case are model-based. It is still an open problem whether model-free algorithm can achieve near-optimal sample complexity bound for the finite-horizon case.

Much effort has also been made to study the PAC learning problem for discounted infinite-horizon MDPs, with the access to a generative model (a.k.a., a simulator). In this problem, the agent can query the simulator to draw a sample $s' \sim P(\cdot |s, a)$ for any state-action pair $(s, a)$, and the goal is to output an $\epsilon$-optimal policy (with probability $(1-p)$) at the end of the algorithm. This problem has been studied in \cite{even2003learning, ghavamzadeh2011speedy,azar2012sample, sidford2018variance, sidford2018near}, and \cite{sidford2018near} achieves the almost tight sample complexity $\tilde{O}(\frac{SA \ln (1/p)}{(1-\gamma)^3})$.



\section{Preliminaries} \label{sec:prelim}
A discounted Markov Decision Process is given by the five-tuple $M = \langle   \mathcal{S},\mathcal{A},P,r,\gamma  \rangle$, where $\mathcal{S}\times\mathcal{A}$ is the state-action space, $P$ is the transition probability matrix, $r$ is the deterministic reward function\footnote{It is easy to generalize our results to stochastic reward functions.} and $\gamma\in (0,1)$ is the discount factor. The RL agent interacts with the environment for infinite number of times. At the $t$-th time step, the agent learns a policy $\pi_t$ based on the samples collected before time $t$, observes $s_{t}$, executes $a_{t} = \pi_t(s_t)$, receives the reward $r(s_{t},a_{t})$, and then transits to $s_{t+1}$ according to $P(\cdot|s_{t},a_{t})$.  

Given a deterministic\footnote{In this work, we mainly consider deterministic policies since the optimal value function can be achieved by a deterministic policy.} stationary policy $\pi :\mathcal{S}\to \mathcal{A}$, the  value function  and $Q$ function  are defined as
	\begin{align}
	&V^{\pi}(s) = \mathbb{E}\left[ \sum_{t=1}^{\infty} \gamma^{t-1}r(s_{t},\pi(s_{t})  ) \Big|s_{1}=s,s_{t+1}\sim P(\cdot|s_{t},\pi(s_{t}))     \right] \nonumber
	\\& Q^{\pi}(s,a)  = r(s,a)+\gamma P(\cdot|s,a)^{\top}V^{\pi}=r(s,a)+P_{s,a}V^{\pi},\nonumber
	\end{align}
	where we use $xy$ to denote $x^{\top}y$ for $x$ and $y$ of the same dimension and use $P_{s,a}$ to denote $P(\cdot|s,a)$ for simplicity. The optimal value function is given by $V^*(s) = \sup_{\pi}V^{\pi}(s) $ and the optimal $Q$-function is defined to be $Q^*(s,a)=r(s,a)+ P_{s,a}V^*$ for any $(s,a)\in \mathcal{S}\times\mathcal{A}$. We present below the formal definitions for  sample complexity and \textsc{PAC-RL} .

\begin{definition}[$\epsilon$-sample complexity] Given an algorithm $\mathcal{G}$ and $\epsilon \in (0,\frac{1}{1-\gamma}]$, the $\epsilon$-sample complexity for $\mathcal{G}$ is $\sum_{t\geq 1}\mathbb{I}\left[    V^*(s_t)-V^{\pi_{t}}(s_t)>\epsilon   \right]$.
\end{definition}

\begin{definition}[$(\epsilon,p)$-\textsc{PAC-RL}]
An algorithm $\mathcal{G}$ is said to be  $(\epsilon,p)$-\textbf{PAC-RL} (Probably Approximately Correct in RL) if for any $\epsilon\in (0,\frac{1}{1-\gamma}],p>0$, with probability $1-p$, the sample complexity of $\mathcal{G}$ is bounded by some polynomial in $(S,A,\frac{1}{\epsilon},\frac{1}{1-\gamma},\ln(\frac{1}{p}))$.
\end{definition}

When $\epsilon$ and $p$ are clear in the context, we simply write $(\epsilon,p)$-\textsc{PAC-RL} and $\epsilon$-sample complexity as \textsc{PAC-RL} and sample complexity respsectively.
	The goal is to propose an \textsc{PAC-RL} algorithm to minimize the sample complexity.

\section{Technical Overview} \label{sec:tech-overview}

Both of our algorithms are variants of $Q$-learning, where the value function $V$ and the $Q$-function  are maintained. For each time $t$, we use $V_t$ and $Q_t$ to denote the corresponding functions at the beginning of the time step. The learned policy $\pi_t$ will always be the greedy policy based on $Q_t$, i.e., $\pi_t(s) = \arg\max_a Q_{t}(s, a)$ for all $s \in \mathcal{S}$.

\paragraph{Reducing Sample Complexity to  Bounding the Clipped Pseudo-Regret.}
For any time $t$, define the \emph{pseudo-regret} vector $\phi_{t}$ to be the vector such that $\phi_{t}(s) =V_{t}(s) -( r(s,\pi_{t}(s))+ \gamma P_{s,\pi_{t}(s)}V_{t} )$. We now outline our first technical idea that the sample complexity can be bounded by the total clipped pseudo-regret, approximately in the form of \eqref{eq_sec3_4} (up to a $\epsilon^{-1}$ factor and an additive error term).

Note that $\phi_t$ can also be viewed as the Bellman error vector of the value function $V_t$ and the policy $\pi_t$. Let $P_{\pi_{t}}$ be the matrix such that $P_{\pi_{t}}(s) = P_{s,\pi_{t}(s)}$ for any $s\in \mathcal{S}$. By Bellman equation we have that
\begin{align}
V_{t}-V^{\pi_{t}}& = \gamma P_{\pi_{t}}(V_{t}-V^{\pi_{t}})   +\phi_{t}  = (\gamma P_{\pi})^2(V_{t}-V^{\pi_{t}})  + \gamma P_{\pi_{t}}\phi_{t}  +\phi_{t}   = \dots  =  \sum_{i=0}^{\infty}(\gamma P_{\pi_{t}})^i \phi_{t}  .  \nonumber
\end{align}

Therefore, if $V_{t}(s_{t})-V^{\pi_{t}}(s_{t})>\epsilon$, then by an averaging argument we have that for any $M > 1$, $\mathbf{1}_{s_{t}}^{\top} \sum_{i=0}^{\infty}(\gamma P_{\pi_{t}})^i  \mathrm{clip}( \phi_{t},\frac{\epsilon(1-\gamma)}{M}   )>\frac{(M-1)\epsilon}{M}$, where $\mathbf{1}_{s_{t}}$ is the unit vector with the only non-zero entry at $s_t$, and we define $\mathrm{clip}(x,y) = x\mathbb{I}\left[ x\geq  y\right]$ for $x,y\in \mathbb{R}$ and $\mathrm{clip}(x,y) = [\mathrm{clip}(x_1,y),\dots,\mathrm{clip}(x_n,y) ]^{\top}$ for $x = [x_1,\dots,x_n]^{\top}\in \mathbb{R}^n$. 
For any $H = \Theta(\ln (((1-\gamma)\epsilon)^{-1})/(1-\gamma))$,  it then follows that
\begin{align}
\mathbb{I}\left[  V_{t}(s_{t})-V^{\pi_{t}}(s_{t})>\epsilon  \right]\epsilon \leq O \left(\mathbf{1}_{s_{t}}^\top \sum_{i=0}^{H-1}(\gamma P_{\pi_{t}}  )^{i}\mathrm{clip}(\phi_{t},\epsilon(1-\gamma)/M)\right).  \label{eq_sec_3_3}
\end{align}
We now sum up \eqref{eq_sec_3_3} over all time steps $t$. If we can carefully design the algorithm so that $\pi_t$, $V_t$ (and therefore $\phi_t$) do not change frequently, we have $\pi_{t} = \pi_{t+i}$ and $\phi_t = \phi_{t+i}$ for small enough $i$ and most $t$, and therefore we can upper bound  $\sum_{t\geq 1}\mathbb{I}\left[  V_{t}(s_{t})-V^{\pi_{t}}(s_{t})>\epsilon  \right]\epsilon$ by the order of
\begin{align}
\sum_{t\geq 1}\mathbf{1}_{s_{t}}^\top\sum_{i=0}^{H-1}(\gamma P_{\pi_{t+i}}  )^{i}\mathrm{clip}(\phi_{t+i},\epsilon(1-\gamma)/M  ) &\leq \sum_{t\geq 1}\mathbf{1}_{s_{t}}^\top\sum_{i=0}^{H-1}( P_{\pi_{t+i}}  )^{i}\mathrm{clip}(\phi_{t+i},\epsilon(1-\gamma)/M  ) \nonumber \\
& \approx O(H) \cdot  \sum_{t\geq 1} \mathrm{clip}(\phi_{t}(s_t),\epsilon(1-\gamma)/M  ), \label{eq_sec3_4}
\end{align}
where the approximation \eqref{eq_sec3_4} also uses the assumption that $\pi_{t} = \pi_{t+i}$ and $\phi_t = \phi_{t+i}$ hold for most $t$ and $i$. In Lemma~\ref{lemma_bd_beta}, we formalize this intuition and show that if we set $M = 8H(1-\gamma)$, the sample complexity $\sum_{t\geq 1}\mathbb{I}\left[  V_{t}(s_{t})-V^{\pi_{t}}(s_{t})>\epsilon  \right]$ can be upper bounded by $O(H/\epsilon) \cdot  \sum_{t\geq 1} \mathrm{clip}(\phi_{t}(s_t),\epsilon(1-\gamma)/M  )$ (plus an additive error), and therefore we only need to upper bound the total clipped pseudo-regret.



\paragraph{The Multi-Stage Update Rule.} We propose a multi-stage update rule for the value and $Q$-function. For each state-action pair $(s, a)$,  the samples are partitioned into consecutive stages. When a stage is filled, we update $Q(s, a)$ and $V(s)$ according to the samples in the stage via the usual value iteration method. The most interesting aspect about our method is that two types of stages, namely the \emph{type-\uppercase\expandafter{\romannumeral1} and type-\uppercase\expandafter{\romannumeral2} stages}, are introduced. More concretely, the length of the $j$-th type-\uppercase\expandafter{\romannumeral1} stage is roughly $\check{e}_j \approx H (1 + 1/H)^{j/B}$ and the length of the $j$-th type-\uppercase\expandafter{\romannumeral2} stage is roughly $\bar{e}_j \approx H (1 + 1/H)^{j}$, where the more precise definition and detailed description of how these stages are incorporated in the algorithm are provided in Section~\ref{sec:UCBS-alg} and $B \geq 1$ will be set later. (Also note that throughout the paper we will use ` $\check{~~}$' to denote the quantities related to the type-\uppercase\expandafter{\romannumeral1} stage, and use `$\bar{~~}$' to denote the quantities related to the type-\uppercase\expandafter{\romannumeral2} stage.)

We note that the recent work \cite{zhang2020almost} designed a (single-)stage-based model-free RL algorithm for regret minimization. Our type-\uppercase\expandafter{\romannumeral2} stage is similar to their work, and its goal is to make sure that the value function is learned at a decent rate. In contrast, our type-\uppercase\expandafter{\romannumeral1} stage is new: it is shorter than the type-\uppercase\expandafter{\romannumeral2} stage, so that triggers more frequent updates and helps to reduce the difference between the value functions learned in neighboring type-\uppercase\expandafter{\romannumeral1} stages. The two types of stages work together to reduce the clipped pseudo-regret, and therefore achieve low sample complexity.

To better explain the intuition and motivate the type-\uppercase\expandafter{\romannumeral1} stage, let us consider a fixed state-action pair $(s, a)$. Suppose at time step $(t-1)$, $(s, a)$ is visited and the visit number reaches the end of a type-\uppercase\expandafter{\romannumeral1} stage, then the following update is triggered:
 \begin{align}
 Q_{t}(s,a) \leftarrow \min \{ r(s,a)+ \check{b}+\frac{\gamma}{\check{n}}\sum_{i=1}^{\check{n}}V_{\check{l}_i}(s_{\check{l}_i}+1),  \ Q_{t-1}(s,a) \} ,  \nonumber 
 \end{align}
 where $\check{n}$ is the number of samples in this stage, $\check{l}_i$ is time of the $i$-th sample in the stage, and $\check{b}$ denotes the exploration bonus. Thanks to the update rule, $V_{t}$ and $Q_t$ are non-increasing in $t$. By concentration inequalities and the proper design of $\check{b}$, we get 
\begin{align}
Q_{t}(s,a)  \leq   r(s,a)+ 2\check{b} +   P_{s,a}(\frac{\gamma}{\check{n}}\sum_{i=1}^{\check{n}}V_{\check{l}_i}) &\leq  r(s,a)+ 2\check{b}+\gamma P_{s,a}V_{t}+ \gamma P_{s,a}(\frac{1}{\check{n}}\sum_{i=1}^{\check{n}}V_{\check{l}_i}-V_{t}) \label{eq_sec3_1a} 
\\ & \leq r(s,a)+ 2\check{b} +\gamma P_{s,a}V_{t} +\gamma P_{s,a}(V_{\underline{t}}-V_{\overline{t}}),
\end{align}
where $\underline{t} = \min_{i}\check{l}_i$ is the start time of the stage and  $\overline{t}$ is the start time of the next stage. By the definition of  $\phi_{t}(s)$ and an averaging argument, we have that
\begin{align}
\mathrm{clip}(\phi_{t}(s), \epsilon(1-&\gamma)/M) \leq  \mathrm{clip}(2\check{b} + \gamma P_{s,a}(V_{\underline{t}}-V_{\overline{t}} ), \epsilon(1-\gamma)/M) \nonumber\\
&\leq  2\mathrm{clip}(2\check{b}, \epsilon(1-\gamma)/(2M)) + O(\gamma) \cdot  P_{s,a} \clip(V_{\underline{t}}-V_{\overline{t}}, \epsilon(1-\gamma)/(2M)).  \label{eq_sec3_2}
\end{align}

We now discuss how to deal with the two terms in \eqref{eq_sec3_2}, and how the parameter $B$ affects the bounds.

\noindent \underline{\it Bounding the second term of \eqref{eq_sec3_2}.} We first focus on the second term ($P_{s,a} \clip(V_{\underline{t}}-V_{\overline{t}}, \epsilon(1-\gamma)/(2M))$) in \eqref{eq_sec3_2}. For each $j$, let $t_j = t_j(s,a)$ be the start time of the $j$-th stage of $(s,a)$. The total contribution of the second term in \eqref{eq_sec3_2} is bounded by the order of
\begin{align}
\sum_{s, a} \sum_j \check{e}_j \cdot P_{s,a} \clip((V_{t_{j-1}(s,a)}-V_{{t_{j+1}(s,a)}} ), \epsilon(1-\gamma)/(2M)). \label{eq_sec3_2a}
\end{align}
Thanks to the updates triggered by the type-\uppercase\expandafter{\romannumeral2} stages, $V_t$ converges to $V^*$ at a rate that is independent of $B$. Increasing $B$ will shorten the length of the type-\uppercase\expandafter{\romannumeral1} stages, making $V_{t_{j-1}(s,a)}$ closer to $V_{{t_{j+1}(s,a)}}$, and reduce the magnitude of \eqref{eq_sec3_2a}. In Lemma~\ref{lemma_bound_alpha}, we formalize this intuition and show that when $M = 8H(1-\gamma)$, \eqref{eq_sec3_2a} can be upper bounded by $\tilde{O}(SAH^5\ln (1/p)/(\epsilon B))$. Therefore, choosing a large enough $B$ will eliminate the $H$ factors in the numerator. 

\noindent \underline{\it Bounding the first term of \eqref{eq_sec3_2}.} On the other hand, however, a larger $B$ means smaller number of samples in the type-\uppercase\expandafter{\romannumeral1} stages,  leads to a bigger estimation variance, and therefore forces us to choose a greater exploration bonus $\check{b}$. More precisely, using the design of $\check{b}$ defined in Algorithm~\ref{alg1}, the total contribution of the first term in \eqref{eq_sec3_2} is $\tilde{O}(SAB\ln (1/p)/(\epsilon(1-\gamma)^4))$. We have to choose $B = \Theta(\sqrt{H})$ to achieve the optimal balance between the two terms in \eqref{eq_sec3_2}. Together with the $H$ factor in \eqref{eq_sec3_4}, this leads to the $(1-\gamma)^{-5.5}$ factor in Theorem~\ref{thm1}. 

To utilize the full power of our multi-stage update rule, we would like to set $B = \Theta(H^3)$, so that \eqref{eq_sec3_2a} can be upper bounded by $\tilde{O}(SAH^2 \ln (1/p)/\epsilon)$ (plus lower order terms). However, the first term in \eqref{eq_sec3_2} becomes much bigger. In the next subsection, we discuss how to deal with this problem via the variance reduction method, which leads to the asymptotically near-optimal bound in Theorem~\ref{thm2}.

\paragraph{Variance Reduction via Reference-Advantage Decomposition.} As discussed above, when $B$ is set large, we suffer bigger estimation variance, as fewer samples are allowed in the type-\uppercase\expandafter{\romannumeral1} stages. In model-free regret minimization tasks, similar problem arises where the algorithm (e.g., \cite{jin2018q}) can only use the recent tiny fraction of the samples and incurs sub-optimal dependency on the episode length. Recent work~\cite{zhang2020almost} resolves this problem via the \emph{reference-advantage decomposition} technique. 

The high-level idea is that, assuming we have a $\delta$-accurate estimation of $V^*$, namely the \emph{reference value function} $V^{\mathrm{ref}}$, such that $\|V^{\mathrm{ref}} - V^*\|_{\infty} \leq \delta$, we only need to use the samples to estimate the difference $V^{\mathrm{ref}} - V^*$, which is called the \emph{advantage}. Therefore, the estimation error (incurred in places such as \eqref{eq_sec3_1a}) will be much smaller when $\delta$ is small. Choosing $\delta = 1/\sqrt{B}$, and together with the Bernstein-type exploration bonus (see, e.g., \cite{azar2017minimax,jin2018q}), we are able to bound the total contribution of the first term in \eqref{eq_sec3_2} \footnote{More precisely, we refer to the total contribution related to the exploration bonus, which is actually in a different form from the first term in \eqref{eq_sec3_2}. This is because $\check{b}$ has to be re-designed using the Bernstein-type exploration bonus technique and evolves to a more complex expression. Please refer to Appendix~\ref{app:proof-thm-2} for more explanation.}  by $\tilde{O}(SA/(\epsilon(1-\gamma)^2)$, which (together with the $H$ factor in \eqref{eq_sec3_4}) aligns with the $(1-\gamma)^{-3}$ factor in the bound of Theorem~\ref{thm2}. The discussion till now is based on the access of the reference value function $V^{\mathrm{ref}}$. In reality, however, we need to learn the reference value function on the fly. This will incur an additive warm-up cost that polynomially depends on $1/\delta$. However, since $\delta$ is independent of $\epsilon$, the extra cost is only a lower-order term. This technique is only used in the proof of Theorem~\ref{thm2}, which is deferred to Appendix~\ref{app:proof-thm-2} due to space constraints.





\section{The \UCBS Algorithm}\label{sec:UCBS-alg}
In this section, we introduce the \UCBS algorithm. The algorithm takes $\mathcal{S},\mathcal{A},\gamma,\epsilon$, sets $H =\max\{ \frac{\ln({8}/{((1-\gamma)\epsilon)})}{\ln({1}/{\gamma})}  ,\frac{1}{1-\gamma} \}$ and $B = \sqrt{H}$. Throughout the paper, we set $\iota = \ln(2/p)$. The algorithm is described in Algorithm~\ref{alg1}, where a few related notations are explained as follows.


 \paragraph{The precise definition of the stages.} Let $ d_{1}=H$, $d_{j+1}=\lfloor(1+\frac{1}{H}) d_{j}\rfloor$ for all $j\geq 1$.	
  The sizes of the $j$-th type-\uppercase\expandafter{\romannumeral1} and type-\uppercase\expandafter{\romannumeral2} stage are given by 
	 $\check{e}_j = d_{\left\lceil  j/B \right \rceil} $ and  $\bar{e}_j = d_j$ respectively. 
 Let $N_0 = c_1 \cdot \frac{S^3AH^5 \ln({4H^2S}/{\epsilon})\iota}{\epsilon^2}$ for some large enough constant $c_1$. We stop updating $Q(s,a)$ if the number of visits to $(s,a)$ is greater than $N_0$, since the value functions will be sufficiently learned by that time. Therefore, the time steps when an update is triggered by the  type-\uppercase\expandafter{\romannumeral1} and type-\uppercase\expandafter{\romannumeral2} stages are respectively given by 
$\check{\mathcal{L}} =\{\sum_{i=1}^{j} \check{e}_{i}|1\leq j\leq \check{J} \}  \text{~and~}  \bar{\mathcal{L}} = \{ \sum_{i=1}^j \bar{e}_{i}| 1\leq j\leq \bar{J} \}$,
	 where $\check{J} = \max\{j |   \sum_{i=1}^{j-1}\check{e}_{i}\leq N_0  \}$ and   $\bar{J} = \max\{j |   \sum_{i=1}^{j-1}\bar{e}_{i}\leq  N_0  \}$ . Without loss of generality, we assume that $\sum_{i=1}^{\check{J}}\check{e}_i = N_0 $.
	 
\paragraph{The statistics.} We maintain the following statistics during the algorithm: for each $(s, a)$, we use $N(s,a)$, $\check{N}(s,a)$, and $\bar{N}(s,a)$ to respectively denote the total visit number, the visit number in the current type-\uppercase\expandafter{\romannumeral1} stage and 
	 the visit number in the current type-\uppercase\expandafter{\romannumeral2} stage of $(s,a)$.
We also maintain $\check{\mu}(s,a)$ and $\bar{\mu}(s,a)$, which are respectively the accumulators for state values $V(s')$ (where $s'$ is the next state observed after $(s,a)$) during the current type-\uppercase\expandafter{\romannumeral1} and type-\uppercase\expandafter{\romannumeral2} stages. 




\begin{algorithm}[tb]
	\caption{\UCBS}
	\begin{algorithmic}\label{alg1}
		\STATE{\textbf{Initialize:}	$\forall (s,a)\in \mathcal{S}\times \mathcal{A}$: $Q(s,a)\leftarrow \frac{1}{1-\gamma}$, $N(s,a), \check{N}(s,a), \bar{N}(s,a), \check{\mu}(s,a), \bar{\mu}(s,a)\leftarrow 0$;}

		\FOR{$t=1,2,3, \dots$}
		\STATE{Observe $s_{t}$;}
		\STATE{Take action $ a_{t}= \arg\max_{a}Q(s_{t},a)$ and observe $s_{t+1}$;}
		\STATE{\verb|\\| \emph{Maintain the statistics}}
		\STATE{ $(s,a,s')\leftarrow (s_{t},a_{t},s_{t+1})$;}
		\STATE{ $n: = N(s,a)\leftarrow N(s,a)+1$;}
		\STATE{ $\check{n}:=\check{N}(s,a)\leftarrow \check{N}(s,a)+1,\quad \quad \check{\mu}:= \check{\mu}(s,a)\leftarrow \check{\mu}(s,a)+ V(s')$;}
		\STATE{$\bar{n}:=  \bar{N}(s,a)\leftarrow \bar{N}(s,a)+1,\quad \quad  \bar{\mu} :=\bar{\mu}(s,a) \leftarrow \bar{\mu}(s,a)+V(s')$;}

		\STATE{ \verb|\\| \emph{Update triggered by a type-\uppercase\expandafter{\romannumeral1} stage}}
		\IF{$n\in \check{\mathcal{L}} $}
				  \vspace{-3ex}
		\STATE{\begin{align}
			 &   \check{b}\leftarrow   \min\{ 2\sqrt{ H^2\iota /\check{n}   }  ,1/(1-\gamma) \}; & \qquad \qquad \qquad \qquad \qquad \qquad \qquad \nonumber \\ 
			 & \displaystyle{Q(s,a) \leftarrow \min\{  r(s,a)+\gamma \big(\check{\mu}/\check{n} \big)+\check{b} , Q(s,a) \};}  \label{equpdate1}\\
			 &\check{N}(s,a)  \leftarrow0; \quad  \check{\mu}(s,a )\leftarrow 0; \quad V(s) \leftarrow \max_{a}Q(s,a); \nonumber 
			\end{align}}
		\vspace{-3ex}
		
		\ENDIF
		\STATE{ \verb|\\| \emph{Update triggered by a type-\uppercase\expandafter{\romannumeral2} stage}}
		\IF{$n\in \bar{\mathcal{L}}$}
		  \vspace{-3ex}
		\STATE{\begin{align}
			 &  \bar{b} \leftarrow \min \{ 2\sqrt{ H^2\iota /\bar{n}}  ,1/(1-\gamma)  \}; & \qquad \qquad \qquad \qquad \qquad \qquad \qquad \nonumber \\ 
			 & \displaystyle{Q(s,a) \leftarrow \min\{  r(s,a)+\gamma \big(\bar{\mu}/\bar{n} \big)+\bar{b} , Q(s,a) \} ;}  \label{equpdate2}\\
			&\bar{N}(s,a)  \leftarrow0 ;\quad \bar{\mu}(s,a )\leftarrow 0;  \quad V(s) \leftarrow \max_{a}Q(s,a);\nonumber 
		\end{align}}
		\vspace{-3ex}
		\ENDIF
		\ENDFOR
	\end{algorithmic}
\end{algorithm}

\section{Analysis of Sample Complexity}\label{analysis}
In this section, we prove Theorem~\ref{thm1} for \UCBS. We start with a few notations: we use $N_{t}(s,a)$, $\check{N}_{t}(s,a)$,$\bar{N}_{t}(s,a)$, $Q_{t}(s,a)$, $V_{t}(s)$ to denote respectively the values of $N(s,a)$, $\check{N}(s,a)$, $\bar{N}(s,a)$, $Q(s,a)$, $V(s)$ before the $t$-th time step.  Let $\check{n}_t(s,a)$, $\check{\mu}_{t}(s,a)$ and $\check{b}^t(s,a)$ be the values of $\check{n}(s,a)$, $\check{\mu}(s,a)$ and $\check{b}(s,a)$ (respectively) in the latest type-\uppercase\expandafter{\romannumeral1}  update of $Q(s,a)$ before the $t$-th time step. In other words, 
$\check{n}_t(s,a)$ is the length of the type-\uppercase\expandafter{\romannumeral1} stage immediately before the current type-\uppercase\expandafter{\romannumeral1} stage with respect to $(s,a)$; $\check{b}_t(s,a) = \min\{ 2\sqrt{{H^2\iota}/{\check{n}_t(s,a)}} ,{1}/{(1-\gamma)} \}$; and 
\begin{align}
\check{\mu}_{t}(s,a) =\sum_{i=1}^{\check{n}_t(s,a)}  V_{\check{l}_{t,i}(s,a) }(s_{\check{l}_{t,i}(s.a)+1 }),
\end{align}
where $\check{l}_{t,i}(s,a)$ is the time step of the $i$-th visit among the $\check{n}_t(s,a)$ visits mentioned above. When $t$ belongs to the first type-\uppercase\expandafter{\romannumeral1} stage of $(s, a)$, we define $\check{n}_t(s,a) = 0$, $\check{\mu}_t(s,a) = 0$, and $\check{b}_t(s,a) = {1}/{(1-\gamma)}$.

Given $(s,a)$ and a time step $t$ such that $(s_t, a_t) = (s, a)$, we use $j_t(s,a)$ to denote the index of the type-\uppercase\expandafter{\romannumeral1} which (the beginning of) the $t$-th time step belongs to with respect to $(s,a)$. For $1\leq j \leq \check{J}$, we use $\rho(j,s,a)$ to denote the start time of the $j$-th type-\uppercase\expandafter{\romannumeral1} with respect to $(s,a)$. Besides, we define $\rho(\check{J}+1,s,a)$ to be the  time $t$ such that $N_t(s,a) = N_0$.
We also define $\underline{\rho}_{t}(s,a):= \rho(j_t(s,a)-1,s,a) $ if $j_t(s,a)\geq 2$ and $0$ otherwise, and $\overline{\rho}_{t}(s,a): = \rho(j_t(s,a)+1,s,a)$.

The following statement shows that $\{Q_t\}$ is a sequence of non-increasing optimistic estimates of $Q^*$.
\begin{proposition}\label{pro1}
	Conditioned on the event $E_1$ specified in \eqref{eq_def_e1} (which is explicitly described in Appendix~\ref{sec_pfpro1}, and happens with probability at least $(1-SAH(\check{J}+\bar{J})p)$), it holds that $Q_{t}(s,a)\geq Q^*(s,a)$ and $Q_{t+1}(s,a)\leq Q_{t}(s,a)$ for all $t\geq 1$ and $(s,a)$.
\end{proposition}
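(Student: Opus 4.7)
The plan is a single induction on $t$ that simultaneously delivers the monotonicity $Q_{t+1}\le Q_t$ and the optimism $Q_t\ge Q^*$. Monotonicity is essentially a bookkeeping observation: the assignments in \eqref{equpdate1} and \eqref{equpdate2} both take a $\min$ against the current $Q$-value, and $Q(s,a)$ is touched only when either $n\in\check{\mathcal L}$ or $n\in\bar{\mathcal L}$, so no update can raise $Q$.

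For optimism, the base case is immediate since $Q_1(s,a)=1/(1-\gamma)\ge Q^*(s,a)$. For the inductive step, fix $t$ and assume $Q_s\ge Q^*$ pointwise for all $s\le t$; then $V_s(\cdot)=\max_a Q_s(\cdot,a)\ge V^*(\cdot)$ for every $s\le t$ as well. If no update fires at time $t$, the claim passes through trivially. Otherwise consider a type-\uppercase\expandafter{\romannumeral1} update at $(s,a)=(s_t,a_t)$ (the type-\uppercase\expandafter{\romannumeral2} case is handled identically, with $\bar\mu,\bar n,\bar b$ replacing $\check\mu,\check n,\check b$). The new candidate is $r(s,a)+\gamma(\check\mu/\check n)+\check b$; applying the inductive hypothesis to each $V_{\check l_i}$ contributing to $\check\mu$ gives $\check\mu/\check n\ge\tfrac{1}{\check n}\sum_{i=1}^{\check n}V^*(s_{\check l_i+1})$, reducing matters to a comparison of this empirical mean with $P_{s,a}V^*$.

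This comparison is exactly what the event $E_1$ should supply. Concretely, I would define $E_1$ as the intersection, over all $(s,a)\in\mathcal S\times\mathcal A$, over all type-\uppercase\expandafter{\romannumeral1} stage indices $j\le\check J$ and type-\uppercase\expandafter{\romannumeral2} stage indices $j\le\bar J$, of Hoeffding/Azuma events of the form $\bigl|\tfrac{1}{\check n}\sum_{i=1}^{\check n}V^*(s_{\check l_i+1})-P_{s,a}V^*\bigr|\le H\sqrt{\iota/\check n}$ (and its type-\uppercase\expandafter{\romannumeral2} counterpart). The martingale difference $V^*(s_{\check l_i+1})-P_{s,a}V^*$ has range at most $\|V^*\|_\infty\le 1/(1-\gamma)\le H$, so each such event holds with probability at least $1-p$. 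A union bound (with an extra factor of $H$ reserved for auxiliary concentrations needed elsewhere in the sample-complexity analysis) gives $\Pr[E_1]\ge 1-SAH(\check J+\bar J)p$. With concentration in force, $r(s,a)+\gamma\check\mu/\check n+\check b\ge r(s,a)+\gamma P_{s,a}V^*-\gamma H\sqrt{\iota/\check n}+\check b\ge Q^*(s,a)$, since $\gamma H\sqrt{\iota/\check n}\le \check b$; the other branch of the $\min$ already exceeds $Q^*(s,a)$ by the inductive hypothesis, so $Q_{t+1}(s,a)\ge Q^*(s,a)$.

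The main technical nuisance is verifying that the Hoeffding deviation $H\sqrt{\iota/\check n}$ is absorbed by $\check b=\min\{2\sqrt{H^2\iota/\check n},1/(1-\gamma)\}$. Because $H\ge 1/(1-\gamma)$, the generic choice $2\sqrt{H^2\iota/\check n}$ already dominates $\gamma H\sqrt{\iota/\check n}$ for any $\check n\ge 1$, and the $\min$ with $1/(1-\gamma)$ (which upper-bounds the trivial range of $V^*$) gracefully covers the degenerate small-$\check n$ regime in which the variance bound is loose. Everything else amounts to bookkeeping over the deterministic stage-length schedule and the union bound.
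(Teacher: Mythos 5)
Your proposal is correct and follows essentially the same route as the paper: monotonicity is the deterministic consequence of the $\min$ in the update rules, and optimism is proved by induction on $t$, using the inductive hypothesis to replace each $V_{\check l_i}$ by $V^*$ and then invoking an Azuma/Hoeffding event (one per state-action pair per stage, union-bounded into $E_1$) showing that the empirical average of $V^*$ at the observed next states plus the bonus $\check b$ dominates $P_{s,a}V^*$. The only cosmetic difference is that the paper's $E_1$ additionally bundles in concentration of $\frac{1}{\check n}\sum_i\bigl(V_{\check l_i}(s_{\check l_i+1})-P_{s,a}V_{\check l_i}\bigr)$ for later use, which you correctly anticipated by reserving slack in the union bound.
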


The proofs of Proposition~\ref{pro1} and all the lemmas in the remaining part of this section can be found in Appendix~\ref{app:proof-ucbs}. Throughout the rest of this section, the analysis will be done assuming the successful event $E_1$.

\subsection{Using Clipped Pseudo-Regret to Bound Sample Complexity}\label{sec5.1}

By the update rule \eqref{equpdate1},  for any $t\geq 1$ and $s$, letting $a = \pi_{t}(s)$, we have that
\begin{align}
V_{t}(s)-V^{\pi_{t}}(s) &\leq \check{b}_{t}(s,a) +\frac{\gamma}{\check{n}_{t}(s,a)} \sum_{u=1}^{\check{n}_{t}(s,a)   } V_{ \check{l}_{t,u}(s,a) }(s_{ \check{l}_{t,u}(s,a) +1   }) -\gamma P_{s,a}V^{\pi_{t}} \nonumber
\\ & \leq 2\check{b}_{t}(s,a)  +\gamma P_{s,a}\left( \frac{1}{\check{n}_{t} (s,a )}   \sum_{u=1}^{ \check{n}_{t} (s,a )     } V_{ \check{l}_{t,u}(s,a) } -V^{\pi_{t}}  \right)   \label{eq_pat_ex0}
\\ & \leq 2\check{b}_{t}(s,a)  +\gamma P_{s,a}(V_{\underline{\rho}_{t}(s,a) }-V^{\pi_{t}}) \label{eq_pat_ex0.5} 
\\ & =2\check{b}_{t}(s,a) +\gamma P_{s,a}(V_{\underline{\rho}_{t}(s,a) }-V_{t})+\gamma P_{s,a}(V_{t }-V^{\pi_{t}}).\label{eq_pat_ex0.51} 
\end{align} 
where Inequality \eqref{eq_pat_ex0} is due to the concentration inequality, which is part of the successful event $E_1$ defined in \eqref{eq_def_e1}, and Inequality \eqref{eq_pat_ex0.5} holds because $\underline{\rho}_{t}(s_t,a_t)\leq \check{l}^{t}_{u}$ for any $1\leq u\leq \check{n}^{t}$ and the fact $V_{t}$ is non-increasing in $t$ (Proposition~\ref{pro1}).

On the other hand, we also have 
\begin{align}
    V_t(s) - V^{\pi_t}(s) &  = Q_t(s,a) - Q^*(s,a) + Q^{*}(s,a) - Q^{\pi_t}(s,a) \nonumber
    \\ & =  Q_t(s,a) - Q^*(s,a) + \gamma P_{s,a}(V^* - V^{\pi_t})\nonumber
    \\ & \leq Q_t(s,a)-Q^*(s,a) + \gamma P_{s,a}(V_t - V^{\pi_t}).\label{eq_pat_ex0.6}
\end{align}
Combining \eqref{eq_pat_ex0.51} and \eqref{eq_pat_ex0.6}, we have that
\begin{align}
     V_t(s) - V^{\pi_t}(s) \leq \min \left\{ 2\check{b}_{t}(s,a) +\gamma P_{s,a}(V_{\underline{\rho}_{t}(s,a) }-V_{t}),Q_t(s,a)-Q^*(s,a)  \right\} +\gamma P_{s,a}(V_t - V^{\pi_t}) .\label{eq_pat_ex0.61}
\end{align}
Iterating \eqref{eq_pat_ex0.61} for $H$ times, we obtain that
\begin{align}
 &V^*(s_{t})-V^{\pi_{t}}(s_{t})  \nonumber
 \\& \leq \sum_{s,a}w_{t}(s,a) \left( \min\left\{2\check{b}_{t}(s,a) + \gamma P_{s,a}(V_{\underline{\rho}_{t}(s,a) } -V_{t}) , Q_t(s,a)-Q^*(s,a) \right\} \right)   +\frac{\epsilon}{8} \label{eq_pat_ex3.5}
\\& \leq  \sum_{s,a}w_{t}(s,a) \left(\min\left\{ 2\mathrm{clip}(\check{b}_{t}(s,a), \frac{\epsilon}{8H})  + \gamma P_{s,a} \mathrm{clip}( V_{\underline{\rho}_t(s,a)} -V_{t},\frac{\epsilon}{8H} ) , \right.\right. \nonumber\\
&\qquad\qquad\qquad\qquad\qquad\qquad\qquad\qquad\qquad \left.\left. \mathrm{clip}(Q_t(s,a)-Q^*(s,a) , \frac{3\epsilon}{4H}) \right\}  \right)   +\frac{7\epsilon }{8}, \label{eq_pat1}
\end{align}
where $ w_{t}(s,a)  = \mathbb{I}[\pi_t(s) = a] \cdot  \sum_{i=0}^{H-1} \mathbf{1}_{s_t}^{\top} (\gamma P_{\pi_t})^i \mathbf{1}_{s}$ is the expected discounted visit number of $(s,a)$ in the next $H$ steps following $\pi_{t}$ (recall that $P_{\pi_{t}}$ is the matrix such that $P_{\pi_{t}}(s) = P_{s,\pi_{t}(s)}$ for any $s\in \mathcal{S}$); and Inequality \eqref{eq_pat1} is due to an averaging argument and the fact that $\sum_{s,a}w_{t}(s,a)\leq H$. Let
\begin{align}
  \beta_{t} & :=   \sum_{s,a}w_{t}(s,a) \min\left\{ \left(  2\mathrm{clip}(\check{b}_{t}(s,a), \frac{\epsilon}{8H}) + \gamma P_{s,a} \mathrm{clip}( V_{\underline{\rho}_t(s,a)} -V_{t},\frac{\epsilon}{8H} )  \right) , \right.\nonumber\\
  & \qquad\qquad\qquad\qquad\qquad\qquad\qquad\qquad\qquad\qquad\qquad\left.\mathrm{clip}(Q_t(s,a)-Q^*(s,a) , \frac{3\epsilon}{4H})\right\}. \label{eq:def-beta}  
\end{align}
Define $\mathcal{T} = \{t\geq 1| \beta_t > \frac{1}{8}\epsilon \}$. By \eqref{eq_pat1} we have that the sample complexity of \UCBS is bounded by
\begin{align}
    \sum_{t\geq 1}\mathbb{I}\left[ V^*(s_{t})-V^{\pi_{t}}(s_{t})>\epsilon\right]\leq\sum_{t\geq 1}\mathbb{I}\left[\beta_{t}>\frac{1}{8}\epsilon \right] = |\mathcal{T}|.\nonumber
\end{align}

 
To bound $|\mathcal{T}|$, we consider bounding $\sum_{t\in \mathcal{T}} \beta_t$ instead, since $\sum_{t\in \mathcal{T}}\beta_{t}\geq \frac{|\mathcal{T}|\epsilon}{8}$ and therefore $|\mathcal{T}| \leq (8/\epsilon) \cdot \sum_{t\in \mathcal{T}} \beta_t$. Let 
 \begin{align}
     \tilde{\beta}_t :=\min\left\{ 2\mathrm{clip}(\check{b}_{t}(s_t,a_t),\frac{\epsilon}{8H}) + \gamma P_{s_t,a_t} \mathrm{clip} ( V_{\underline{\rho}_{t}(s_t,a_t)      } -V_t ,\frac{\epsilon}{8H}), \mathrm{clip}(Q_t(s_t,a_t)-Q^*(s_t,a_t) , \frac{3\epsilon}{4H}) \right\}, \label{eq:def-tbeta}
 \end{align}
and if $\pi_t$ does not change very frequently, we have the approximation that $\beta_t \approx \sum_{i=0}^{H-1} \tilde{\beta}_{t+i}$. More formally, we prove the following statement.
\begin{lemma}\label{lemma_bd_beta} For any $K\geq 1$,	it holds that
\begin{align}
&\mathbb{P}\Big[  \sum_{t\in  \mathcal{T}  } \beta_t \geq 12KH^3\iota+24SAH^4B\ln(N_0)  \text{~and~} \sum_{t\geq 1} \tilde{\beta}_t < 3KH^2\iota\Big]\leq Hp   .\nonumber
\end{align}
\end{lemma}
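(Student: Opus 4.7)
The plan is to express $\beta_t$ as a conditional expectation of a sum of $\tilde\beta$-like quantities over a hypothetical $H$-step trajectory that follows the frozen policy $\pi_t$, then compare this to the actual $\tilde\beta_{t+i}$'s through a coupling argument, with a martingale concentration absorbing the fluctuation and an explicit bad-event bound absorbing the windows contaminated by updates. Writing $F_t(s,a)$ for the min-expression inside \eqref{eq:def-beta}, the identity $w_t(s,a)=\mathbb{I}[\pi_t(s)=a]\sum_{i=0}^{H-1}\mathbf{1}_{s_t}^{\top}(\gamma P_{\pi_t})^{i}\mathbf{1}_s$ gives
\[
\beta_t \;=\; \mathbb{E}\Big[\sum_{i=0}^{H-1}\gamma^i F_t\big(\hat{s}^{\,t}_{t+i},\pi_t(\hat{s}^{\,t}_{t+i})\big)\,\Big|\,\mathcal{F}_t\Big],
\]
where $(\hat{s}^{\,t}_u)_{u\ge t}$ is a hypothetical Markov chain starting at $s_t$ with transitions under $\pi_t$, coupled with the real process by shared transition noise so that $\hat{s}^{\,t}_u=s_u$ until the first time an update is triggered for a state--action pair visited in $[t,u-1]$.

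To deal with time windows contaminated by updates, let $\mathcal{B}$ denote the set of $t$ for which some type-I or type-II update is triggered during $[t,t+H-1]$. A single trigger at time $\tau$ places at most $H$ indices $t\in[\tau-H+1,\tau]$ into $\mathcal{B}$, and the total number of triggers across all $(s,a)$ is at most $SA(\check{J}+\bar{J})=O(SAHB\ln N_0)$; hence $|\mathcal{B}|\le H\cdot SA(\check{J}+\bar{J})$. Combining the uniform bounds $F_t\le \clip(Q_t-Q^*,\,3\epsilon/(4H))\le 1/(1-\gamma)\le H$ and $\sum_{s,a}w_t(s,a)\le H$ yields $\beta_t\le H^2$, so $\sum_{t\in\mathcal{B}}\beta_t\le O(SAH^4 B\ln N_0)$; this matches the additive $24SAH^4 B\ln(N_0)$ term.

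For $t\notin\mathcal{B}$ the coupling implies $\hat{s}^{\,t}_{t+i}=s_{t+i}$ for $0\le i<H$, and because $V,Q,\check{n},\check{b},\underline{\rho}$ are all frozen inside the window, $F_t(\hat{s}^{\,t}_{t+i},\pi_t(\hat{s}^{\,t}_{t+i}))=F_{t+i}(s_{t+i},a_{t+i})=\tilde\beta_{t+i}$. Define
\[
M_t \;:=\; \Big(\sum_{i=0}^{H-1}\gamma^i F_t\big(\hat{s}^{\,t}_{t+i},\pi_t(\hat{s}^{\,t}_{t+i})\big)-\beta_t\Big)\mathbb{I}[t\in\mathcal{T}];
\]
since $\beta_t,\mathbb{I}[t\in\mathcal{T}]\in\mathcal{F}_t$, $\mathbb{E}[M_t\mid\mathcal{F}_t]=0$. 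Consecutive $M_t$'s share transition noise, so I partition the time axis by $t\bmod H$ into $H$ arithmetic subsequences on which $(M_t)$ is a genuine martingale-difference sequence in the filtration $(\mathcal{F}_{t+H})$, each $M_t$ bounded by $O(H^2)$ with conditional second moment at most $O(H^2\beta_t\mathbb{I}[t\in\mathcal{T}])$ (Cauchy--Schwarz together with $F_t\le H$). Freedman's inequality applied to each subsequence together with a union bound over the $H$ shifts (the source of the $Hp$ failure probability) yields, with probability at least $1-Hp$,
\[
\Big|\sum_t M_t\Big| \;\le\; O\Big(H\sqrt{\iota\sum_t \beta_t\mathbb{I}[t\in\mathcal{T}]}+H^2\iota\Big).
\]

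Putting everything together, on the event $\{\sum_t\tilde\beta_t<3KH^2\iota\}$ the frozen-window identity bounds $\sum_{t\in\mathcal{T},\,t\notin\mathcal{B}}\beta_t\le \sum_t M_t+\sum_{t}\sum_{i=0}^{H-1}\gamma^i\tilde\beta_{t+i}\le \sum_t M_t+H\sum_u\tilde\beta_u\le \sum_t M_t+3KH^3\iota$; adding the bad-window bound $\sum_{t\in\mathcal{B}}\beta_t\le O(SAH^4 B\ln N_0)$ and absorbing $H\sqrt{\iota\sum_t\beta_t\mathbb{I}[t\in\mathcal{T}]}$ into $\tfrac12\sum_t\beta_t\mathbb{I}[t\in\mathcal{T}]+O(H^2\iota)$ via AM--GM, then rearranging, produces the stated bound $12KH^3\iota+24SAH^4B\ln(N_0)$. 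The main technical obstacle is the book-keeping around the coupling: verifying rigorously that when no update fires in $[t,t+H-1]$, every constituent of $F_{t+i}(s_{t+i},a_{t+i})$ (including the stage-boundary quantities $\check{n},\check{b},\underline{\rho}$, not just $V$ and $Q$) coincides with the hypothetical constituent evaluated at $\hat{s}^{\,t}_{t+i}$, and closely related is obtaining the tight $H^2\beta_t$-scaled variance bound so that Freedman's loss factor stays at $O(H)$ and the final constant matches the claimed $12KH^3\iota$.
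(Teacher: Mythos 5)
Your decomposition is the one the paper uses: write $\beta_t$ as the conditional expectation of an $H$-step discounted sum along a trajectory driven by the frozen policy $\pi_t$, note that on an update-free window $[t,t+H-1]$ this sum collapses to $\sum_{i=0}^{H-1}\gamma^i\tilde\beta_{t+i}$, charge the contaminated windows to the total update count $SA(\check{J}+\bar{J})=O(SAHB\ln N_0)$ times $O(H^3)$, and split the time axis mod $H$ so each subsequence is a genuine martingale, paying the factor $H$ in the failure probability. Where you genuinely diverge is the concentration step. The paper folds the bad windows into a surrogate $\hat\beta_t:=3H^2U_t+(1-U_t)(\text{window sum})$, normalizes by $3H^2$ so the variables lie in $[0,1]$ and dominate $\beta_t/(3H^2)$ in conditional expectation, and then invokes the one-sided multiplicative bound of Lemma~\ref{lemma_berbound} (if $\sum_k\mu_k\ge 4c\iota$ then $\sum_k M_k>c\iota$ w.h.p.), which yields the factor $4$ behind $12KH^3\iota$ with no variance bookkeeping whatsoever. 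You instead center, apply Freedman with the self-bounding variance estimate $\mathbb{E}[M_t^2\mid\mathcal{F}_t]\le O(H^2)\,\beta_t\mathbb{I}[t\in\mathcal{T}]$, and absorb by AM--GM; this works and gives the same orders, but it is more delicate in exactly the ways you flag plus two you do not: (i) the self-normalized form $|\sum_tM_t|\le O(H\sqrt{\iota\sum_t\beta_t\mathbb{I}[t\in\mathcal{T}]}+H^2\iota)$ needs the dyadic peeling of Lemma~\ref{self-norm} (or a stopping-time argument), since $\mathrm{Var}_n$ is random over an unbounded horizon; and (ii) your restriction of the martingale sum to $t\notin\mathcal{B}$ is not adapted, so the $\mathcal{B}$-indexed part of $\sum_tM_t$ must be peeled off and bounded deterministically (it is $O(SAH^4B\ln N_0)$, the same order as your additive term, so nothing breaks). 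The paper's $3H^2U_t$ device is precisely what lets it dispense with both issues, at the cost of the slightly worse constant from Lemma~\ref{lemma_berbound}.
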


 By Lemma \ref{lemma_bd_beta} and the discussion above, if we are able to bound $\sum_{t\geq 1} \tilde{\beta}_t \leq X$ (for $X \geq 3 H^2\iota$), then with high probability, the sample complexity of \UCBS is bounded by roughly $O(H/\epsilon) \cdot X$.

\subsection{Bounding the Clipped Pseudo-Regret}


We now turn to bound $\sum_{t\geq 1} \tilde{\beta}_t$. By \eqref{eq:def-tbeta}, we have that
\begin{align}
  \tilde{\beta}_t  & \leq \mathbb{I}[N_t(s_t,a_t)< N_0]\cdot  \left( 2\mathrm{clip}(\check{b}_{t}(s_t,a_t),\frac{\epsilon}{8H}) + \gamma P_{s_t,a_t} \mathrm{clip} ( V_{\underline{\rho}_{t}(s_t,a_t)      } -V_t ,\frac{\epsilon}{8H}) \right) \nonumber
  \\ & \quad + \mathbb{I}[N_t(s_t,a_t)\geq N_0]\cdot \mathrm{clip}(Q_t(s_t,a_t)-Q^*(s_t,a_t) , \frac{3\epsilon}{4H})\nonumber
  \\ &  =  \mathbb{I}[N_t(s_t,a_t)< N_0] \cdot 2\mathrm{clip}(\check{b}_{t}(s_t,a_t),\frac{\epsilon}{8H}) +  \mathbb{I}[N_t(s_t,a_t)< N_0] \cdot \gamma P_{s_t,a_t} \mathrm{clip} ( V_{\underline{\rho}_{t}(s_t,a_t)      } -V_t ,\frac{\epsilon}{8H})\nonumber
  \\ & \quad +\mathbb{I}[N_t(s_t,a_t)\geq N_0]\cdot \mathrm{clip}(Q_t(s_t,a_t)-Q^*(s_t,a_t) , \frac{3\epsilon}{4H})  .\label{eq:decom_tbeta}
\end{align}
For the first term in  \eqref{eq:decom_tbeta}, we have the following lemma.
\begin{lemma}\label{lemma_bound_b}
\begin{align}
\sum_{t\geq 1}\mathrm{clip}(\check{b}_t(s_t,a_t),\frac{\epsilon}{8H}) \leq  O\left(\frac{ SAB \iota }{\epsilon(1-\gamma)^4}\right)	.\nonumber
\end{align}
\end{lemma}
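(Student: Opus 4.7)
The plan is to fix each state-action pair $(s,a)$, partition its visits into type-I stages, and exploit the fact that $\check{b}_t(s,a)$ is constant within each stage. In the $j$-th type-I stage of $(s,a)$ for $j\ge 2$, which contains $\check{e}_j$ visits, every visit contributes $\check{b}^{(j)} := \min\{2\sqrt{H^2\iota/\check{e}_{j-1}},\,1/(1-\gamma)\}$ to the sum, while the first stage $j=1$ contributes at most $\check{e}_1/(1-\gamma) = H/(1-\gamma)$, which is a lower-order term.

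Using the recursion $d_{k+1} = \lfloor(1+1/H)d_k\rfloor$ together with $\check{e}_j = d_{\lceil j/B\rceil}$, consecutive stage lengths differ by at most a factor $1+1/H\le 2$, so $\check{e}_j\check{b}^{(j)} \le 2H\sqrt{\iota\check{e}_j^2/\check{e}_{j-1}} \le 2\sqrt{2}\,H\sqrt{\iota\check{e}_j}$. Re-indexing by the block $k := \lceil j/B\rceil$, each block contains exactly $B$ stages of common length $d_k$, so the $k$-th block contributes at most $O(BH\sqrt{\iota d_k})$ to the sum for $(s,a)$. The clipping threshold $\epsilon/(8H)$ zeroes out any term with $\check{e}_{j-1} > 256 H^4\iota/\epsilon^2$, so only blocks with $d_k = O(H^4\iota/\epsilon^2)$ survive. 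Because $d_k$ grows geometrically with ratio $\sim 1+1/H$, $\sqrt{d_k}$ grows with ratio $\sqrt{1+1/H}\le 1+1/(2H)$, and the geometric sum $\sum_k\sqrt{d_k}$ over the surviving blocks is bounded by its last term times $O(H)$, giving $O(H)\cdot O(H^2\sqrt{\iota}/\epsilon) = O(H^3\sqrt{\iota}/\epsilon)$.

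Multiplying the per-block factor $O(BH\sqrt{\iota})$ by this geometric sum yields $O(BH^4\iota/\epsilon)$ per $(s,a)$, and summing over all $SA$ pairs with $H = O(\log(\cdot)/(1-\gamma))$ produces the claimed $O(SAB\iota/(\epsilon(1-\gamma)^4))$ bound. The main obstacle is the geometric-series bookkeeping: the clipping cutoff is matched precisely to the largest surviving block $d_k\asymp H^4\iota/\epsilon^2$, while the recursion factor $1+1/H$ in the stage lengths inserts an extra factor of $H$ when the $\sqrt{d_k}$ series is summed, which is what ultimately produces the $(1-\gamma)^{-4}$ factor. A minor subtlety is that visits with $N_t(s,a)\ge N_0$ have $\check{b}_t$ frozen at $\check{b}^{(\check{J})}$; in the application via (\ref{eq:decom_tbeta}) these are implicitly multiplied by $\mathbb{I}[N_t(s_t,a_t)<N_0]$ and so contribute nothing.
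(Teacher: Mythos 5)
Your proof is correct and follows essentially the same route as the paper's: both use the fact that the clip threshold restricts attention to stages with previous length $\check{e}_{j-1}=O(H^4\iota/\epsilon^2)$ and that the geometric growth of the stage lengths makes the previous stage a $\Theta(1/(HB))$ fraction of the total visit count, yielding $O(SAH^4B\iota/\epsilon)$ overall. The paper's version is merely a more compact bookkeeping --- it substitutes $\check{n}_t\ge n_t/(2HB)$ and bounds $\sum_{n\le N}1/\sqrt{n}=O(\sqrt{N})$ instead of summing block by block --- and, like you, it implicitly relies on the indicator $\mathbb{I}[N_t(s_t,a_t)<N_0]$ from \eqref{eq:decom_tbeta} to discard the frozen post-$N_0$ bonuses, a subtlety you are right to flag since the lemma as literally stated omits it.
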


For the second term in  \eqref{eq:decom_tbeta}, let $\alpha_t= \mathbb{I}[N_t(s_t,a_t)\leq N_0]P_{s_t,a_t} \mathrm{clip} ( V_{\underline{\rho}_{t}(s_t,a_t)      } -V_t ,\frac{\epsilon}{8H})$ for short. By a baseline result for learning the value function (see Lemma~\ref{lemma_lvf}), we have that
\begin{lemma}\label{lemma_bound_alpha}
		 With probability $1-(1+ 2SAH(\check{J}+\bar{J}))p$, it holds that
	\begin{align}
	\sum_{t\geq 1}\alpha_{t}\leq O\left(\frac{SAH^5\ln(\frac{4H}{\epsilon})\iota}{\epsilon B} +SABH^3+SAH\ln(N_0)\right)      . \nonumber
	\end{align}
\end{lemma}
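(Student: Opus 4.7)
The plan is to collapse the time-varying quantity $V_{\underline{\rho}_t(s,a)} - V_t$ to one that is constant on each type-\uppercase\expandafter{\romannumeral1} stage, pass from the transition $P_{s,a}$ to observed next-state samples via the concentration event $E_1$, and then telescope inside each block of $B$ consecutive type-\uppercase\expandafter{\romannumeral1} stages (those sharing a common length $\check{e}_j = d_K$). The $1/B$ saving in the stated bound arises from this block structure together with the fact that the convergence of $V_t$ toward $V^*$ is controlled by the longer type-\uppercase\expandafter{\romannumeral2} schedule, which is independent of $B$.

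I begin by using that $\{V_t\}$ is non-increasing (Proposition~\ref{pro1}): for any $t$ with $j = j_t(s_t, a_t)$ one has $V_t \ge V_{\rho(j+1, s_t, a_t)}$, whence $V_{\underline{\rho}_t(s_t,a_t)} - V_t \le V_{\rho(j-1,s_t,a_t)} - V_{\rho(j+1,s_t,a_t)}$ and, by monotonicity of $\mathrm{clip}(\cdot,\epsilon/(8H))$, the same inequality survives the clip. Regrouping the time sum by $(s,a)$ and stage index $j$ yields
\[
\sum_t \alpha_t \;\le\; \sum_{s,a}\sum_{j=1}^{\check{J}} \check{e}_j \; P_{s,a}\,\mathrm{clip}\!\left(V_{\rho(j-1,s,a)} - V_{\rho(j+1,s,a)},\; \epsilon/(8H)\right).
\]
An Azuma-type concentration belonging to $E_1$ lets me replace each $\check{e}_j P_{s,a}(X)$ by the empirical mean of $X$ over the $\check{e}_j$ next states observed in stage $j$, up to an additive error of order $\sqrt{\check{e}_j H^2\iota}$; summed over all $(s,a)$ and stages this produces the $\tilde{O}(SABH^3)$ additive term in the target.

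For the main empirical term I exploit the block structure. Inside the $K$-th block of $B$ consecutive type-\uppercase\expandafter{\romannumeral1} stages of common length $d_K$, the inner sum telescopes as
\[
\sum_{j \in K} \bigl(V_{\rho(j-1,s,a)} - V_{\rho(j+1,s,a)}\bigr) \;\le\; 2\bigl(V_{\rho(B(K-1),s,a)} - V_{\rho(BK,s,a)}\bigr),
\]
so pulling out $d_K$ reduces the problem to a weighted sum of inter-block value drops. These drops are exactly what Lemma~\ref{lemma_lvf} controls: because the type-\uppercase\expandafter{\romannumeral2} schedule drives $V_t \to V^*$ at a rate independent of $B$, an Abel summation of $d_K(V_{\rho(B(K-1),s,a)} - V_{\rho(BK,s,a)})$ across blocks together with the increments $d_K - d_{K-1} = O(d_K/H)$ and the clip threshold $\epsilon/(8H)$ produce the main $\tilde{O}(SAH^5\iota/(\epsilon B))$ contribution, while boundary terms from the telescope give the $\tilde{O}(SAH\ln N_0)$ term.

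The main obstacle is to honour the clip $\mathrm{clip}(\cdot,\epsilon/(8H))$ throughout the telescoping step, since it destroys linearity. After the concentration step I will split contributions into those below the clip threshold (which vanish) and those above (where the full telescope applies), paying at most one $\epsilon/(8H)$ per active event against the budget of total value drops. The crucial design choice is to telescope on $V_{\rho(B(K-1),s,a)} - V_{\rho(BK,s,a)}$ rather than on the cruder $V_t - V^*$, because only the former is small enough over the $B$-fold shorter window between type-\uppercase\expandafter{\romannumeral1} boundaries to encode the $1/B$ saving; verifying that Lemma~\ref{lemma_lvf} applies uniformly across all $(s,a)$ and all stages up to the $N_0$ cut-off is where the technical effort concentrates.
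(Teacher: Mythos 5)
Your first step (regroup by $(s,a)$ and stage index, use monotonicity of $V_t$ to reduce to $\check e_j P_{s,a}\,\mathrm{clip}(V_{\rho(j-1,s,a)}-V_{\rho(j+1,s,a)},\epsilon/(8H))$) matches the paper, and your identification of the source of the $1/B$ saving — the stage length $\check e_{j+1}$ is at most $O(1/(HB))$ times the cumulative visit count, while the convergence of $V_t$ is driven by the $B$-independent type-\uppercase\expandafter{\romannumeral2} schedule and Lemma~\ref{lemma_lvf} — is essentially the content of \eqref{eqbound_alphaex1}--\eqref{eqbound_alphaex2} and Lemma~\ref{martingale_gap} (your block telescoping plus Abel summation is a repackaging of the paper's peeling over thresholds $\epsilon_i = 2^i\epsilon/H$).

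However, there is a genuine gap in your change-of-measure step. You propose to replace each $\check e_j P_{s,a}(X)$ by the empirical mean over the stage's observed next states via Azuma, at an additive cost of order $\sqrt{\check e_j H^2\iota}$ per stage, and you claim these errors total $\tilde O(SABH^3)$. They do not. Summing $\sqrt{\check e_j H^2\iota}$ over the $\check J = \tilde O(HB\ln N_0)$ stages of a single $(s,a)$ gives, by Cauchy--Schwarz (or by summing the geometric schedule directly), a contribution of order $H\sqrt{HB N_0\,\iota}$ per pair, hence $\tilde O\bigl(SA\,H^{3/2}\sqrt{B N_0\iota}\bigr)$ in total; with $N_0 = \tilde\Theta(S^3AH^5\iota/\epsilon^2)$ this is $\tilde O\bigl(S^{5/2}A^{3/2}H^4\sqrt{B}\,\iota/\epsilon\bigr)$, which is not dominated by any term in the lemma's stated bound and, after the outer $O(H/\epsilon)$ factor, would degrade the final sample complexity to $S^{5/2}A^{3/2}$ dependence. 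The clip at $\epsilon/(8H)$ cannot rescue this, since the concentration error sits outside the clipped quantity. The paper avoids the problem by never performing a per-stage Azuma replacement of $P_{s,a}$: the telescoping and peeling are carried out with $P_{s,a}$ kept as a fixed weight, and the single change of measure happens at the very end (Lemma~\ref{martingale_gap}), applied only to the $\{0,1\}$-valued indicators $\lambda_t(s') = \mathbb I[V_t(s')-V^*(s')>\epsilon']$ via the multiplicative bound of Lemma~\ref{lemma_berbound}, which yields $\sum_t P_{s_t,a_t}\lambda_t \le 4\sum_t\lambda_t(s_{t+1}) + O(\iota)$ with no $\sqrt{n}$ penalty; the right-hand side is then controlled by Lemma~\ref{lemma_lvf}. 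To repair your argument you should drop the per-stage concentration entirely and defer the passage from $P_{s,a}$-weighted counts to visited-state counts to this single multiplicative step. (Separately, the $\tilde O(SABH^3)$ term in the lemma actually comes from the first $HB$ stages of each pair, handled by the trivial bound, and $SAH\ln N_0$ from the tail of the peeling sum — not from concentration errors.)
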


For the last term in \eqref{eq:decom_tbeta}, we have that
\begin{lemma}\label{eq:lemma_add}
With probability $1-(1+ 2SAH(\check{J}+\bar{J}))p$, for any $t\geq 1$ it holds that
\begin{align}
    \mathbb{I}[N_t(s_t,a_t)\geq N_0]\cdot \mathrm{clip}(Q_t(s_t,a_t)-Q^*(s_t,a_t) , \frac{3\epsilon}{4H})  =0.\nonumber
\end{align}
\end{lemma}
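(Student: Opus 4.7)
The plan is to show that whenever $N_t(s_t,a_t) \geq N_0$, the optimism gap $Q_t(s_t,a_t) - Q^*(s_t,a_t)$ is strictly below $3\epsilon/(4H)$, which makes the clip vanish. By Proposition~\ref{pro1} we have $Q_t \geq Q^*$ pointwise on $E_1$, so only an upper bound is needed. Note that Algorithm~\ref{alg1} stops updating $Q(s,a)$ once $N(s,a)$ exceeds $N_0$; hence $Q_t(s,a) = Q_{\tilde t+1}(s,a)$, where $\tilde t$ is the time of the most recent stage-triggered update at $(s,a)$. I would focus on the case where this last update is a type-\uppercase\expandafter{\romannumeral2} update (the type-\uppercase\expandafter{\romannumeral1} case is analogous but even tighter, since type-\uppercase\expandafter{\romannumeral1} stages are shorter and hence use less history). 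Writing $\bar n$ for the size of that stage and $\tau_0$ for its start time, the update rule \eqref{equpdate2}, the concentration bound contained in $E_1$, and the monotonicity $V_{\bar l_i} \leq V_{\tau_0}$ together yield
\begin{align*}
Q_{\tilde t+1}(s,a) - Q^*(s,a) \leq 2\bar b + \gamma\,P_{s,a}\bigl(V_{\tau_0} - V^*\bigr).
\end{align*}

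Next I would control each summand separately. Because the type-\uppercase\expandafter{\romannumeral2} stage sizes satisfy $d_{j+1} = \lfloor (1+1/H)\,d_j \rfloor$, they grow geometrically with ratio $1+1/H$ and their partial sums are dominated up to a factor of $H$ by their last term. Consequently the last stage before visit count $N_0$ has size $\bar n = \Omega(N_0/H)$, so $\bar b = O\bigl(\sqrt{H^3 \iota/N_0}\bigr)$. Plugging in $N_0 = c_1 S^3 A H^5 \ln(4H^2 S/\epsilon)\iota/\epsilon^2$ with $c_1$ chosen sufficiently large, this gives $2\bar b \leq \epsilon/(8H)$ comfortably. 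For the residual term $P_{s,a}(V_{\tau_0} - V^*)$ I would invoke Lemma~\ref{lemma_lvf}, the baseline value-function learning guarantee already used in the proof of Lemma~\ref{lemma_bound_alpha}. Under $E_1$ it provides a uniform sup-norm bound on $V_\tau - V^*$ whose leading statistical term decays like $1/\sqrt{\min_{(s',a')} N_\tau(s',a')}$, supplemented by an additive error that accounts for successors whose visit counts remain below $N_0$; the factor $S^3 H^5$ inside $N_0$ is precisely what is needed to absorb both the statistical term and the $H$-step propagation through the Bellman recursion, so that $\|V_{\tau_0} - V^*\|_\infty \leq \epsilon/(2H)$. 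Combining gives $Q_t(s,a) - Q^*(s,a) \leq 2\bar b + \gamma\epsilon/(2H) < 3\epsilon/(4H)$, so the clip evaluates to zero.

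The main obstacle is this last step: the hypothesis $N_t(s,a) \geq N_0$ controls the visit count only at the single pair $(s,a)$, whereas the Bellman-style unfolding of $V_{\tau_0} - V^*$ reaches successor state-actions whose visit counts cannot be inferred from $N_t(s,a)$ alone. This is exactly what Lemma~\ref{lemma_lvf} is designed to overcome, and it is the reason $N_0$ carries the large factors $S^3 H^5$. The remaining ingredients---the stage-size arithmetic and the concentration inequality for the empirical average $\bar\mu/\bar n$---are routine, but one must carefully track the probability-$p$ failure budget across each of the $SAH(\check J + \bar J)$ stage updates, which accounts for the $1 + 2SAH(\check J + \bar J)$ prefactor in the stated failure probability.
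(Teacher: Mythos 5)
Your high-level plan (show that once $N_t(s_t,a_t)\geq N_0$ the optimism gap is below the clipping threshold $\tfrac{3\epsilon}{4H}$) is the same as the paper's, which simply invokes Lemma~\ref{lemma_lemma2_aux} to get $Q_t(s_t,a_t)-Q^*(s_t,a_t)\leq\tfrac{\epsilon}{2H}$. Your bound on the bonus term $2\bar b$ via the geometric stage sizes and the magnitude of $N_0$ is also fine. The gap is in the step you yourself flag as the main obstacle: you bound $\gamma P_{s,a}(V_{\tau_0}-V^*)$ by claiming that Lemma~\ref{lemma_lvf} yields a uniform sup-norm bound $\|V_{\tau_0}-V^*\|_\infty\leq\tfrac{\epsilon}{2H}$. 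Lemma~\ref{lemma_lvf} does not say this. It is a cardinality bound, $\sum_{t}\mathbb{I}[V_t(s_t)-V^*(s_t)\geq\epsilon_1]\leq O(SAH^5\ln(4H/\epsilon)\iota/\epsilon_1^2)$, i.e.\ it counts how many time steps have a large value error \emph{at the state actually visited at that step}. It gives no control over $V_{\tau_0}(s')-V^*(s')$ for a successor state $s'$ that is rarely or never visited; such a state can retain $V(s')=\tfrac{1}{1-\gamma}$ forever, so no sup-norm statement of the kind you need is true, and no choice of the constants inside $N_0$ can rescue it. The hypothesis $N_t(s,a)\geq N_0$ by itself tells you nothing about the visit counts of the successors that carry the mass of $P_{s,a}$.

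The paper closes this gap with Lemma~\ref{lemma_lemma2_aux}, whose proof is the real content you are missing: it orders the states $s^{(1)},s^{(2)},\dots$ by the time $T_i$ at which they first accumulate $N_0$ visits and proves by induction that $V_{T_i}(s^{(i)})-V^*(s^{(i)})\leq\tfrac{i\epsilon}{2HS}$. Each induction step uses a weighted-sum recursion $\sum_t w_t\delta^t\leq\sum_t w'_t\delta^t+2SACH^2+60\sqrt{SAH^3WC\iota}+\dots$ in which the propagated weights contract by a factor $(1-\tfrac{1}{2H})$; unrolling it $2H\ln(\tfrac{4H^2S}{\epsilon})$ times and choosing $w_t=\mathbb{I}[s_t=s^{(i)}]$ produces, via the monotonicity of $V_t$, a lower bound $\tfrac{\epsilon N_0}{4HS}$ on the left side that contradicts the definition of $N_0$ if the value error at $s^{(i)}$ were still above $\tfrac{i\epsilon}{2HS}$ at time $T_i$. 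This recursion is precisely how the paper handles the propagation to poorly-visited successors without ever needing a sup-norm bound: errors at under-visited states are absorbed into the additive terms of the recursion rather than bounded pointwise. Without this (or an equivalent) argument, your proof does not go through.
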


Combining Lemma~\ref{lemma_bound_b},  Lemma~\ref{lemma_bound_alpha} and Lemma~\ref{eq:lemma_add}, and by the definition of $\tilde{\beta}_t$, we have that
\begin{lemma} \label{lemma_srbd}
			 With probability $1- (2+ 6SAH(\check{J}+\bar{J})) p$, it holds that
			   \begin{align} \sum_{t\geq 1}\tilde{\beta}_{t} \leq  O\left(\frac{ SABH^4\iota }{\epsilon}+\frac{SAH^5\ln(\frac{4H}{\epsilon})\iota}{\epsilon B} +SABH^3\ln(N_0) \right) .\nonumber \end{align}
	\end{lemma}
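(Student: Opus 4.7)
The plan is to prove Lemma~\ref{lemma_srbd} by simply combining the decomposition \eqref{eq:decom_tbeta} with the three preceding lemmas, together with a union bound over the failure events. The entire argument is essentially bookkeeping: no new probabilistic or algorithmic insight is needed beyond what has already been developed.

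First I would start from \eqref{eq:decom_tbeta}, which already writes $\tilde{\beta}_t$ as a sum of three nonnegative contributions — a clipped-bonus piece, a one-step propagation piece (which is exactly $\alpha_t$ up to the factor $\gamma \leq 1$), and a ``large visit count'' piece. Summing over $t\geq 1$ reduces the task to three independent bounds. For the bonus piece I would drop the indicator $\mathbb{I}[N_t(s_t,a_t)<N_0]$ (which only enlarges the sum) and apply Lemma~\ref{lemma_bound_b}, yielding $O(SAB\iota/(\epsilon(1-\gamma)^4))$. The key (and very short) observation at this step is that $H\geq 1/(1-\gamma)$ by the choice of $H$ in Section~\ref{sec:UCBS-alg}, so $1/(1-\gamma)^4\leq H^4$ and this piece is absorbed into $O(SABH^4\iota/\epsilon)$, which is the first summand of the target bound.

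Second, using $\gamma\leq 1$, the propagation piece is bounded by $\sum_{t\geq 1} \alpha_t$, and Lemma~\ref{lemma_bound_alpha} supplies the bound $O(SAH^5\ln(4H/\epsilon)\iota/(\epsilon B) + SABH^3 + SAH\ln(N_0))$, which contributes the second and third summands of the target bound (noting that $SAH\ln(N_0) \leq SABH^3\ln(N_0)$ can be absorbed). Third, for the ``large visit count'' piece, Lemma~\ref{eq:lemma_add} shows that every summand vanishes with high probability, so this term contributes $0$.

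Finally I would collect the failure probabilities via a union bound. Proposition~\ref{pro1} (under which the monotonicity and optimism facts used throughout Section~\ref{analysis} hold) costs $SAH(\check{J}+\bar{J})p$, while Lemma~\ref{lemma_bound_alpha} and Lemma~\ref{eq:lemma_add} each cost $(1+2SAH(\check{J}+\bar{J}))p$; summing gives the stated $(2+6SAH(\check{J}+\bar{J}))p$ (with slack). The ``main obstacle'' is really just the two elementary checks at the interface of the three lemmas — that $H^{-4}(1-\gamma)^{-4}\leq 1$ under our choice of $H$, and that the union-bound constants line up — after which the bound for $\sum_{t\geq 1}\tilde{\beta}_t$ is immediate.
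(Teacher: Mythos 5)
Your proposal is correct and follows exactly the route the paper takes: the paper presents Lemma~\ref{lemma_srbd} as an immediate consequence of the decomposition \eqref{eq:decom_tbeta} combined with Lemmas~\ref{lemma_bound_b}, \ref{lemma_bound_alpha} and \ref{eq:lemma_add}, plus a union bound, and your bookkeeping (absorbing $(1-\gamma)^{-4}\leq H^4$ and $SAH\ln(N_0)\leq SABH^3\ln(N_0)$, and summing the failure probabilities with slack) matches that. No gaps.
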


\subsection{Putting Everything Together}\label{subsection:pet}

Invoking Lemma \ref{lemma_bd_beta} with 
$K =  \frac{c_{2}}{3H^2\iota}\left( \frac{ SABH^4\iota }{\epsilon}+\frac{SAH^5\ln(\frac{4H}{\epsilon})\iota}{\epsilon B} +SABH^3\ln(N_0) \right)\geq 1$
for some large enough universal constant $c_{2}$,  we have that conditioned on the successful event $E_1$, 
\begin{align}
&\mathbb{P}\left[  \sum_{t\in \mathcal{T}}\beta_{t}\geq  12KH^3\iota+24SAH^4B  \ln(N_0)    \right] \nonumber
\\ & \leq\mathbb{P}\left[  \sum_{t\in  \mathcal{T} } \beta_t \geq 12KH^3\iota+24SAH^4B \ln(N_0)  ,  \sum_{t\geq 1}\tilde{\beta}_{t} < 3KH^2\iota\right]   + \mathbb{P}\left[          \sum_{t\geq 1} \tilde{\beta}_{t} \geq    3KH^2\iota       \right]       \label{eq_pat_ex9}
\\ & \leq (4SAH(\check{J} + \bar{J}) + H+2)p,\label{eq_pat_ex10}
\end{align}
where the second term in \eqref{eq_pat_ex9} bounded due to Lemma~\ref{lemma_srbd}.
Combining Proposition \ref{pro1} with \eqref{eq_pat_ex10},
we obtain that    with probability $1-(8SA(\check{J}+\bar{J})+ (H+3) )p$, it holds that
 \begin{align}
\frac{|\mathcal{T}|\epsilon}{2}\leq \sum_{t\in \mathcal{T}}\beta_{t}\leq  O\left(         \frac{ SABH^5\iota }{\epsilon}+\frac{SAH^6\ln(\frac{4H}{\epsilon})\iota}{\epsilon B} + SAH^4B\ln(N_0)      \right) .\label{eq_pat_af}
 \end{align}

Noting that $B  =\sqrt{H}$, we conclude that the number of $\epsilon$-suboptimal steps is bounded by 
\begin{align}
O\left(            \frac{SAH^{5.5}\ln(\frac{4H}{\epsilon})\iota }{\epsilon^2}    +\frac{SAH^{4.5}\ln(N_0)}{\epsilon}    \right)  \leq  O\left(            \frac{SAH^{5.5}\ln(\frac{4H}{\epsilon})(\ln(N_0)+\iota) }{\epsilon^2}      \right) \nonumber
\end{align}
for any $\epsilon \in (0, \frac{1}{1-\gamma}]$.
Noting that $H = \tilde{O}(\frac{1}{1-\gamma})$, $\check{J}=O(SAH\ln(N_0) )$ and $\bar{J} = O(SAHB\ln(N_0))$, we finish the proof of Theorem~\ref{thm1} by replacing $p$ with $\frac{p}{8SA(\check{J}+\bar{J})+ H+3 }$.


\newpage
\bibliographystyle{}
\bibliography{ref}

\newpage 
\appendix
\appendixpage
\renewcommand{\appendixname}{Appendix~\Alph{section}}
\setlength{\parindent}{0pt}
\setlength{\parskip}{0.2\baselineskip}

\section{Comparison with Previous Works} \label{app:comp}
\begin{table}[!h]\label{table1}
	\centering
	\caption{Comparisons of \textsc{PAC-RL} algorithms for discounted MDPs}
	\begin{tabular}{|c|c|c|c|}
		\hline
  & \textbf{Algorithm}	 & \textbf{Sample complexity}	 & \textbf{Space complexity}\\
  \hline
		\multirow{3}{*}{\makecell{\\ \\Model-based}} & R-max~\cite{brafman2003r-max,kakade2003sample} & $\displaystyle{\tilde{O}\left(\frac{S^2A\ln(1/p)}{\epsilon^3(1-\gamma)^6}\right)}$ & \multirow{3}{*}{\makecell{\\ \\$O(S^2A)$}} \\
		\cline{2-3}
		& MoRmax~\cite{szita2010model-based} & $\displaystyle{\tilde{O}\left(\frac{SA\ln(1/p)}{\epsilon^2(1-\gamma)^6}\right)}$ &  \\
		\cline{2-3}
		& UCRL-$\gamma$ \cite{lattimore2012pac} & $\displaystyle{\tilde{O}\left(\frac{S^2A\ln(1/p)}{\epsilon^2 (1-\gamma)^3}\right)}$ & \\
		\hline 
		\multirow{4}{*}{\makecell{\\ \\ \\ \\ Model-free}}  & Delayed $Q$-learning \cite{strehl2006pac}  & $\displaystyle{\tilde{O}\left(\frac{SA\ln(1/p)}{\epsilon^4(1-\gamma)^8}\right)}$ & \multirow{4}{*}{\makecell{\\ \\ \\ \\ $O(SA)$}}\\
		\cline{2-3}
		& \makecell{Infinite $Q$-learning \\ with UCB \cite{dong2019q}} & $\displaystyle{\tilde{O}\left(\frac{SA\ln(1/p)}{\epsilon^2(1-\gamma)^7}\right)}$ & \\
		\cline{2-3}
		&  \makecell{\UCBSA \\ (Theorem \ref{thm2})}  & \makecell{$\displaystyle{\tilde{O}\left(\frac{SA\ln(1/p)}{\epsilon^2(1-\gamma)^3}\right)}$\\ (for $\epsilon < \frac{1}{\mathrm{poly}(S,A,1/(1-\gamma))}$)} & \\
		\cline{2-3}
		& \UCBS  (Theorem \ref{thm1})  & $\displaystyle{\tilde{O}\left(\frac{SA\ln(1/p)}{\epsilon^2(1-\gamma)^{5.5}}\right)}$  & \\
		\hline
		& \textsc{Median-PAC}\cite{pazis2016improving}  & $\displaystyle{\tilde{O}\left(\frac{SA\ln(1/p)}{\epsilon^2(1-\gamma)^4}\right)}$  & $\tilde{O}\left(\frac{SAH^4}{\epsilon^2}\right)$ \\
		\hline
		Lower bound &  & $\displaystyle{\Omega\left(\frac{SA}{\epsilon^2(1-\gamma)^3}\right)}$ \cite{lattimore2012pac} & \\
		\hline
	\end{tabular}	
\end{table}

\section{Technical Lemmas}

\begin{lemma}\label{lemma_berbound}
	Let $M_1,M_2,...,M_k,...$ be a series of random variables which range in $[0,1]$ and $\{\mathcal{F}_k\}_{k\geq 0} $  be a filtration such that $M_k$ is measurable with respect to $\mathcal{F}_{k}$ for $k\geq 1$. Define $\mu_k : = \mathbb{E}\left[ M_k |\mathcal{F}_{k-1}  \right]$.
	
	For any $p \in (0,1)$ and $c\geq 1$,
	it holds that
	\begin{align}
	\mathbb{P}\left[ \exists n,   \sum_{k=1}^{n} \mu_{k}\geq 4c\iota, \sum_{k=1}^{n} M_k \leq c\iota\right]\leq p .              \nonumber
	\end{align}
\end{lemma}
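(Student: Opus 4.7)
The plan is to establish the bound via a standard exponential supermartingale (Freedman-type) argument, using the fact that the event in question, by monotonicity, can be reduced to an event at a single stopping time.

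First I would observe that since each $M_k, \mu_k \geq 0$, both partial sums $S_n^M := \sum_{k=1}^n M_k$ and $S_n^\mu := \sum_{k=1}^n \mu_k$ are nondecreasing in $n$. Define the stopping time $\tau := \inf\{n : S_n^\mu \geq 4c\iota\}$. I claim that the event $\mathcal{E} := \{\exists n : S_n^\mu \geq 4c\iota,\, S_n^M \leq c\iota\}$ is contained in $\{\tau < \infty\} \cap \{S_\tau^M \leq c\iota\}$: if the two inequalities hold simultaneously at some $n$, then $n \geq \tau$ by definition of $\tau$, and since $S_n^M$ is nondecreasing, $S_\tau^M \leq S_n^M \leq c\iota$. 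Conversely on $\{\tau < \infty\}$ we automatically have $S_\tau^\mu \geq 4c\iota$. So it suffices to bound $\mathbb{P}[\tau < \infty,\, S_\tau^M \leq c\iota]$.

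Next, for a parameter $\lambda > 0$ to be chosen, introduce the process
\begin{equation*}
Z_n := \exp\!\bigl(-\lambda S_n^M + (1-e^{-\lambda}) S_n^\mu\bigr), \quad Z_0 := 1.
\end{equation*}
The key step is to verify that $(Z_n)$ is a nonnegative $\mathcal{F}_n$-supermartingale. This reduces to showing $\mathbb{E}[e^{-\lambda M_k} \mid \mathcal{F}_{k-1}] \leq \exp\bigl(-(1-e^{-\lambda}) \mu_k\bigr)$. Using convexity of $x \mapsto e^{-\lambda x}$ on $[0,1]$ (chord above graph at endpoints $0,1$) gives $e^{-\lambda M_k} \leq 1 - M_k (1-e^{-\lambda})$, so the conditional expectation is at most $1 - \mu_k(1-e^{-\lambda}) \leq \exp(-\mu_k(1-e^{-\lambda}))$.

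Now I would apply Doob's maximal inequality for nonnegative supermartingales: $\mathbb{P}[\sup_n Z_n \geq a] \leq \mathbb{E}[Z_0]/a = 1/a$ for any $a > 0$. On the event $\{\tau < \infty,\, S_\tau^M \leq c\iota\}$, using $S_\tau^\mu \geq 4c\iota$ and $S_\tau^M \leq c\iota$ we get
\begin{equation*}
Z_\tau \geq \exp\!\bigl(-\lambda c\iota + 4c\iota(1-e^{-\lambda})\bigr).
\end{equation*}
Choosing $\lambda = 1$ yields $Z_\tau \geq \exp(c\iota(3 - 4/e))$, and since $3 - 4/e > 1$ and $c \geq 1$, this is at least $e^{\iota} = 2/p$. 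Doob's inequality with $a = 2/p$ then gives $\mathbb{P}[\mathcal{E}] \leq p/2 \leq p$, completing the proof.

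The one place requiring mild care is the supermartingale step — specifically making sure the chord bound $e^{-\lambda x} \leq 1 - (1-e^{-\lambda})x$ is valid on the exact range of $M_k$ (which it is, because the range is $[0,1]$ and $e^{-\lambda \cdot}$ is convex), and then the passage to a conditional-expectation bound via $1 - u \leq e^{-u}$. Everything else is routine application of Doob's maximal inequality together with the monotonicity reduction that lets us evaluate at the single stopping time $\tau$ rather than at an arbitrary $n$.
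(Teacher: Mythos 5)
Your proof is correct and takes essentially the same approach as the paper's: the same convexity-based exponential supermartingale (your $Z_n$ with $\lambda>0$ is exactly the paper's $Y_n$ under the sign change $\lambda\mapsto-\lambda$), the same reduction to the stopping time $\tau$, and the same conclusion via a maximal/optional-stopping argument. The only cosmetic differences are the parameter choice ($\lambda=1$ versus the paper's $\lambda=-1/2$) and your use of Doob's maximal inequality in place of the optional stopping theorem followed by Markov's inequality.
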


\begin{proof}
	Let $\lambda<0$ be fixed. Let $M$ be a random variable taking values in $[0,1]$ with mean $\mu$. By convexity of $e^{\lambda x}$ in $x$, we have that $\mathbb{E}\left[ e^{\lambda M}\right]\leq \mu e^{\lambda}+(1-\mu) = 1+\mu(e^{\lambda}-1)\leq e^{  \mu(e^\lambda -1) }$. Then we obtain that for any $k\geq 1$
	\begin{align}
	&\mathbb{E}\left[ e^{\lambda M_k - (e^{\lambda}-1)\mu_k }| \mathcal{F}_{k-1}  \right]    \leq  1,\nonumber 
	\end{align}

	
	which means $\{Y_k  :=  e^{\lambda\sum_{i=1}^k M_i - (e^{\lambda }-1)\sum_{i=1}^k \mu_i  }  \}_{k\geq 0}$ is a super-martingale with respect to $\{ \mathcal{F}_k\}_{k\geq 0}$. Let $\tau$ be the least $n$ with $\sum_{k=1}^n \mu_k \geq 4c\iota$.  It is easy to verify that $|Y_{ \min\{\tau , n\} }|\leq e^{(1-e^{\lambda})(4c\iota+1)}$ for any $n$.
	By the optional stopping theorem, 
	we have that
	$\mathbb{E}\left[Y_{\tau} \right]\leq 1$.  Then
	\begin{align}
	&\mathbb{P}\left[ \exists n,   \sum_{k=1}^{n} \mu_{k}\geq 4c\iota, \sum_{k=1}^{n} M_k \leq c\iota   \right] \nonumber
	\\ &   \leq   \mathbb{P}\left[ \sum_{k=1}^{\tau} M_k \leq c\iota\right] \nonumber
	\\ & \leq \frac{1}{e^{   (1-e^{\lambda})4c\iota+ \lambda c\iota   }} .\label{eqlemma_lambda}
	\end{align}
	By setting $\lambda = -\frac{1}{2}$, we obtain that $\frac{1}{e^{   (1-e^{\lambda})4c\iota+ \lambda c\iota   }} \leq \frac{1}{e^{c\iota}} = (\frac{p}{2})^c \leq p$. The proof is completed.
\end{proof}

\begin{lemma}[Freedman's Inequality, Theorem 1.6 of \cite{freedman1975tail}]\label{freedman}
	Let $(M_{n})_{n\geq 0}$ be a  martingale such that $M_{0}=0$ and $|M_{n}-M_{n-1}|\leq c$. Let $\mathrm{Var}_{n}=\sum_{k=1}^{n}\mathbb{E}[(M_{k}-M_{k-1})^{2}|\mathcal{F}_{k-1}]$ for $n\geq 0$,
	where $\mathcal{F}_{k}=\sigma(M_0,M_{1},M_{2},\dots,M_{k})$. Then, for any positive $x$ and for any positive $y$,
	\begin{equation}\label{Bernstein2}
	\mathbb{P}\left[ \exists n:  M_{n}\geq x ~\text{and}~\mathrm{Var}_{n}\leq y \right]  \leq \exp\left(-\frac{x^{2}}{2(y+cx)} \right).
	\end{equation}
\end{lemma}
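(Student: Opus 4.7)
The plan is to prove Freedman's inequality via the standard exponential-supermartingale method combined with a stopping-time trick to handle the \emph{simultaneous} conditions $M_n\ge x$ and $\mathrm{Var}_n\le y$, followed by an optimization over a free parameter.

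First I would establish the one-step moment bound: for a martingale increment $X_k=M_k-M_{k-1}$ with $|X_k|\le c$ and $\mathbb{E}[X_k\mid\mathcal{F}_{k-1}]=0$, and for any $\theta>0$,
\[
\mathbb{E}\bigl[e^{\theta X_k}\mid\mathcal{F}_{k-1}\bigr]\le \exp\!\Bigl(\tfrac{e^{\theta c}-1-\theta c}{c^2}\,\mathbb{E}[X_k^2\mid\mathcal{F}_{k-1}]\Bigr).
\]
This comes from the elementary inequality $e^u\le 1+u+\tfrac{e^{vc}-1-vc}{(vc)^2}u^2$ on $u\in[-vc,vc]$ (with $v=\theta$), applied pointwise to $\theta X_k$, followed by $1+z\le e^z$. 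Setting $\psi(\theta):=(e^{\theta c}-1-\theta c)/c^2$, it follows immediately by iterated conditioning that the process
\[
Z_n(\theta):=\exp\!\bigl(\theta M_n-\psi(\theta)\,\mathrm{Var}_n\bigr)
\]
is a non-negative $(\mathcal{F}_n)$-supermartingale with $\mathbb{E}[Z_0(\theta)]=1$.

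Next, to replace the joint event $\{M_n\ge x,\mathrm{Var}_n\le y\}$ by a statement that is measurable along the process, I would introduce the stopping time $\sigma:=\inf\{n\ge 0:\mathrm{Var}_{n+1}>y\}$. Because $\mathrm{Var}_{n+1}$ is $\mathcal{F}_n$-measurable and $\mathrm{Var}_n$ is non-decreasing, $\sigma$ is a genuine $(\mathcal{F}_n)$-stopping time, the stopped process $\widetilde{M}_n:=M_{n\wedge\sigma}$ is still a martingale with increments bounded by $c$, and its predictable quadratic variation $\widetilde{\mathrm{Var}}_n=\mathrm{Var}_{n\wedge\sigma}$ never exceeds $y$. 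On the event $\{\exists n:M_n\ge x,\mathrm{Var}_n\le y\}$, the witnessing index $n$ satisfies $n\le\sigma$, hence $\widetilde{M}_n=M_n\ge x$, so the event is contained in $\{\sup_n\widetilde{M}_n\ge x\}$. Applying Ville's (maximal) inequality to the non-negative supermartingale $\widetilde{Z}_n(\theta):=\exp(\theta\widetilde{M}_n-\psi(\theta)\widetilde{\mathrm{Var}}_n)$ and using $\widetilde{\mathrm{Var}}_n\le y$ gives, for every $\theta>0$,
\[
\mathbb{P}\bigl[\exists n:M_n\ge x,\mathrm{Var}_n\le y\bigr]\le \mathbb{P}\bigl[\sup_n\widetilde{Z}_n(\theta)\ge e^{\theta x-\psi(\theta)y}\bigr]\le \exp\!\bigl(-\theta x+\psi(\theta)y\bigr).
\]

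Finally I would optimize $\theta$. Taking $\theta=\tfrac1c\ln(1+cx/y)$ and using the elementary inequality $(1+u)\ln(1+u)-u\ge u^2/(2+\tfrac{2}{3}u)$ for $u\ge 0$ with $u=cx/y$ yields the advertised Bernstein-type bound $\exp(-x^2/(2(y+cx)))$. The main obstacle is this last algebraic step: getting the precise constant $2$ in the denominator of the exponent (rather than a looser numerical constant) requires handling the function $\theta x-\psi(\theta)y$ carefully at the optimal $\theta$ and invoking the sharp two-sided estimate for $\ln(1+u)$. Everything preceding that — the exponential supermartingale, the stopping-time reduction, and the maximal inequality — is routine once the bookkeeping of $\sigma$ as a valid stopping time is set up correctly.
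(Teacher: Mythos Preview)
Your proposal is correct and follows the standard exponential-supermartingale proof of Freedman's inequality; in fact, your optimization step yields the slightly sharper exponent $x^2/(2(y+cx/3))$, from which the stated bound follows immediately. The paper itself does not prove this lemma at all: it is simply quoted as Theorem~1.6 of \cite{freedman1975tail} and used as a black box, so there is no paper-proof to compare against.
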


\begin{lemma}\label{self-norm}
	Let $(M_{n})_{n\geq 0}$ be a  martingale  such that $M_0 = 0$  and $|M_{n}-M_{n-1}|\leq c$ for some $c>0$ and any $n\geq 1$. Let $\mathrm{Var}_{n}=\sum_{k=1}^{n}\mathbb{E}[(M_{k}-M_{k-1})^{2}|\mathcal{F}_{k-1}]$ for $n\geq 0$,
	where $\mathcal{F}_{k}=\sigma(M_{1},M_{2},...,M_{k})$. Then for any positive integer $n$, and any $\epsilon,p>0$, we have that
	\begin{equation}\label{self-bernstein}
	\mathbb{P}\left[|M_{n}|\geq   2\sqrt{2}\sqrt{\mathrm{Var}_{n}\log(\frac{1}{p})}+2\sqrt{\epsilon\log(\frac{1}{p} )} +2c\log(\frac{1}{p}) \right] \leq 2\left( \log_{2}(\frac{nc^2}{\epsilon})   +1\right)p.
	\end{equation}

\end{lemma}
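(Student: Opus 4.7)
\textbf{Proof plan for Lemma \ref{self-norm}.}
The plan is to derive the self-normalized bound from Freedman's inequality (Lemma \ref{freedman}) by a standard peeling argument over dyadic ranges of the conditional variance. Because the conditional increments are bounded by $c$, the total variance satisfies $\mathrm{Var}_n \le nc^2$, which gives a finite range on which to peel.

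\emph{Step 1 (dyadic discretization of the variance).} I would set $K = \lceil \log_2(nc^2/\epsilon)\rceil + 1$ and define the dyadic grid $y_0 = 0$ and $y_i = 2^{i-1}\epsilon$ for $i = 1,\dots,K$, chosen so that $y_K \ge nc^2 \ge \mathrm{Var}_n$. The events $A_i = \{y_{i-1}\le \mathrm{Var}_n < y_i\}$ for $i = 1,\dots,K$ then partition the range of $\mathrm{Var}_n$.

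\emph{Step 2 (Freedman applied at each scale).} For each $i$ I would set the threshold
\[
x_i \;=\; 2\sqrt{y_i\log(1/p)} + 2c\log(1/p).
\]
A short algebraic computation gives $x_i^2 - 2(y_i+cx_i)\log(1/p) = 2y_i\log(1/p) + 4c\log(1/p)\sqrt{y_i\log(1/p)} \ge 0$, so $\exp(-x_i^2/(2(y_i+cx_i))) \le p$. Applying Lemma \ref{freedman} to both the martingale $(M_k)$ and its negative $(-M_k)$, and taking the bound at time $n$, yields
\[
\mathbb{P}\bigl[|M_n| \ge x_i,\; \mathrm{Var}_n \le y_i\bigr] \;\le\; 2p.
\]

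\emph{Step 3 (relating $x_i$ to the statement's RHS on $A_i$).} I now verify that on the event $A_i$, the threshold $x_i$ is dominated by $2\sqrt{2}\sqrt{\mathrm{Var}_n\log(1/p)} + 2\sqrt{\epsilon\log(1/p)} + 2c\log(1/p)$. For $i=1$ we have $y_1 = \epsilon$ and the inequality is immediate since $x_1 = 2\sqrt{\epsilon\log(1/p)} + 2c\log(1/p)$. For $i \ge 2$, on $A_i$ we have $\mathrm{Var}_n \ge y_{i-1} = y_i/2$, so $2\sqrt{y_i\log(1/p)} = 2\sqrt{2}\sqrt{(y_i/2)\log(1/p)} \le 2\sqrt{2}\sqrt{\mathrm{Var}_n\log(1/p)}$, and the bound follows.

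\emph{Step 4 (union bound).} The event $\{|M_n| \ge 2\sqrt{2}\sqrt{\mathrm{Var}_n\log(1/p)} + 2\sqrt{\epsilon\log(1/p)} + 2c\log(1/p)\}$ is contained in $\bigcup_{i=1}^K \{|M_n| \ge x_i,\, \mathrm{Var}_n \le y_i\}$ (using $A_i$ to choose the scale). Summing the bounds from Step 2 gives total probability at most $2Kp \le 2(\log_2(nc^2/\epsilon)+1)p$, matching the claimed bound.

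The only nontrivial piece is the short calculation in Step 2 that shows $x_i$ is large enough to beat the Freedman exponent; everything else is bookkeeping on the peeling decomposition. I do not expect any serious obstacle — the lemma is a textbook self-normalized upgrade of Freedman, and the constants $2\sqrt{2}$ and $2\sqrt{\epsilon\log(1/p)}$ in the statement are precisely what the dyadic slackness between $y_{i-1}$ and $y_i$ (with the $i = 1$ boundary case) produces.
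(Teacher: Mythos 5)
Your proposal is correct and follows essentially the same route as the paper's own proof: a dyadic peeling of $\mathrm{Var}_n$ over the range $[\epsilon, nc^2]$, Freedman's inequality (Lemma~\ref{freedman}) applied to $\pm M$ at each scale with threshold $2\sqrt{y_i\log(1/p)}+2c\log(1/p)$, the factor $\sqrt{2}$ absorbed via $\mathrm{Var}_n \geq y_i/2$ on each bracket, and the bottom bracket producing the additive $2\sqrt{\epsilon\log(1/p)}$ term. The only differences are cosmetic (indexing from $1$ rather than $0$), so no further comment is needed.
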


\begin{proof}
	For any fixed $n$, we apply Lemma \ref{freedman} with  $y = 2^{i}\epsilon$ and $x= \pm(2\sqrt{y \log(\frac{1}{p})}+2c\log(\frac{1}{p}))$. For each $i=  0,1,2,\dots, \log_{2}(\frac{nc^2}{\epsilon})$, we get that
	\begin{align}
	&  \mathbb{P}\left[|M_{n}|\geq 2\sqrt{2}\sqrt{2^{i-1}\epsilon\log(\frac{1}{p})}+ 2c\log(\frac{1}{p}),\mathrm{Var}_{n}\leq 2^i\epsilon  \right] \nonumber
	\\& = \mathbb{P}\left[ |M_{n}|\geq 2\sqrt{2^i\epsilon\log(\frac{1}{p})}+2c\log(\frac{1}{p}),\mathrm{Var}_{n}\leq 2^i\epsilon  \right] \nonumber
	\\& \leq 2p.
	\end{align}
	Then via a union bound, we have that
	\begin{align}
	&\mathbb{P}\left[|M_{n}|\geq  2\sqrt{2}\sqrt{\mathrm{Var}_{n}\log(\frac{1}{p})} +2\sqrt{\epsilon\log(\frac{1}{p})}+2c\log(\frac{1}{p}) \right] \nonumber
	\\& \leq \sum_{i=1}^{\log_{2}(\frac{nc^2}{\epsilon}) }\mathbb{P}\left[ |M_{n}|\geq 2\sqrt{2}\sqrt{2^{i-1}\epsilon\log(\frac{1}{p})}+2c\log(\frac{1}{p}) ,2^{i-1}\epsilon\leq \mathrm{Var}_{n}\leq 2^{i}\epsilon\right] \nonumber
	\\ & \quad + \mathbb{P}\left[   |M_{n}|\geq 2\sqrt{\epsilon\log(\frac{1}{p})} +2c\log(\frac{1}{p}) ,   \mathrm{Var}_{n}\leq \epsilon  \right]
	\\& \leq  \sum_{i=1}^{\log_{2}(\frac{nc^2}{\epsilon}) }\mathbb{P}\left[|M_{n}|\geq 2\sqrt{(i-1)\epsilon\log(\frac{1}{p})}+ 2\sqrt{\epsilon\log(\frac{1}{p})}+2c\log(\frac{1}{p}),\mathrm{Var}_{n}\leq i\epsilon  \right] +2p \nonumber
	\\ & \leq 2\left( \log_{2}(\frac{nc^2}{\epsilon})   +1\right)p .
	\end{align} 
\end{proof}

\section{Missing Proofs in Section~\ref{analysis}}\label{app:proof-ucbs}
\subsection{Proof of Proposition \ref{pro1}}\label{sec_pfpro1}

\begin{proof}[Proof of Proposition \ref{pro1}] Let $(s,a)$ and $j$ be fixed. 
 Let $\check{l}_{i}$  be the time when the $i$-th visit in the $j$-th type-\uppercase\expandafter{\romannumeral1} stage of $(s,a)$  occurs. Define $\check{b}^{(j)} = \min\{2\sqrt{\frac{H^2\iota}{\check{e}_{j}}},\frac{1}{1-\gamma} \}$ for $j\geq 2$. By Azuma's inequality, we obtain that for any $1\leq j\leq \check{J}$ and $(s,a)$,   with probability $1- 2p$, it holds that
\begin{align}
&\frac{1}{\check{e}_j} \sum_{i= 1}^{\check{e}_j} V^*(s_{\check{l}_{i}(s,a)+1  }) + \check{b}^{(j)}\geq P_{s,a}V^*;\label{eq_pro4_1}
\\ &\left| \frac{1}{\check{e}_j} \sum_{i= 1}^{\check{e}_j} \left(V_{\check{l}_{i}(s,a)}(s_{\check{l}_{i}(s,a)+1  })- P_{s,a}V_{\check{l}_i(s,a)}\right) \right| \leq  \check{b}^{(j)}  .\label{eq_pro4_1.1}
\end{align}


Similarly, letting $\bar{l}_{i}(s,a)$ be the time when the $i$-th visit in the $j$-th type-\uppercase\expandafter{\romannumeral2} stage of $(s,a)$ occurs, and defining $\bar{b}^{(j)} = \min\{2\sqrt{\frac{H^2\iota}{\bar{e}_{j}}},\frac{1}{1-\gamma} \}$ for $j\geq 1$, we have that  for any $1\leq j'\leq \bar{J}$ and $(s,a)$, with probability $1-2p$, it holds that 

\begin{align}
&\frac{1}{\bar{e}_{j'}} \sum_{i= 1}^{\bar{e}_{j'}} V^*(s_{\bar{l}^{(j')}_{i}(s,a)+1  }) +\bar{b}^{(j')}  \geq P_{s,a}V^*; \label{eq_pro4_2}
\\ & \left| \frac{1}{\bar{e}_{j'}} \sum_{i= 1}^{\bar{e}_{j'}} \left(V_{\bar{l}^{(j')}(s,a) }(s_{\bar{l}^{(j')}_{i}(s,a)+1  }) -P_{s,a}V_{\bar{l}^{(j')}(s,a) } \right) \right| \leq \bar{b}^{(j')}.\label{eq_pro4_2.1}
\end{align}

Define $\check{E}^{(j)}(s,a)$ be the event \eqref{eq_pro4_1} and \eqref{eq_pro4_1.1} hold for $(s,a,j)$, and $\bar{E}^{(j')}(s,a)$ be the event \eqref{eq_pro4_2} and \eqref{eq_pro4_2.1} hold for $(s,a,j')$. Let
\begin{align}E_{1}=(\cap_{s,a,1\leq j\leq \check{J}}\check{E}^{(j)}(s,a)) \cap (\cap_{s,a,1\leq j'\leq \bar{J}} \bar{E}^{(j')}(s,a))  .\label{eq_def_e1}
\end{align}
Then $\mathbb{P}\left[ E_{1} \right] \geq 1-S2A( \check{J}+\bar{J})p.$
We will  prove by induction conditioned on this event. 

For $t = 1$, $Q_{t}(s,a) = \frac{1}{1-\gamma} \geq Q^*(s,a)$ for any $(s,a)$. For $t\geq 2$, assume $Q_{t'}(s,a)\geq Q^*(s,a)$ for $1\leq t'< t$ and all $(s,a)$ pairs. If there exists $(j,s,a)$ such that the $j$-th  type-\uppercase\expandafter{\romannumeral1} update of $(s,a)$ happens at the $(t-1)$-th step, by \eqref{eq_pro4_1} we have that
\begin{align}
Q_{t}(s,a) &=\min\{ r(s,a)  + \frac{\gamma }{ \check{e}_j }\sum_{i= 1}^{\check{e}_j} V_{ \check{l}^{(j)}_{i}(s,a) }(   s_{\check{l}^{(j)}_{i}(s,a)+1  })+ \check{b}^{(j)} ,Q_{t-1}(s,a) \} \nonumber
\\  &  \geq \min \{ r(s,a) +    \frac{\gamma }{ \check{e}_j } \sum_{i= 1}^{\check{e}_j} V^*(s_{\check{l}^{(j)}_{i}(s,a)+1  }) + \check{b}^{(j)} ,Q_{t-1}(s,a)  \}\nonumber
\\ & \geq \min \{ r(s,a)+\gamma P_{s,a}V^*,Q_{t-1}(s,a) \} \nonumber
\\ & \geq  Q^*(s,a). \nonumber 
\end{align}
In a similar way, if there exists $(j,s,a)$ such that the $j$-th  type-\uppercase\expandafter{\romannumeral1} update of $(s,a)$ happens at the $(t-1)$-th step, by \eqref{eq_pro4_2}, it holds that $Q_{t}(s,a\geq Q*(s,a))$.
Otherwise, $Q_{t}(s,a) = Q_{t-1}(s,a)\geq Q^*(s,a)$ for any $(s,a)$.  The proof is completed.

\end{proof}




\subsection{Proof of Lemma \ref{lemma_bd_beta}}
We split $\mathcal{T}$ into $H$ separate subsets by define $\mathcal{V}_{k} = \{t\in \mathcal{T}: t \mod H = k\}$ for $k=0,1,2,\dots,H-1$. We will prove Lemma \ref{lemma_bd_beta} by showing that for each $k$, 
it holds that 
\begin{align}
\mathbb{P}\Big[  \sum_{t\in  \mathcal{V}_{k}  } \beta_t \geq 12KH^2\iota+24SAH^3B\ln(N_0)  ,\quad \sum_{t\geq 1} \tilde{\beta}_t < 3KH^2\iota\Big]\leq p   .   \label{eq_lemma_bd_beta1} 
\end{align}
If \eqref{eq_lemma_bd_beta1} holds for each $k$, then we have 
\begin{align}
   & \mathbb{P}\Big[  \sum_{t\in  \mathcal{T}  } \beta_t \geq 12KH^3\iota+24SAH^4B\ln(N_0)  ,\quad \sum_{t\geq 1} \tilde{\beta}_t < 3KH^2\iota\Big]   \nonumber
   \\ & \leq \sum_{k=0}^{H-1} \mathbb{P}\Big[  \sum_{t\in  \mathcal{V}_{k}  } \beta_t \geq 12KH^2\iota+24SAH^3B\ln(N_0)  ,\quad \sum_{t\geq 1} \tilde{\beta}_t < 3KH^2\iota\Big] \nonumber
   \\ & \leq Hp.
\end{align}

Let 
\[
U_t = \mathbb{I }\left[  \exists t'\in \{t,t+1,...,t+H-1  \} \text{ and } (s,a) \text{ such that } Q_{t'+1}(s,a)\neq Q_{t'}(s,a)      \right].
\]

We define  
\[ \hat{\beta}_t := 3H^2U_t +  (1-U_{t})\sum_{i=0}^{H-1} \gamma^i \left(2\mathrm{clip} ( \check{b}_{t}(s_{t+i},a_{t+i}),\frac{\epsilon}{8H})+ \gamma P_{s_{t+i},a_{t+i}  } \mathrm{clip}  (V_{ \underline{\rho}_{t}(s_{t+i},a_{t+i})            } -V_t, \frac{\epsilon}{8H}     )   \right ). \]
For fixed $k \in \{0, 1, 2, \dots, H-1\}$, we let
\[
\hat{\beta}^{k}_{t}: = \frac{\hat{\beta}_{tH+k} \mathbb{I} \left[  tH+k\in \mathcal{T}\right]  }{3H^2}.
\]
Noting  that $\hat{\beta}^{k}_{t}\in [0,1]$  is measurable with respect to $\mathcal{F}^{k}_{t}:=\mathcal{F}_{(t+1)H+k-1}$
and $\mathbb{E}\left[   \hat{\beta}^{k}_{t}  | \mathcal{F}^{k}_{t-1} \right] \geq \beta_{t}^{k}:= \frac{ \beta_{tH+k} \mathbb{I}[ tH+k\in \mathcal{T} ] }{3H^2}$, by Lemma~\ref{lemma_berbound} we obtain that for any $K\geq 1$,
\begin{align}
    \mathbb{P}\left[ \exists n,\sum_{t=1}^{n}\beta_{t}^{k} \geq 4K\iota + 16SAHB\ln(N_0), \quad \sum_{t=1}^{n}\hat{\beta}^{k}_{t} \leq K\iota +4SAHB\ln(N_0) \right]\leq p,\nonumber
\end{align}
which is equivalent to
\begin{align}
&\mathbb{P}\Big[ \exists n, \sum_{t=1}^n \beta_{t}\mathbb{I}\left[t\in \mathcal{V}_{k} \right] \geq 12KH^2\iota  +24SAH^3B\ln(N_0) , \nonumber
\\& \qquad \qquad \qquad \qquad \qquad \qquad \sum_{t=1}^n \hat{\beta}_{t} \mathbb{I} \left[  t\in \mathcal{V}_{k} \right]  \leq 3KH^2\iota +6SAH^3B\ln(N_0)  \Big]\leq p.  \label{eq_pat3}
\end{align}


By definition of $\hat{\beta}_t $, and noting that if $U_{t}=0$, $\check{b}_{t}(s_{t+i},a_{t+i} ) =\check{b}_{t+i}(s_{t+i},a_{t+i}) $ and 
$V_{ \underline{\rho}_{t}(s_{t+i},a_{t+i}) }     = V_{ \underline{\rho}_{t+i}(s_{t+i},a_{t+i})    } $ for any $0\leq i \leq H-1$, we have 
\begin{align}
\hat{\beta}_t &= 3H^2U_t +  (1-U_{t})\sum_{i=0}^{H-1} \gamma^i \left(2\mathrm{clip} ( \check{b}_{t}(s_{t+i},a_{t+i}),\frac{\epsilon}{8H})+ \gamma P_{s_{t+i},a_{t+i}  } \mathrm{clip}  (V_{ \underline{\rho}_{t}(s_{t+i},a_{t+i})            } -V_t, \frac{\epsilon}{8H}     )   \right ) \nonumber
\\ & \leq 3H^2U_{t}+     (1-U_{t})\sum_{i=0}^{H-1} \left(2\mathrm{clip} ( \check{b}_{t}(s_{t+i},a_{t+i}),\frac{\epsilon}{8H})+ \gamma P_{s_{t+i},a_{t+i}  } \mathrm{clip}  (V_{ \underline{\rho}_{t}(s_{t+i},a_{t+i})            } -V_t, \frac{\epsilon}{8H}     )   \right )     \nonumber
\\ & \leq 3H^2U_{t}+     \sum_{i=0}^{H-1} \left(2\mathrm{clip} ( \check{b}_{t+i}(s_{t+i},a_{t+i}),\frac{\epsilon}{8H})+ \gamma P_{s_{t+i},a_{t+i}  } \mathrm{clip}  (V_{ \underline{\rho}_{t+i}(s_{t+i},a_{t+i})            } -V_t, \frac{\epsilon}{8H}     )   \right ) .\nonumber
\end{align}
Then it follows that 
\begin{align}
\sum_{t\in \mathcal{V}_{k}}\hat{\beta}_t &\leq   \sum_{t\in \mathcal{V}_{k} } \sum_{i=0}^{H-1}\left(2\mathrm{clip}(\check{b}_{t+i}(s_{t+i},a_{t+i} ) ,\frac{\epsilon}{8H}) +P_{s_{t+i},a_{t+i}} \mathrm{clip}(     V_{        \underline{\rho}_{t+i}    (s_{t+i},a_{t+i})  }  -V_t      , \frac{\epsilon}{8H}   )  \right)   \nonumber 
\\ & \quad  +3H^2\sum_{t\in \mathcal{V}_{k} } U_{t}  \nonumber
\\ & \leq \sum_{t\geq 1} \left(2\mathrm{clip}(\check{b}_{t}(s_{t},a_{t}) ,\frac{\epsilon}{8H}) +P_{s_{t},a_{t}} \mathrm{clip}(     V_{        \underline{\rho}_{t}    (s_{t},a_{t})  }  -V_t      , \frac{\epsilon}{8H} )   \right)   +   6SAH^3B\ln(N_0) \label{eq_pat_ex6}
\\ &   = \sum_{t\geq 1} \tilde{\beta}_{t}   +   6SAH^3B\ln(N_0). \label{eq_pat_us1} 
\end{align} 
Here Inequality \eqref{eq_pat_ex6} holds because for each update, there is at most one element $t\in \mathcal{T}'$, such that $U_{t}=1$ due to this update.

By \eqref{eq_pat3} and \eqref{eq_pat_us1}, we have that 
\begin{align}
&\mathbb{P}\left[  \sum_{t\in  \mathcal{V}_{k}  } \beta_t \geq 12CH^2\iota+24SAH^3B\ln(N_0)  ,\quad  \sum_{t\geq 1} \tilde{\beta}_{t} < 3CH^2\iota\right]  \nonumber
\\ & \leq \mathbb{P}\left[ \sum_{t\in  \mathcal{V}_{k}  } \beta_t \geq 12CH^2\iota+24SAH^3B\ln(N_0)    , \quad \sum_{t\geq 1} \hat{\beta}_{t} <3CH^2\iota +6SAH^3B\ln(N_0) \right] 
\nonumber
\\ & \leq p. \nonumber
\end{align}
The proof is completed.

\subsection{Proof of  Lemma \ref{lemma_bound_b}}
\begin{proof}[Proof of  Lemma \ref{lemma_bound_b}]
	Recall that  $\check{b}_{t}(s_t,a_t) = 2\sqrt{\frac{H^2}{\check{n}_t(s_t,a_t) } \iota}$, so $\mathrm{clip}(\check{b}_t(s_t,a_t),\frac{\epsilon}{8H}) \leq 2\sqrt{\frac{H^2\iota}{\check{n}_t(s_t,a_t) } } \mathbb{I}[ \check{n}_t <  256\frac{H^4\iota}{\epsilon^2}  ]$. Noting that $\check{n}_t \geq \frac{n_t}{2HB}$, we obtain that
	\begin{align}
	\sum_{t\geq 1}\mathrm{clip}(\check{b}_{t}(s_t,a_t) ,\frac{\epsilon}{8H}) &  \leq SAH^2+ \sum_{t\geq 1}2\sqrt{\frac{2H^3B\iota}{n_t(s_t,a_t) }} \mathbb{I}\left[ n_{t}<  512\frac{H^5B\iota}{\epsilon^2}\right]   \nonumber
	\\& 	\leq   SAH^2+ 182\frac{SAH^4B\iota  }{\epsilon} .\nonumber
	\end{align}
	
\end{proof}

\subsection{Proof of Lemma \ref{lemma_bound_alpha}}

\begin{proof}[Proof of Lemma \ref{lemma_bound_alpha}]
	We fix $(s,a)$ and consider to bound $\alpha(s,a):=\sum_{t\geq 1}\alpha_{t}\mathbb{I}[(s_t,a_t)=(s,a),N_t(s,a)< N_0]$.
	Define $T(j,s,a)$ to be the set of indices of samples in the $j$-th  type-\uppercase\expandafter{\romannumeral1} stage with respect to $(s,a)$, i.e., $T(j,s,a):  = \{ t\geq 1|  (s_t,a_t)  = (s,a),  \sum_{i=1}^{j-1}\check{e}_i \leq N_{t}(s,a)<  \sum_{i=1}^{j}\check{e}_i  \}$.
	It  is then clear that for any $t\in T(j,s,a)$, $\underline{\rho}_{t}(s,a) = \rho(j-1,s,a)$ and $\overline{\rho}_{t}(s,a) = \rho(j+1,s,a)$. (The definitions of $\rho$, $\underline{\rho}_t$ and $\overline{\rho}_t$ are at the beginning of Section~\ref{analysis}.)

For $j\geq 2$, by the definition of $\alpha_{t}$ and  the fact $V_t$ is non-increasing in $t$, we obtain that 
	\begin{align}
	\sum_{t\in T(j,s,a)}\alpha_{t}\mathbb{I}[(s_t,a_t)=(s,a)] \leq  \check{e}_j P_{s,a}\left(  \mathrm{clip} (V_{\rho(j-1,s,a)}- V_{\rho(j+1,s,a)} ,\frac{\epsilon}{8H}  )\right), \nonumber
	\end{align}
	and therefore
	\begin{align}
	\alpha(s,a)\leq H\sum_{i=1}^{HB}\check{e}_i+ \sum_{HB+1\leq j \leq j_{\infty}(s,a)}\check{e}_j  P_{s,a} \left(  \mathrm{clip} (V_{\rho(j-1,s,a)}- V_{\rho(j+1,s,a)} ,\frac{\epsilon}{8H}  ) \right)  .												\label{eqbound_alpha} 
	\end{align}
	Here also recall that $j_t(s, a)$ is defined at the beginning of Section~\ref{analysis}, and $j_{\infty}(s, a)$ is defined to be $\max_{t \geq 1} j_{t}(s, a) \leq \check{J}$.
	
	We next define 
	\[
	j(s,a,s',\epsilon') :=\max\{j\leq j_{\infty}(s,a) | V_{\rho(j,s,a)}(s')-V^*(s')>\epsilon'    \}
	\]
	and
	\[
	\tilde{\tau}(s,a,s',\epsilon') := \sum_{i=1}^{j(s,a,s',\epsilon')} \check{e}_{i}
	\]
	for $s'\in \mathcal{S}$ and $\epsilon'>0$.  Let $\epsilon_{i}  = \frac{2^i \epsilon}{H}$ for $i= 0,1,2,\dots,k$ where $k = \lceil\log_2(\frac{H}{(1-\gamma)\epsilon})\rceil$.
	By \eqref{eqbound_alpha}, we have that
	\begin{align}
	\alpha(s,a)&\leq H\sum_{i=1}^{HB}\check{e}_i +\sum_{s'}\sum_{HB+1\leq j< j(s,a,s',\frac{\epsilon}{8H})+1 }\check{e}_j P_{s,a}(s') \left(V_{\rho(j-1,s,a)}(s') -V_{\rho(j+1,s,a)}(s')\right) \nonumber
	\\ & \leq O(BH^2\check{e}_1)  + \sum_{s'}\sum_{i=1}^{k} \sum_{ \max\{j(s,a,s',\epsilon_i), HB\}< j \leq j(s,a,s',\epsilon_{i-1}) } \check{e}_{j+1}P_{s,a}(s') \theta(s,a,s',j)\nonumber
	\\ & \leq O(BH^2 \check{e}_1) + \sum_{s'}\sum_{i=1}^{k} \frac{2\sum_{1\leq j\leq j(s,a,s',\epsilon_{i-1})}\check{e}_{j} }{HB}P_{s,a}(s') \sum_{ j(s,a,s',\epsilon_i)< j \leq j(s,a,s',\epsilon_{i-1}) } \theta(s,a,s',j)  \label{eqbound_alphaex1} 
	\\& =  O(BH^2\check{e}_1)  + \sum_{i=1}^{k} \frac{2 \tilde{\tau}(s,a,s',\epsilon_{i-1}) }{HB}P_{s,a}(s') \psi(s,a,s',i)  \nonumber
	\\ & \leq  O(BH^2\check{e}_1)  +\frac{4}{HB} \sum_{i=1}^{k}\tilde{\tau}(s,a,s',\epsilon_{i-1}) P_{s,a}(s') \epsilon_{i}, \label{eqbound_alphaex2}
	\end{align}
		where
	\begin{align} 
	&\theta(s,a,s',j): = V_{\rho(j,s,a)}(s')-V_{\rho(j+2,s,a)}(s') ,\nonumber \\
	&\psi(s,a,s',i):=\sum_{j(s,a,s',\epsilon_i)< j\leq  j(s,a,s',\epsilon_{i-1}) } \theta(s,a,s',j)\leq 2\epsilon_{i}. \nonumber 
	\end{align}
Here Inequality \eqref{eqbound_alphaex1} is by the fact $\check{e}_{j+1}\leq \frac{2}{HB}\sum_{i=1}^j \check{e}_{i}$ for $j\geq HB$ and Inequality \eqref{eqbound_alphaex2} is by the definition of $j(s,a,s',\epsilon_i)$.

In the next subsection, we will prove the following lemma.
	\begin{lemma}\label{martingale_gap}
	For any $\epsilon>0$,	with probability $1-(1+ SA(\check{J}+\bar{J}))p$ it holds that
	\begin{align}
	\sum_{s,a,s'}\tilde{\tau}(s,a,s',\epsilon)P_{s,a}(s')\leq O\left(\frac{SAH^5\ln(\frac{4H}{\epsilon})\iota }{\epsilon^2}+SAHB\ln(N_0)\right). \nonumber
	\end{align}
\end{lemma}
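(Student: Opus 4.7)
The plan is to relate $\tilde{\tau}(s,a,s',\epsilon)$ to a visit count stopped at a natural time associated with $s'$, use a martingale argument to convert the $P_{s,a}(s')$-weighted sum into a count of visits to $s'$ itself, and then invoke the baseline value-function learning result Lemma~\ref{lemma_lvf}. Concretely, for each $s' \in \mathcal{S}$ I would define the stopping time $\tau(s',\epsilon) := \min\{t \geq 1 : V_t(s') - V^*(s') \leq \epsilon\}$. Under the event $E_1$ of Proposition~\ref{pro1} the sequence $V_t(s')$ is non-increasing, so the condition $V_{\rho(j,s,a)}(s') > V^*(s') + \epsilon$ appearing in the definition of $j(s,a,s',\epsilon)$ is equivalent to $\rho(j,s,a) < \tau(s',\epsilon)$. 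Hence $\tilde{\tau}(s,a,s',\epsilon)$ is exactly the number of visits to $(s,a)$ through the end of the type-\uppercase\expandafter{\romannumeral1} stage containing $\tau(s',\epsilon)$; this overshoots $N_{\tau(s',\epsilon)}(s,a)$ by at most one stage, and the geometric growth $\check{e}_{j+1} \le (1 + 1/H)\check{e}_j$ after the first block bounds the overshoot by a constant times $N_{\tau(s',\epsilon)}(s,a)$ plus a warm-up $O(HB\ln N_0)$.

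For the conversion step, fix $s'$ and consider $X_t := \mathbb{I}[s_{t+1} = s'] - P_{s_t,a_t}(s')$, a bounded martingale difference sequence with respect to the natural filtration. Applying Freedman's inequality (Lemma~\ref{freedman}) or its self-normalised refinement (Lemma~\ref{self-norm}) at the stopping time $\tau(s',\epsilon)$, with a peeling argument over dyadic values of $\tau$ to handle the data-dependent horizon, gives
\[
\sum_{s,a} N_{\tau(s',\epsilon)}(s,a)\,P_{s,a}(s') \;\le\; N_{\tau(s',\epsilon)}(s') + O\!\left(\sqrt{N_{\tau(s',\epsilon)}(s')\,\iota} + \iota\right)
\]
with failure probability $O(p/S)$ per $s'$. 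A union bound over $s' \in \mathcal{S}$ contributes the leading ``$1$'' in the failure probability of the lemma. At this point the problem reduces to bounding $\sum_{s'} N_{\tau(s',\epsilon)}(s')$, for which I would invoke Lemma~\ref{lemma_lvf} to obtain $N_{\tau(s',\epsilon)}(s') = O(AH^5 \ln(4H/\epsilon)\iota/\epsilon^2 + AHB\ln N_0)$ uniformly in $s'$; this is where the $SA(\check{J}+\bar{J})$ per-stage concentration events enter. Summing over $s'$, absorbing the overshoot from Step~1 and the $\sqrt{\cdot}$ cross-term via AM--GM, yields the claimed $O(SAH^5 \ln(4H/\epsilon)\iota/\epsilon^2 + SAHB\ln N_0)$.

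The main obstacle is the martingale step at the data-dependent stopping time $\tau(s',\epsilon)$: a plain fixed-horizon Freedman bound cannot be used directly because $\tau$ depends on the process and can be large. The peeling argument (or equivalently optional stopping applied to the exponential supermartingale underlying Lemma~\ref{self-norm}) provides the required uniform-in-time control at the price of an extra $\log$ factor absorbed into $\iota$. A secondary piece of bookkeeping is verifying that the overshoot from Step~1, once summed over $(s,a,s')$, stays within the claimed $O(SAHB\ln N_0)$ rather than being dominated by the final, potentially very long, stage; this relies on the blockwise structure of the $\check{e}_j$ and on pairing each overshoot with its own $N_{\tau(s',\epsilon)}(s,a)$ so that the ratio is bounded by a constant.
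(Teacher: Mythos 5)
Your proposal is correct in its overall architecture and shares the paper's three essential ingredients: (i) relating $\tilde{\tau}$ to the number of visits made while $V(s')$ is still $\epsilon$-inaccurate, with the overshoot of the last type-\uppercase\expandafter{\romannumeral1} stage controlled by the geometric growth $\check{e}_{j+1}\leq \frac{2}{HB}\sum_{i\leq j}\check{e}_i$; (ii) a martingale step converting the $P_{s,a}(s')$-weighted count into actual arrivals at $s'$; (iii) Lemma~\ref{lemma_lvf} to bound the total number of visits to inaccurate states. Where you differ is in how step (ii) is organized: you run a Freedman-type bound (with peeling and optional stopping) separately for each $s'$ at the data-dependent time $\tau(s',\epsilon)$ and union-bound over $\mathcal{S}$, whereas the paper defines the indicator vector $\lambda_t(s)=\mathbb{I}[V_t(s)-V^*(s)>\epsilon]$, observes $\sum_{s,a,s'}\tau(s,a,s',\epsilon)P_{s,a}(s')=\sum_{t}P_{s_t,a_t}\lambda_t$, and applies the single anytime multiplicative bound of Lemma~\ref{lemma_berbound} to the scalar sequence $M_k=\lambda_k(s_{k+1})$. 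The paper's route sidesteps entirely the stopping-time obstacle you identify (the ``$\exists n$'' form of Lemma~\ref{lemma_berbound} is already uniform in time), costs only a single failure event $p$ rather than $S$ of them, and replaces your additive $\sqrt{N\iota}$ cross-terms by a clean constant-factor loss; the small price is an extra bookkeeping step bounding $\sum_t(\lambda_t(s_{t+1})-\lambda_{t+1}(s_{t+1}))$ by the number of $Q$-updates, which is where the $SAHB\ln(N_0)$ term actually originates (you attribute it instead to the stage overshoot, which by itself only costs $O(SAH)$). One imprecision to fix: Lemma~\ref{lemma_lvf} gives only the aggregate bound $\sum_{t}\mathbb{I}[V_t(s_t)-V^*(s_t)\geq\epsilon]\leq O(SAH^5\ln(\frac{4H}{\epsilon})\iota/\epsilon^2)$, not the per-state bound $N_{\tau(s',\epsilon)}(s')=O(AH^5\ln(\frac{4H}{\epsilon})\iota/\epsilon^2)$ ``uniformly in $s'$'' that you assert; the fix is immediate, since $\sum_{s'}N_{\tau(s',\epsilon)}(s')\leq\sum_t\mathbb{I}[V_t(s_t)-V^*(s_t)>\epsilon]$, so you should sum over $s'$ before invoking the lemma rather than after.
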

	
	Now, by \eqref{eqbound_alphaex2} and Lemma \ref{martingale_gap} we have that
	\begin{align}
  \sum_{t\geq 1}\alpha_{t}& = \sum_{s,a}\alpha(s,a)\nonumber
	\\ & \leq \sum_{s,a}\left(   BH^2\check{e}_1  +\frac{4}{HB} \sum_{s'}\sum_{i=1}^{k}\tilde{\tau}(s,a,s',\epsilon_{i-1}) P_{s,a}(s') \epsilon_{i} \right) \nonumber
	\\ &\leq O(SABH^3)  +O\left(\frac{4}{HB}\sum_{i=1}^k \left(\frac{SAH^5\ln(\frac{4H}{\epsilon})\iota}{\epsilon_{i-1}^2}+SAHB\ln(N_0) \right)\epsilon_{i} \right)  \label{eqbound_alphaex3}
	\\ & \leq O(SAB H^3) + O\left( \frac{1}{HB}\cdot \frac{SAH^6\ln(\frac{4H}{\epsilon})\iota}{\epsilon} +\frac{SA\ln(N_0)}{1-\gamma} \right) \nonumber 
	\\ & \leq     O\left(\frac{SAH^5\ln(\frac{4H}{\epsilon})\iota}{\epsilon B} +SABH^3+SAH\ln(N_0)\right)      .        \nonumber 
	\end{align}
The proof is completed.

\end{proof}

\subsection{Proof of Lemma \ref{martingale_gap}}

We first state the following auxiliary lemma, which  implies that we can learn the value function efficiently. The lemma is similar to Lemma 5 in \cite{zhang2020almost}, and is proved using the type-\uppercase\expandafter{\romannumeral2} updates. The proof of Lemma~\ref{lemma_lvf} will be presented immediately after this subsection.
\begin{lemma}\label{lemma_lvf}
	Conditioned on the successful event of $E_1$ defined in \eqref{eq_def_e1}, for any $\epsilon_{1} \in[\epsilon, \frac{1}{1-\gamma}]$ it holds that 
	\begin{align}\sum_{t=1}^{\infty}\mathbb{I} \left[ V_{t}(s_{t}) - V^*(s_{t}))\geq \epsilon_{1}  \right]\leq \sum_{t=1}^{\infty}\mathbb{I} \left[ Q_{t}(s_{t},a_t) - Q^*(s_{t},a_t))\geq \epsilon_{1}  \right] \leq O\left(\frac{SAH^5\ln(\frac{4H}{\epsilon})\iota}{\epsilon_{1}^2}\right) .\label{eq_lemma_lvf}\end{align}
\end{lemma}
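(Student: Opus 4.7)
The first indicator inequality is immediate: by the greedy choice $a_t=\arg\max_a Q_t(s_t,a)$ we have $V_t(s_t)=Q_t(s_t,a_t)$, while $V^*(s_t)=\max_a Q^*(s_t,a)\ge Q^*(s_t,a_t)$, so $V_t(s_t)-V^*(s_t)\le Q_t(s_t,a_t)-Q^*(s_t,a_t)$. It therefore suffices to bound the count of $t$ for which the $Q$-gap at $(s_t,a_t)$ is at least $\epsilon_1$.

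To bound the latter, the plan is to rerun the clipped-pseudo-regret reduction of Section \ref{analysis} using only the type-\uppercase\expandafter{\romannumeral2} stages, with the non-negative gap $Q_t-Q^*\ge 0$ (from Proposition \ref{pro1}) replacing the Bellman error $\phi_t$, and $\epsilon_1$ in place of $\epsilon$. Combining the update \eqref{equpdate2} with the concentration events \eqref{eq_pro4_2}-\eqref{eq_pro4_2.1} inside $E_1$, and extending by monotonicity of $Q_t$ between updates, I will obtain for every $t$ and $(s,a)$
\begin{align*}
Q_t(s,a)-Q^*(s,a)\ \le\ 2\bar b_t(s,a)+\gamma P_{s,a}\bigl(V_{\underline{\bar\rho}_t(s,a)}-V^*\bigr),
\end{align*}
where $\underline{\bar\rho}_t(s,a)$ denotes the start of the type-\uppercase\expandafter{\romannumeral2} stage preceding the current one. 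Writing the $V-V^*$ on the right as $(V-V_t)+(V_t-V^*)$ and iterating along the trajectory of $\pi_t$ for $H$ steps (so that the residual $\gamma^H/(1-\gamma)$ is at most $\epsilon_1/8$ by our choice of $H$) will produce the discounted-look-ahead inequality
\begin{align*}
V_t(s_t)-V^*(s_t)\ \le\ \sum_{s,a}w_t(s,a)\bigl[2\bar b_t(s,a)+\gamma P_{s,a}(V_{\underline{\bar\rho}_t(s,a)}-V_{\overline{\bar\rho}_t(s,a)})\bigr]+\epsilon_1/8,
\end{align*}
exactly parallel to \eqref{eq_pat_ex3.5}. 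The averaging argument in \eqref{eq_pat_ex3.5}-\eqref{eq_pat1} then clips each summand at threshold $\epsilon_1/(8H)$, so on the event $V_t(s_t)-V^*(s_t)\ge\epsilon_1$ the clipped right-hand side must be at least $\epsilon_1/2$. A type-\uppercase\expandafter{\romannumeral2} analogue of Lemma \ref{lemma_bd_beta} (with $U_t$ triggered only by type-\uppercase\expandafter{\romannumeral2} updates and with the factor $B$ replaced by $1$) will then reduce the count of bad $t$ to $O(H/\epsilon_1)$ times $\sum_t \mathrm{clip}(\bar b_t(s_t,a_t),\epsilon_1/(8H))+\sum_t P_{s_t,a_t}\mathrm{clip}(V_{\underline{\bar\rho}_t}-V_{\overline{\bar\rho}_t},\epsilon_1/(8H))$.

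The clipped-bonus sum I will control by the $\sum_n 1/\sqrt{n}$ calculation of Lemma \ref{lemma_bound_b}, using the geometric growth $\bar e_j=d_j\propto (1+1/H)^j$ to relate $\bar n_t$ to $n_t$ and noting that the bonus can clear the threshold only when $\bar n_t\lesssim H^4\iota/\epsilon_1^2$. The clipped $V$-difference sum I will handle by a dyadic decomposition into the levels $\epsilon_1\cdot 2^i/H$ for $0\le i\le\lceil\log_2(H/((1-\gamma)\epsilon_1))\rceil$, which will supply the $\ln(4H/\epsilon)$ factor in the final bound. This second step is the main obstacle, since it superficially looks circular: it measures exactly the $V$-convergence that the lemma asserts. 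I plan to break the circle by induction on the dyadic level $i$, using the trivial base case $V_t-V^*\le 1/(1-\gamma)$ together with the monotonicity of $V_t$ from Proposition \ref{pro1}: once $V_{\rho(j,s,a)}-V^*$ drops below a given level it cannot later exceed it, so the number of type-\uppercase\expandafter{\romannumeral2} stages in which the $V$-difference clears a given level is controlled by the count at the previous (coarser) level. Combining the two sums with the $O(H/\epsilon_1)$ prefactor from the reduction will then yield the stated $\tilde O(SAH^5\iota/\epsilon_1^2)$ bound.
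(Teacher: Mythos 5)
Your reduction to the $Q$-gap count and your starting inequality $Q_t(s,a)-Q^*(s,a)\le 2\bar b_t(s,a)+\gamma P_{s,a}(V_{\underline{\bar\rho}_t(s,a)}-V^*)$ are both fine, and the clipped-bonus half of your plan would indeed give the right order. The gap is in the second half: the term $\sum_t P_{s_t,a_t}\mathrm{clip}(V_{\underline{\bar\rho}_t}-V_{\overline{\bar\rho}_t},\epsilon_1/(8H))$ cannot be controlled the way you describe. When you telescope the $V$-differences over the type-\uppercase\expandafter{\romannumeral2} stages of $(s,a)$ within a dyadic band, each difference is weighted by the stage length $\bar e_{j+1}\approx \frac{1}{H}\sum_{i\le j}\bar e_i$, i.e.\ by the number of visits to $(s,a)$ \emph{before} $V(s')-V^*(s')$ drops below the level $\epsilon_1 2^i/H$. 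Monotonicity of $V$ (Proposition~\ref{pro1}) tells you these stages form consecutive blocks, but says nothing about how many visits to $(s,a)$ accumulate while $V(s')$ sits in a band: the convergence of $V(s')$ is driven by visits to $s'$, not by visits to $(s,a)$, so no per-$(s,a)$ induction over dyadic levels can bound this quantity. In the paper this is exactly the content of Lemma~\ref{martingale_gap}, whose proof passes from $\sum_t P_{s_t,a_t}\lambda_t$ to the on-trajectory count $\sum_t\lambda_t(s_t)$ via the martingale bound of Lemma~\ref{lemma_berbound} and then invokes Lemma~\ref{lemma_lvf} itself. Your route therefore remains circular, and the proposed base case ($V_t-V^*\le 1/(1-\gamma)$) gives no bound at all on the count at the coarsest level, so the induction does not start.

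The paper breaks the circle by a structurally different argument (following the weighted-counting technique of Dong et al.\ and Zhang et al.), which never introduces the stage-to-stage $V$-difference. From the type-\uppercase\expandafter{\romannumeral2} update it writes, as in \eqref{eq_lemma2_24}, $\delta^t\le \frac{\mathbb{I}[\bar n^t=0]}{1-\gamma}+2\bar b^t+\frac{\gamma}{\bar n^t}\sum_i(\delta^{\bar l^t_i+1}+\theta^{\bar l^t_i+1})$, using that $V_{\bar l^t_i+1}(s_{\bar l^t_i+1})-V^*(s_{\bar l^t_i+1})\le\delta^{\bar l^t_i+1}$ because the action taken at time $\bar l^t_i+1$ is greedy, with $\theta$ the one-step drop of $V$ at that state. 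It then takes an arbitrary non-negative weight sequence with $\sup_t w_t\le C$ and $\sum_t w_t\le W$, shows via Lemma~\ref{lemma_lemma2_aux_2} that $\sum_t w_t\delta^t\le\sum_t w_t'\delta^t+2SACH^2+60\sqrt{SAH^3WC\iota}$ where the back-propagated weights satisfy $\sup_t w_t'\le(1-\tfrac{1}{2H})C$ and $\sum_t w_t'\le(1-\tfrac{1}{2H})W$, iterates $2H\ln(4H/\epsilon)$ times to reach \eqref{eq_lemma2_38}, and finally plugs in $w_t=\mathbb{I}[\delta^t>\epsilon_1]$ to solve for the count $N(\epsilon_1)$. (It also needs Lemma~\ref{lemma_lemma2_aux} to dispose of the times with $N_t(s_t,a_t)\ge N_0$.) If you want to complete your proof you should adopt this weight-propagation recursion rather than the clipping machinery of Section~\ref{analysis}, which is downstream of Lemma~\ref{lemma_lvf} in the paper's logical order.
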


With the help of Lemma \ref{lemma_lvf}, we prove Lemma \ref{martingale_gap} as follows.

\begin{proof}[Proof of Lemma \ref{martingale_gap}]
	We start with defining \[
	\tau(s,a,s',\epsilon) := \sum_{t\geq 1}\mathbb{I}\left[(s_t,a_t)=(s,a), V_t(s')-V^*(s')>\epsilon\right].
	\]
	Recalling that 	$\tilde{\tau}(s,a,s',\epsilon) = \sum_{i=1}^{j(s,a,s',\epsilon)}\check{e}_i$, we have 
	\[  \tilde{\tau}(s,a,s',\epsilon)=\sum_{i=1}^{j(s,a,s',\epsilon)}\check{e}_i \leq H+ (1+\frac{2}{H})\sum_{i=1}^{j(s,a,s',\epsilon)-1}\check{e}_i\leq H +(1+\frac{2}{H}) \tau(s,a,s',\epsilon). \]
	So it suffices to prove that 
	\begin{align}
	\sum_{s,a,s'}\tau(s,a,s',\epsilon)P_{s,a}(s')\leq  O\left(\frac{SAH^5\ln(\frac{4H}{\epsilon})\iota }{\epsilon^2}+SAHB\ln(N_0)\right). \label{eqlemma_mg}
	\end{align}
	To prove \eqref{eqlemma_mg}, we define $\lambda_t$ to be the vector such that $\lambda_{t}(s) =\mathbb{I}\left[ V_t(s)-V^*(s)>\epsilon  \right]$.  
	Note that  
	\[\sum_{s,a,s'}\tau(s,a,s',\epsilon)P_{s,a}(s') = \sum_{t\geq 1}P_{s_t,a_t}\lambda_{t}\]
	and due to the infrequent updates, we have that
	\[\sum_{t\geq 1}\left(\lambda_t(s_{t+1})-\lambda_{t+1}(s_{t+1}) \right)\leq \sum_{t\geq 1} \mathbb{I}\left[ V_{t}(s_{t+1})\neq V_{t+1}(s_{t+1})\right] \leq 2SAHB\ln(N_0).\] 
	For $C$ a large enough constant,  we obtain that
	\begin{align}
	&	\mathbb{P}\left[  \sum_{s,a,s'}\tau(s,a,s',\epsilon)P_{s,a}(s')\geq 4C\frac{SAH^5\ln(\frac{4H}{\epsilon})\iota}{\epsilon^2}+8SAHB\ln(N_0)   \right]   \nonumber
	\\ &  =  \mathbb{P} \left[  \sum_{t\geq 1}P_{s_t,a_t}\lambda_{t} \geq 4C\frac{SAH^5\ln(\frac{4H}{\epsilon})\iota}{\epsilon^2}+8SAHB\ln(N_0)  \right] \nonumber 
	\\ & \leq  \mathbb{P} \left[  \sum_{t\geq 1}P_{s_t,a_t}\lambda_{t} \geq 4C\frac{SAH^5\ln(\frac{4H}{\epsilon})\iota}{\epsilon^2}+8SAHB\ln(N_0) , \sum_{t\geq 1}\lambda_{t}(s_{t+1})\leq C\frac{SAH^5\ln(\frac{4H}{\epsilon})\iota}{\epsilon^2} +2SAHB \ln(N_0)   \right] \nonumber 
	\\ & \quad  +  \mathbb{P}\left[   \sum_{t\geq 1}\lambda_{t}(s_{t+1})> C\frac{SAH^5\ln(\frac{4H}{\epsilon})\iota}{\epsilon^2} +2SAHB \ln(N_0)    \right]  \nonumber 
	\\ & \leq  p+ \mathbb{P}\left[ \sum_{t\geq 1}\lambda_{t}(s_{t}) \geq  C\frac{SAH^5\ln(\frac{4H}{\epsilon})\iota}{\epsilon^2} \right] \label{eq_boundreg_ex1}
	\\ & \leq p+ \mathbb{P}\left[E_{1}\right]\label{eq_boundreg_ex2}
	\\ &\leq p+ SA(\check{J}+\bar{J})p \label{eq_boundreg_ex3},
	\end{align}
	where Inequality \eqref{eq_boundreg_ex1} is by Lemma \ref{lemma_berbound} with $M_k = \lambda_{k}(s_{k+1})$ and $\mathcal{F}_{k}  = \sigma(s_1,a_1,....,s_{k},a_{k},s_{k+1})$ for $k\geq 1$, Inequality \eqref{eq_boundreg_ex2} is by Lemma \ref{lemma_lvf} and Inequality \eqref{eq_boundreg_ex3} is by Proposition \ref{pro1}.
	The proof is completed.
\end{proof}

\subsection{Proof of Lemma~\ref{lemma_lvf}}

The proof of Lemma \ref{lemma_lvf} uses similar techniques as presented in  in Appendix.B of \cite{dong2019q}  and  Appendix.B.2 of \cite{zhang2020almost}. However, it requires more twists since the $Q$ function is only updated by at most $SA(\check{J}+ \bar{J})$  times for each state-action pair.

We first introduce a few simplified notations. Define $\delta^{t} := Q_{t}(s_t,a_t)-Q^*(s_{t},a_t)$. Clearly $\delta^t \geq V_t(s_t)-V^*(s_t)$ and $\sum_{t\geq 1}\mathbb{I}[\delta^t \geq x]\geq \sum_{t\geq 1}\mathbb{I}[V_t(s_t)-V^*(s_t) \geq x ] $ for any $x\geq 0$. Throughout this subsection, we use $\bar{n}^t$, $\bar{b}^t$ and $\bar{l}^{t}_{i}$ as short hands of $\bar{n}_t(s_t,a_t)$, $\bar{b}_t(s_t,a_t)$ and  $\bar{l}_{t,i}(s_t,a_t)$ respectively.

Conditioned on  $E_{1}$  defined in \eqref{eq_def_e1}, 
we note that \eqref{eq_pro4_1} and \eqref{eq_pro4_2} hold for any $j\geq 1$ and $j'\geq 1$ respectively.
We will use these inequalities without additional explanation.

Let $\mathcal{T}_{1} := \{t\geq 1 | N_{t}(s_t,a_t) \geq N_0   \}$.
 We then have the following lemma.
\begin{lemma} \label{lemma_lemma2_aux}
	Conditioned on successful event $E_1$ defined in \eqref{eq_def_e1}, it holds that for any $t\in \mathcal{T}_{1}$ (if $\mathcal{T}_1$ is not empty)
	\begin{align}  Q_{t}(s_{t},a_t)-Q^*(s_{t},a_t)\leq \frac{\epsilon}{2H} . \nonumber \end{align}
\end{lemma}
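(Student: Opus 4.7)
The plan is to peel off one application of the Bellman-style update to $Q(s_t, a_t)$ from the latest type-\uppercase\expandafter{\romannumeral2} stage and then argue, by a layered induction on the refinement of the value estimates, that the propagated error is small enough. I emphasize that Lemma~\ref{lemma_lvf} cannot be used here, since the current lemma is itself an ingredient in the proof of Lemma~\ref{lemma_lvf}; the argument will rely solely on Proposition~\ref{pro1}, the update rule \eqref{equpdate2}, and the concentration bound \eqref{eq_pro4_2.1} provided by $E_1$.

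First I would isolate the most recent type-\uppercase\expandafter{\romannumeral2} update to $Q(s_t, a_t)$ before time $t$. Because $N_t(s_t, a_t) \geq N_0$ and the type-\uppercase\expandafter{\romannumeral2} stages grow geometrically, $\bar{e}_{j+1} = \lfloor(1 + 1/H)\bar{e}_j\rfloor$, that last stage contains $\bar{e}_{\bar{j}^\ast} = \Omega(N_0/H)$ samples. Writing $t_0 + 1 \leq t$ for the update time, monotonicity of $Q$ in $t$ (Proposition~\ref{pro1}) gives $Q_t(s_t, a_t) \leq Q_{t_0 + 1}(s_t, a_t)$, and combining \eqref{equpdate2} with \eqref{eq_pro4_2.1} and $Q^*(s_t, a_t) = r(s_t, a_t) + \gamma P_{s_t, a_t} V^*$ yields
\begin{align*}
Q_t(s_t, a_t) - Q^*(s_t, a_t) \;\leq\; 2\bar{b}^{(\bar{j}^\ast)} + \gamma P_{s_t, a_t}\!\Bigl(\tfrac{1}{\bar{e}_{\bar{j}^\ast}}\textstyle\sum_{i=1}^{\bar{e}_{\bar{j}^\ast}} V_{\bar{l}^{(\bar{j}^\ast)}_i(s_t, a_t)} - V^*\Bigr).
\end{align*}
The bonus piece is immediate: $2\bar{b}^{(\bar{j}^\ast)} \leq 4\sqrt{H^3 \iota / N_0} \leq \epsilon/(4H)$ once the constant $c_1$ in $N_0 = c_1 \cdot S^3 A H^5 \ln(4H^2 S/\epsilon)\iota/\epsilon^2$ is chosen sufficiently large.

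The remaining work is to show that the propagated term $\gamma P_{s_t, a_t}(\bar{V} - V^*) \leq \epsilon/(4H)$, and this I plan to obtain by a layered induction. Define decreasing error thresholds $\epsilon_0 = 1/(1-\gamma) > \epsilon_1 > \cdots > \epsilon_K = \epsilon/(2H)$ by $\epsilon_{k+1} = \gamma \epsilon_k + \epsilon/(8H)$, so that $K = \Theta(H \ln(H/((1-\gamma)\epsilon)))$, together with an increasing sequence of visit thresholds $T_0 < T_1 < \cdots < T_K = N_0$. The inductive claim is: for all $u \geq 1$ and $(s,a)$ with $N_u(s,a) \geq T_k$, $Q_u(s,a) - Q^*(s,a) \leq \epsilon_k$. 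The base case is trivial from initialization and Proposition~\ref{pro1}. The step reapplies the one-step inequality above at level $k+1$ and decomposes the mass of $P_{s,a}$ into states $s'$ whose greedy action $\pi_u(s')$ has already reached $T_k$ visits (bounded by $\gamma \epsilon_k$ by induction) and the remaining ``under-explored'' mass, which must be shown to contribute at most $\epsilon/(8H)$.

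The main obstacle is controlling this under-explored mass. The difficulty is that a state $s'$ with nontrivial weight under $P_{s_t, a_t}$ might nevertheless have $N_u(s', \pi_u(s')) < T_k$, because visits to $s'$ can be split across different greedy actions as the policy evolves. To handle this I would (i) use empirical concentration of the next-state frequencies over the $\geq N_0$ visits to $(s_t, a_t)$ to approximate $P_{s_t, a_t}(\cdot)$ by its empirical version, (ii) observe that the total number of times any $(s', a')$ can be visited while still holding $N(s', a') < T_{k}$ is at most $SA \cdot T_k$, which after dividing by $N_0$ and multiplying by the uniform envelope $1/(1-\gamma)$ on the error must be pushed below $\epsilon/(8H)$. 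The $S^3$ factor in $N_0$, together with the polynomial-in-$k$ growth of $T_k$, is exactly what makes this accounting close for all $k \leq K$. Iterating the induction $K$ times and combining with the bonus bound then gives $Q_t(s_t, a_t) - Q^*(s_t, a_t) \leq \epsilon/(4H) + \epsilon/(4H) = \epsilon/(2H)$, as required.
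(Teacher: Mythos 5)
Your opening step (peeling off the last type-\uppercase\expandafter{\romannumeral2} update, bounding the bonus by $\epsilon/(4H)$ via $\bar e_{\bar j^*}=\Omega(N_0/H)$) matches the paper, but from there you diverge onto a ``knownness''-style layered induction over visit-count thresholds $T_0<\dots<T_K=N_0$, and that is where the argument does not close. In your inductive step the under-explored mass is bounded by counting: the number of sample times $\bar l_i$ in the stage of $(s,a)$ whose successor state is under-explored at level $k$ is at most $SA\cdot T_k$ globally, the stage has $\bar n\approx T_{k+1}/(2H)$ samples, and the error envelope on under-explored states is the trivial $1/(1-\gamma)\approx H$. So the under-explored contribution is of order $SA\,T_k H^2/T_{k+1}$, and forcing this below the additive slack $\epsilon/(8H)$ (in fact below $(1-\gamma)\epsilon/(8H)$ --- note your recursion $\epsilon_{k+1}=\gamma\epsilon_k+\epsilon/(8H)$ has fixed point $\epsilon/(8H(1-\gamma))\gg\epsilon/(2H)$, so it cannot even reach $\epsilon_K=\epsilon/(2H)$ as written) requires $T_{k+1}\gtrsim SAH^3T_k/\epsilon$ per level. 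Since contracting the error from $1/(1-\gamma)$ down to $\Theta(\epsilon/H)$ needs $K=\Theta(H\ln(H/((1-\gamma)\epsilon)))$ levels, this forces $T_K\gtrsim(SAH^3/\epsilon)^{K}$, exponential in $H$, which is incompatible with $T_K=N_0=\mathrm{poly}$. Your assertion that ``polynomial-in-$k$ growth of $T_k$'' together with the $S^3$ in $N_0$ makes the accounting close is the missing (and, I believe, false) step. A secondary issue: the lemma is a deterministic consequence of $E_1$, but your step (i) invokes concentration of empirical next-state frequencies, an event not contained in $E_1$.

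The paper avoids this multiplicative blow-up entirely by working with weighted sums rather than per-state worst-case thresholds. It defines $\delta^t=Q_t(s_t,a_t)-Q^*(s_t,a_t)$, derives the one-step recursion $\sum_t w_t\delta^t\le\sum_t w_t'\delta^t+2SACH^2+60\sqrt{SAH^3WC\iota}$ in which the \emph{new} weights satisfy $\sup_t w_t'\le(1-\tfrac1{2H})C$ and $\sum_t w_t'\le(1-\tfrac1{2H})W$, and iterates $2H\ln(4H^2S/\epsilon)$ times so the weight mass decays geometrically while each iteration only adds the same additive term. Choosing $w_t=\mathbb{I}[s_t=s^{(i)}]$ and arguing by contradiction (if the error at $s^{(i)}$ exceeded $\epsilon i/(2HS)$, all $N_0$ visits would contribute at least that much to the weighted sum, violating the bound by the choice of $N_0$) gives the result via an induction over the $S$ states in the order they reach $N_0$ visits, with error budget $\epsilon/(2HS)$ per state. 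If you want to salvage your outline, you essentially need to replace the visit-count layering by this weighted-sum contraction; as proposed, the proof has a genuine gap.
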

\begin{proof}
	For each $i = 1, 2, \dots, S$, if there are at least $i$ states with total visit number greater or equal to $N_0$, we let $s^{(i)}$ be the $i$-th such state (sorted in the order of time to reach $N_0$) and let $T_i$ be the corresponding time (i.e., $n_{T_i}(s^{(i)}) = N_0 \text{ and } s_{T_i}= s^{(i)}$ ). Otherwise we let $s^{(i)}$ be a random state in $\mathcal{S} \setminus \{s^{(1)},...,s^{(i-1)} \}$ and set $T_i   = \infty$.  
	 

It suffices prove that $V_{T_{i}}(s^{(i)})-V^*(s^{(i)})\leq \frac{\epsilon}{2H}$ for $s^{(i)}$ with finite $T_i$. We prove this by applying induction on $i$ to prove the stronger statement that $V_{T_{i}}(s^{(i)})-V^*(s^{(i)})\leq \frac{\epsilon i}{2HS}$.
	
\noindent \underline{\it Base case ($i = 1)$:} Note that for any $t\notin  \mathcal{T}_{1}$, we have following inequality by the update rule \eqref{equpdate2} and event $E_1$,
	\begin{align}
	\delta^{t} & =  Q_{t}(s_{t},a_{t})-Q^*(s_{t},a_{t})\nonumber
	\\ & \leq  \frac{\mathbb{I}\left[ \bar{n}^t= 0 \right]}{1-\gamma} + \left(\bar{b}^t + \frac{\gamma}{\bar{n}^t}\sum_{i=1}^{\bar{n}^t }V_{  \bar{l}^{t}_{i}    }( s_{ \bar{l}^{t}_{i} +1   })  - P_{s_{t},a_{t}}V^*      \right)  \nonumber
	\\ & \leq \frac{\mathbb{I}\left[ \bar{n}^t= 0 \right]}{1-\gamma} + \left(2\bar{b}^t + \frac{\gamma }{\bar{n}^t}\sum_{i=1}^{\bar{n}^t } \left(V_{  \bar{l}^{t}_{i}    }(  s_{ \bar{l}^{t}_{i} +1 }    ) - V^*( s_{ \bar{l}^{t}_{i} +1 }  )   \right)   \right)  \nonumber
	\\ &  \leq  \frac{\mathbb{I}\left[ \bar{n}^t= 0 \right]}{1-\gamma}  + 2\bar{b}^{t} +\frac{\gamma}{\bar{n}^{t}}\sum_{i=1}^{ \bar{n}^t } \left( V_{  \bar{l}^{t}_{i}+1    }(  s_{ \bar{l}^{t}_{i} +1 }  )-Q^*( s_{ \bar{l}^{t}_{i} +1 },a_{\bar{l}^{t}_{i} +1})  + \theta^{ \bar{l}^{t}_{i}+1}      \right) \nonumber
	\\ &  = \frac{\mathbb{I}\left[ \bar{n}^t= 0 \right]}{1-\gamma}  + 2\bar{b}^{t} +\frac{\gamma}{\bar{n}^{t}}\sum_{i=1}^{ \bar{n}^t } (\delta^{  \bar{l}^{t}_{i}+1     }  + \theta^{ \bar{l}^{t}_{i}+1}      )  \label{eq_lemma2_24},
	\end{align}
	where we define $\theta^{ \bar{l}^{t}_{i}+1}: = V_{\bar{l}^t_i}(  s_{ \bar{l}^{t}_{i} +1 }  )- V_{\bar{l}^t_i+1}(  s_{ \bar{l}^{t}_{i} +1 }  ) $ . 

	It is obvious that $t\notin \mathcal{T}_1$ if $t<T_{1}$. Then for  any non-negative weights $\{w_{t}\}_{t\geq 1}$, we have that 
	\begin{align}
	\sum_{t< T_1 }w_{t}\delta^{t} &\leq \sum_{t< T_1 }\frac{ w_{t} \mathbb{I}[\bar{n}^t=0]}{1-\gamma}+2\sum_{t < T_{1} }w_{t}\bar{b}^t 	 + \sum_{t< T_1}w'_{t}(\delta^{t}+\theta^{t}) , \label{eq_lemma2_25}
	\end{align}
	where 
	\begin{align}
	w'_{t} = \gamma\sum_{u< T_1} \frac{1}{\bar{n}^u}\sum_{i=1}^{\bar{n}^t}\mathbb{I}\left[ t = \bar{l}^u_i +1 \right].\label{eq_lemma2_su}
	\end{align}
	If we choose a sequence of non-negative weights $\{w_{t}\}_{t\geq 1}$ such that $\sup_{t< T_1 }w_t\leq C$ and $\sum_{t< T_1 }w_t \leq W$ for two positive constant $C$ and $W$, then for all $t \geq 1$, we have that
	\begin{align}
	w'_{t}\leq \gamma(1+\frac{1}{H})C\leq (1-\frac{1}{2H})C,
	\end{align}
	and
	\begin{align}
	\sum_{t< T_1}w'_{t}\leq  \gamma(1+\frac{1}{H})W\leq (1-\frac{1}{2H})W.
	\end{align}
	
\begin{lemma}\label{lemma_lemma2_aux_2}
    Let $\{w_{t}\}_{t\geq 1}$ be a sequence of non-negative weights such that $0\leq w_{t}\leq C$ for any $t\notin \mathcal{T}_{1}$ and $\sum_{t\notin \mathcal{T}_{1}}w_{t}\leq W$, then it holds that
    	\begin{align}
	 \sum_{t\notin \mathcal{T}_{1}}\frac{ w_{t} \mathbb{I}[\bar{n}^t=0]}{1-\gamma} &\leq \frac{CSAH}{1-\gamma}\leq CSAH^2,\\
	 2\sum_{t\notin \mathcal{T}_{1}}w_{t}\bar{b}^t &\leq  40(1+\frac{1}{H})\sqrt{SAH^3WC\iota} \leq 60\sqrt{SAH^3WC\iota}  ,\label{eq_lemma2_ex2}\\
	 \sum_{t\notin \mathcal{T}_{1}}w_{t}\theta^{t}&\leq  \frac{SAC}{1-\gamma}\leq SCH .
	\end{align}
\end{lemma}
\begin{proof}
The first inequality holds because $\sum_{t\geq 1}\mathbb{I}[\bar{n}^{t}=0]\leq SAH$, and the third inequality holds because $\sum_{t\geq 1}\mathbb{I}\left[s_{t}=s \right]\theta^{t}\leq 1/(1-\gamma)$. For the second inequality, we note that $\bar{b}^{t}\leq 2\sqrt{H^2\iota/\bar{n}^{t}}$, it then follows that 
\begin{align}
\sum_{t\notin \mathcal{T}_{1}}w_{t}\bar{b}^{t} &\leq 2\sqrt{H^2\iota}\sum_{t \notin \mathcal{T}_{1}} w_{t}\sqrt{1/\bar{n}^{t} }   \nonumber
\\ & =2\sqrt{H^2\iota}\sum_{s,a}\sum_{t\notin \mathcal{T}_{1}}\mathbb{I}\left[(s_{t},a_{t})=(s,a) \right]w_{t}\sqrt{1/\bar{n}^{t} }.  \nonumber
\end{align}
Let $\tilde{w}(s,a)   = \sum_{t\notin \mathcal{T}_{1}}w_{t}\mathbb{I}\left[(s_{t},a_{t})=(s,a)\right]$. We fix $\tilde{w}(s,a)$ and consider to maximize 
\begin{align}\sum_{t\notin \mathcal{T}_{1}}\mathbb{I}\left[(s_t,a_t)=(s,a) \right]w_{t}\sqrt{1/\bar{n}^t} . 
\nonumber\end{align}
Define $\bar{T}(j,s,a):= \{ t\geq 1|  (s_t,a_t)  = (s,a),  \sum_{i=1}^{j-1}\bar{e}_j \leq  N_{t}(s,a)< \sum_{i=1}^{j}\bar{e}_j  \}$. Note that for each $j\geq 2$, $\sum_{t\notin \mathcal{T}_{1},t\in \bar{T}(j,s,a)}w_{t}\leq (1+\frac{1}{H})C\bar{e}_{j-1} $. By rearrangement inequality we have that,
\begin{align}
\sum_{t\notin \mathcal{T}_{1}}\mathbb{I}\left[(s_t,a_t)=(s,a) \right]w_{t}\sqrt{1/\bar{n}^t} &=\sum_{j\geq 2}\left(\sum_{t\notin \mathcal{T}_{1},t\in \bar{T}(j,s,a)}w_{t}\right) \sqrt{1/\bar{e}_{j-1}} \nonumber
\\ & \leq C(1+\frac{1}{H})\sum_{j\geq 1}\sqrt{e_{j}}\mathbb{I}\left[ \sum_{i=1}^{j-1}Ce_{i}\leq \tilde{w}(s,a)   \right] \nonumber
\\ &\leq 10(1+\frac{1}{H})\sqrt{HC\tilde{w}(s,a)}.\nonumber
\end{align}
By Cauchy-Schwartz inequality, we obtain that
\begin{align}
\sum_{t\notin \mathcal{T}_{1}}w_{t}\bar{b}^{t}\leq 20(1+\frac{1}{H})\sqrt{H^3C\iota}\sum_{s,a}\sqrt{\tilde{w}(s,a)}\leq 20(1+\frac{1}{H})\sqrt{SAH^3WC\iota}.\nonumber
\end{align}
The proof is completed.

\end{proof}

By Lemma \ref{lemma_lemma2_aux_2}	we derive that
	\begin{align}
	\sum_{t< T_{1}}w_{t}\delta^{t} \leq \sum_{t< T_1}w'_{t}\delta^t +2SACH^2 +60 \sqrt{SAH^3WC\iota} \label{eq_lemma2_26}.
	\end{align}
	By iteratively unrolling \eqref{eq_lemma2_26} for $2H\ln(\frac{4H^2S}{\epsilon})$ times and setting  the initial weights by $w_{t} = \mathbb{I}\left[ s_{t}= s^{(1)} \right]$ so that $C=1$ and $W = N_0$, we have
	\begin{align}
	\sum_{t<T_1}\mathbb{I}\left[s_{t}=s^{(1)}\right]\delta^{t}\leq 2H\ln(\frac{4H^2S}{\epsilon})\left(2SAH^2 +60\sqrt{SAH^3N_0\iota} \right) +\frac{\epsilon\sum_{t<T_{1}}\mathbb{I}\left[s_{t}=s^{(1)} \right] }{4HS}.
	\end{align}
	If $V_{T_1}(s^{(1)})- V^*(s^{(1)})>\frac{\epsilon}{2HS}$, 	then $\mathbb{I}\left[ s_t=s^{(1)} \right] \delta^{t}>\frac{\epsilon}{2HS}$	for $t<T_1$ due to the fact that $V_t$ is non-increasing in $t$, which implies that
	\begin{align}
 \frac{\epsilon 	N_0}{4HS} \leq 2H\ln(\frac{4H^2S}{\epsilon})(2SAH^2 +60\sqrt{SAH^3N_0\iota} ),
	\end{align}
	which contradicts to the definition of $N_0$ ($N_0 = c_1 \frac{SAH^5S^2 \ln(\frac{4H^2S}{\epsilon})\iota}{\epsilon^2}$) .
As a result, we have that $V_{T_1}(s^{(1)})\leq  V^*(s^{(1)})+\frac{\epsilon}{2HS}$. 
	 
\noindent \underline{\it Induction step:} Now suppose that $V_{T_i}(s^{(i)})-V^*(s^{(i)}) \leq \frac{k\epsilon}{2HS}$ holds for all $1\leq i \leq k $ for some $k\geq 1$. We will prove that $V_{T_{k+1}}(s^{(k+1)})-V^{*}(s^{(k+1)})\leq \frac{(k+1)\epsilon}{2HS}$ assuming that $T_{k+1}\neq \infty$. 
	 
Note that if $t<T_{k+1}$ and $T\in \mathcal{T}_{1}$, $\delta^{t} \leq \frac{k\epsilon}{2HS}$. It then follows that for  non-negative weights  $\{w_{t}\}_{t\geq 1}$ such that $\sup_{t< T_{k+1} }w_t\leq C$ and $\sum_{t< T_{k+1} }w_t \leq W$,
	 \begin{align}
	 \sum_{t<T_{k+1}}w_{t}\delta^{t}  &\leq  \sum_{t<T_{k+1},t\notin \mathcal{T}_{1}}w_{t}\delta^{t}+ \sum_{t<T_{k+1},t\in \mathcal{T}_{1}}\frac{w_{t}k\epsilon}{2HS} \nonumber \\
	 & \leq \sum_{t<T_{k+1},t\notin \mathcal{T}_{1}} \left(\frac{w_{t} \mathbb{I}\left[\bar{n}^t = 0 \right] }{1-\gamma}  +2w_{t}\bar{b}^{t} \right) +\sum_{t<T_{k+1}}w'_{t}(\delta^{t}+\theta^{t})+ \sum_{t<T_{k+1},t\in \mathcal{T}_{1}}\frac{w_{t}k\epsilon}{2HS} \label{eq_lemma2_27} \\
	 & \leq 2SACH^2 +60\sqrt{SAH^3 W_{1}} +\sum_{t<T_{k+1}}w'_{t}\delta^{t}+\sum_{t<T_{k+1},t\in \mathcal{T}_{1}}\frac{w_{t}k\epsilon}{2HS} \label{eq_lemma2_28} \\
	 & \leq  2SACH^2 +60\sqrt{SAH^3 W_{1}} +\sum_{t<T_{k+1}}w'_{t}\delta^{t}+ \frac{(W-W_{1} )  k\epsilon  }{2HS},\label{eq_lemma2_29}
	 \end{align}
	 where
	 $W_{1} = \sum_{t<T_{k+1},t\notin \mathcal{T}_{1} }w_{t}$  and
	 $w'_{t} = \gamma\sum_{u< T_{k+1},u\notin \mathcal{T}_{1}} \frac{1}{\bar{n}^u}\sum_{i=1}^{\bar{n}^t}\mathbb{I}\left[ t = \bar{l}^u_i +1 \right].$ Here, Inequality \eqref{eq_lemma2_28} is by Lemma \ref{lemma_lemma2_aux_2}.
	 Because $w'_{t}\leq (1-\frac{1}{2H})C  ,\forall t\geq 1$ and $\sum_{t<T_{k+1},t\notin \mathcal{T}_{1}}w'_{t}\leq (1-\frac{1}{2H})W_{1}$, by iteratively applying \eqref{eq_lemma2_29} for $2H\ln(\frac{3H^2S}{\epsilon})$ times, we have that
	 \begin{align}
	 	\sum_{t<T_{k+1}}w_{t}\delta^{t} \leq  2H\ln(\frac{4H^2S}{\epsilon}) \left(2SAH^2 +60\sqrt{SAH^3N_0\iota} \right) + \frac{Wk\epsilon}{2HS} +\frac{ W\epsilon  }{4HS}. \label{eq_lemma2_30}
	 	\end{align}
	 	If $V_{T_{k+1}}(s^{(k+1)})-V^*(s^{(k+1)})> \frac{(k+1)\epsilon}{2HS}$, 
	 	choosing $w_{t} = \mathbb{I}\left[s_t = s^{(k+1)},t<T_{k+1} \right]$ so that $C=1$ and $W=N_0$ in \eqref{eq_lemma2_30}, we obtain that
	 	\begin{align}
	 	\frac{N_0(k+1)\epsilon}{2HS} \leq   2H\ln(\frac{4H^2S}{\epsilon}) \left(2SAH^2 +60\sqrt{SAH^3N_0\iota} \right) + \frac{N_0k\epsilon}{2HS} +\frac{ N_0\epsilon  }{4HS}, \nonumber
	 	\end{align}
	 which again contradicts to the definition of $N_0$. Therefore we have proved that $V_{T_{k+1}}(s^{(k+1)})-V^{*}(s^{(k+1)})\leq \frac{(k+1)\epsilon}{2HS}$.
\end{proof}


\begin{proof}[Proof of Lemma~\ref{lemma_lvf}]
Let $\epsilon_{1}\in [\epsilon,\frac{1}{1-\gamma}]$ be fixed. Let $\{ w_{t}\}_{t\geq 1}$ be a non-negative sequence such that $\sup_{t\geq 1}w_{t}\leq C$ and $\sum_{t\geq 1}w_{t}\leq W$. Following the derivation of \eqref{eq_lemma2_26} we have that

\begin{align}
\sum_{t \geq 1}w_{t}\delta^{t} & = \sum_{t\geq 1,t\notin \mathcal{T}_{1}}w_{t}\delta^{t} +\sum_{t\geq 1,t\in \mathcal{T}_{1}}w_{t}\delta^{t} \nonumber
\\&\leq  \sum_{    t\geq 1, t\notin \mathcal{T}_{1}       }   w_{t}\delta^t +\frac{W_{1}\epsilon}{2H}  \label{eq_lemma2_30.5}
\\ & \leq \sum_{t\geq 1}w'_{t}\delta^t   +2SACH^2 +60 \sqrt{SAH^3WC\iota}  +\frac{W_{1}\epsilon}{2H} . \label{eq_lemma2_31}
\end{align}
where $\{w'_{t}\}_{t\geq 1}= \gamma\sum_{u\geq 1 ,u\notin \mathcal{T}_1} \frac{1}{\bar{n}^u}\sum_{i=1}^{\bar{n}^t}\mathbb{I}\left[ t = \bar{l}^u_i +1 \right]$ and $W_{1} = \sum_{t\in \mathcal{T}_{1}}w_{t}$. Similarly, it holds that $w'_{t}\leq (1-\frac{1}{2H})C,\forall t\geq 1$ and $\sum_{t\geq 1}w'_{t}\leq (1-\frac{1}{2H})(W-W_{1})$. Here Inequality \eqref{eq_lemma2_30.5} holds by Lemma \ref{lemma_lemma2_aux} and Inequality \eqref{eq_lemma2_31} holds by Lemma \ref{lemma_lemma2_aux_2}. 
Again by applying \eqref{eq_lemma2_31} iteratively for $2H\ln(\frac{4H}{\epsilon})$ times, we have that
\begin{align}
\sum_{t\geq 1}w_{t}\delta^{t}\leq 2H\ln(\frac{4H}{\epsilon})\left(  2SACH^2 +60 \sqrt{SAH^3WC\iota}   \right)+ \frac{W\epsilon}{2H}+\frac{W\epsilon}{4}. \label{eq_lemma2_38}
\end{align}
By choosing $w_{t} = \mathbb{I}\left[ \delta^{t}>\epsilon_{1}\right] $ so that $C=1$ and $W =N(\epsilon_{1}):= \sum_{t\geq 1}\mathbb{I}\left[ \delta^t>\epsilon_{1}\right]$ into \eqref{eq_lemma2_38}, we obtain that
\begin{align}
\frac{N(\epsilon_{1})\epsilon_{1}}{2} \leq  2H\ln(\frac{4H}{\epsilon})\left(  2SAH^2 +60 \sqrt{SAH^3N(\epsilon_{1})\iota}   \right),
 \end{align} 
which means that $N(\epsilon_{1})\leq O(\frac{SAH^5\ln(\frac{4H}{\epsilon})\iota}{\epsilon_{1}^2})$. The proof is completed.
\end{proof}

\subsection{Proof of Lemma~\ref{eq:lemma_add}}
\begin{proof}[Proof of Lemma~\ref{eq:lemma_add}]
By Lemma~\ref{lemma_lemma2_aux}, conditioned on the successful event $E_1$,  for any $t$ such that $N_t(s_t,a_t)\geq N_0$, it holds that $Q_t(s_t,a_t)- Q^*(s_t,a_t)\leq \frac{\epsilon}{2H}<\frac{3\epsilon}{4H}$, which implies that $\mathrm{clip}(Q_t(s_t,a_t)- Q^*(s_t,a_t),\frac{3\epsilon}{4H}) = 0$.
\end{proof}

\section{Achieving Asymptotically Near-Optimal Sample Complexity } \label{app:proof-thm-2}

As mentioned in Section~\ref{sec:tech-overview}, in the \UCBSA algorithm, we set $B$ to be a much larger value (indeed, $B = H^3$), an employ the reference-advantage decomposition variance reduction technique~\cite{zhang2020almost}, and re-design the exploration bonus $\check{b}$ to incorporate the Bernstein-type variance estimation. To prove Theorem~\ref{thm2} (the sample complexity bound for \UCBSA), in the analysis we split the error incurred due to the exploration bonus into two parts: the \emph{bandit loss} $b^*_{t}(s_{t},a_{t})$ (defined in \eqref{eq_def_banditloss}) and the rest part that is due to the estimation variance of the real bandit loss. While the second part can be dealt with the variance reduction technique (Lemma~\ref{lemma_bd_b_1}), the bandit loss contributes the main $\tilde{O}(SAH^3 \iota/\epsilon^2)$ term in the sample complexity (Lemma~\ref{boundnew}).




The rest of this section is organized as follows. In Appendix~\ref{sec:alg-ucbsa}, we present the details of the \UCBSA algorithm. In Appendix~\ref{App:C.2}, we prove Theorem~\ref{thm2}, while the proofs of all technical lemmas are deferred to Appendix~\ref{App:C.3}.

\subsection{The \UCBSA Algorithm} \label{sec:alg-ucbsa}
The \UCBSA  algorithm (Algorithm \ref{alg2}) has almost the same updating structure as \UCBS.  
More specifically,
the stopping condition and update triggers of \UCBSA are the same as that of \UCBS. 
The main difference between these two algorithms is 1) that \UCBSA utilized a more delicate exploration bonus with the help of a reference value function in the type-\uppercase\expandafter{\romannumeral1} updates; 2) we set $B = H^3$ in \UCBSA. 
Recall  $\check{\mathcal{L}} =\{\sum_{i=1}^{j} \check{e}_{i}|1\leq j\leq \check{J} \}  \text{~and~}  \bar{\mathcal{L}} = \{ \sum_{i=1}^j \bar{e}_{i}| 1\leq j\leq \bar{J} \}$.

\paragraph{The Statistics.} Besides the statistics maintained in \UCBS, we let $\mu^{\reff}$ and  $\sigma^{\reff}$ be the accumulators of the reference value function and square of the reference value function respectively. Different from \UCBS, in \UCBSA we use $\check{\mu}$ and $\check{\sigma}$ denote respectively the accumulator of the advantage function and square of the advantage function in the current type-\uppercase\expandafter{\romannumeral1} stage.

\begin{algorithm}[tb]
	\caption{\UCBSA}
	\begin{algorithmic}\label{alg2}
		\STATE{\textbf{Initialize:}	$\forall (s,a)\in \mathcal{S}\times \mathcal{A}$: $Q(s,a),Q^{\reff}(s,a)\leftarrow \frac{1}{1-\gamma}$, $N(s,a), \check{N}(s,a), \bar{N}(s,a), \check{\mu}(s,a),$ $ \bar{\mu}(s,a)\leftarrow 0$;}
		\FOR{$t=1,2,3,\dots$}
		\STATE{Observe $s_{t}$;}
		\STATE{Take action $ a_{t}= \arg\max_{a}Q(s_{t},a)$ and observe $s_{t+1}$;}
		\STATE{\verb|\\| \emph{Maintain the statistics}}
		\STATE{ $(s,a,s')\leftarrow (s_{t},a_{t},s_{t+1})$;}
		\STATE{ $n: = N(s,a)\stackrel{+}{\leftarrow} 1$; \; $\check{n}:=\check{N}(s,a)\stackrel{+}{\leftarrow}1$; \; $\bar{n}:=  \bar{N}(s,a)\stackrel{+}{\leftarrow} 1$;} 
		\STATE{ $ \check{\mu}:= \check{\mu}(s,a)\stackrel{+}{\leftarrow}  V(s')-V^{\reff}(s')$; \; $\mu^{\reff}: =\mu^{\reff}(s,a)\stackrel{+}{\leftarrow} V^{\reff}(s')$; \;  $  \bar{\mu} :=\bar{\mu}(s,a) \stackrel{+}{\leftarrow}V(s')$; }
	\STATE{$\check{\sigma} := \check{\sigma}(s,a)\stackrel{+}{\leftarrow} ( V(s')-V^{\reff}(s') )^2$; \;  $\sigma^{\reff}:= \sigma^{\reff}(s,a)\stackrel{+}{\leftarrow} (V^{\reff}(s'))^2$;  }
		
		\STATE{ \verb|\\| \emph{Update triggered by a type-\uppercase\expandafter{\romannumeral1} stage}}
		\IF{$n\in \check{\mathcal{L}} $}
		\vspace{-3ex}
		\STATE{\begin{align}
			 & \check{b} \resizebox{.84\hsize}{!}{   $\leftarrow\min\{ 2\sqrt{2}\left(\sqrt{\frac{\check{\sigma}/\check{n}- (\check{\mu}/\check{n})^2  }{\check{n}}\iota } +\sqrt{\frac{\sigma^{\reff}/n -(\mu^{\reff}/n)^2  }{n} \iota} \right)+7\left(\frac{H\iota^{3/4}}{ n^{3/4}}+\frac{H\iota^{3/4}}{\check{n}^{3/4}}  \right)  +4\left( \frac{H\iota}{n} +   \frac{H\iota}{\check{n}}  \right) ,\frac{1}{1-\gamma} \};$} \label{eq_thm2_checkb} \\ 
		    & \displaystyle{Q(s,a) \leftarrow \min\{  r(s,a)+\gamma \big(\check{\mu}/\check{n}+\mu^{\reff}/n +\check{b}\big ) , Q(s,a) \}}  \label{equpdate_alg2_1}\\
        &\check{N}(s,a)  \leftarrow0; \quad  \check{\mu}(s,a )\leftarrow 0;\quad V(s)\leftarrow \max_{a}Q(s,a); \qquad\qquad\qquad\qquad\qquad\qquad\qquad\qquad\qquad \qquad\nonumber
			\end{align}}
		\vspace{-3ex}
		
		\ENDIF
				\STATE{ \verb|\\| \emph{Update triggered by a type-\uppercase\expandafter{\romannumeral2} stage}}
		\IF{$n\in \bar{\mathcal{L}}$}
		\vspace{-3ex}
		\STATE{\begin{align}
			&  \bar{b}  \leftarrow \min \{ 2\sqrt{ H^2\iota/ \bar{n} }  ,1/(1-\gamma)  \}; \nonumber \\ 
			 & \displaystyle{Q(s,a) \leftarrow \min\{  r(s,a)+\gamma \big(  \bar{\mu}/\bar{n} +\bar{b}\big) , Q(s,a) \} ;}  \label{equpdate_alg2_2}\\
		     &\bar{N}(s,a)  \leftarrow0 ;\quad \bar{\mu}(s,a )\leftarrow 0; \quad V(s)\leftarrow \max_{a}Q(s,a); \qquad\qquad\qquad\qquad\qquad\qquad\qquad\nonumber
			\end{align}}
		\vspace{-3ex}
		\ENDIF
		
				  \STATE{ \textbf{if} $\sum_{a'}N(s,a')= N_{1}$ \textbf{then} $V^{\mathrm{ref}}(s)\leftarrow V(s)$; } \COMMENT{{\it Learn the reference value function}}
		\ENDFOR

	\end{algorithmic}
\end{algorithm}

\subsection{ Proof of Theorem \ref{thm2}}\label{App:C.2}

We start from showing that the $Q$ function is optimistic and non-increasing. 
\begin{proposition}\label{pro2}
	With probability $\left(1-SA\left(4\check{J}(2\log_2(N_0 H)  +1 )  + \bar{J}\right) p \right)$, it holds that $Q_{t}(s,a)\geq Q^*(s,a)$ and $Q_{t+1}(s,a)\leq Q_{t}(s,a)$ for any $t\geq 1$ and $(s,a)\in \mathcal{S}\times \mathcal{A}$ .
\end{proposition}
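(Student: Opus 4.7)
The plan is to mirror the induction argument of Proposition~\ref{pro1}, with the main new work being concentration for the reference-advantage update. The non-increasing property $Q_{t+1}(s,a)\le Q_t(s,a)$ is immediate from the $\min\{\cdot,Q(s,a)\}$ in both updates \eqref{equpdate_alg2_1} and \eqref{equpdate_alg2_2}, so the substance is optimism $Q_t(s,a)\ge Q^*(s,a)$.

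First, I would define a successful event $E_2$ that collects, over all $(s,a)$, all type-\uppercase\expandafter{\romannumeral2} stages (at most $\bar J$ per pair), and all type-\uppercase\expandafter{\romannumeral1} stages (at most $\check J$ per pair), the concentration bounds needed for the respective updates. For each type-\uppercase\expandafter{\romannumeral2} update a single one-sided Azuma bound (as in \eqref{eq_pro4_2}) suffices, contributing $SA\bar{J}p$ to the failure probability. For each type-\uppercase\expandafter{\romannumeral1} update I need four self-normalized Bernstein events from Lemma~\ref{self-norm}: a one-sided concentration for the advantage sum $\tfrac1{\check n}\sum_i(V_{\check l_i}-V^{\reff})(s'_i)$ around its conditional mean; a one-sided concentration for the reference sum $\tfrac1n\sum_j V^{\reff}(s'_j)$ around $P_{s,a}V^{\reff}$; and two variance-to-empirical-variance bounds showing $\mathrm{Var}(V_{\check l_i}-V^{\reff})\le \check\sigma/\check n-(\check\mu/\check n)^2+O(H\iota^{1/2}/\sqrt{\check n})$ and similarly for the reference. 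Each invocation of Lemma~\ref{self-norm} costs a $2\log_2(N_0H)+1$ factor, giving the stated $SA\big(4\check J(2\log_2(N_0H)+1)+\bar J\big)p$ total by a union bound.

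Next I would induct on $t$. The base case is $Q_1(s,a)=\tfrac1{1-\gamma}\ge Q^*(s,a)$. For the inductive step, if no update fires at step $t-1$ then $Q_t=Q_{t-1}$; if a type-\uppercase\expandafter{\romannumeral2} update fires, the argument is identical to that of Proposition~\ref{pro1} since \UCBSA reuses the same Hoeffding-type bonus $\bar b$. For a type-\uppercase\expandafter{\romannumeral1} update, the key algebraic rearrangement is
\begin{align}
\check\mu/\check n + \mu^{\reff}/n
&= \tfrac{1}{\check n}\sum_i\bigl[(V_{\check l_i}-V^{\reff})(s'_i)-P_{s,a}(V_{\check l_i}-V^{\reff})\bigr] \nonumber \\
&\quad{}+\tfrac1n\sum_j\bigl[V^{\reff}(s'_j)-P_{s,a}V^{\reff}\bigr] + \tfrac1{\check n}\sum_i P_{s,a}V_{\check l_i}. \nonumber
\end{align}
The last term is $\ge P_{s,a}V^*$ by the induction hypothesis applied at each $\check l_i\le t-1$. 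On $E_2$, the first two bracketed martingale deviations are lower-bounded by the negative of a Bernstein tail in the conditional variance, and those variances are controlled by the empirical estimators $\check\sigma/\check n-(\check\mu/\check n)^2$ and $\sigma^{\reff}/n-(\mu^{\reff}/n)^2$ up to slack that matches the $H\iota^{3/4}/n^{3/4}$ and $H\iota/n$ terms in \eqref{eq_thm2_checkb}. Matching everything against the definition of $\check b$ yields $\check\mu/\check n+\mu^{\reff}/n+\check b\ge P_{s,a}V^*$, and after multiplying by $\gamma$ and adding $r(s,a)$, combined with $\min\{\cdot,Q_{t-1}(s,a)\}$ and the induction hypothesis $Q_{t-1}(s,a)\ge Q^*(s,a)$, gives $Q_t(s,a)\ge Q^*(s,a)$.

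The main obstacle I anticipate is handling the measurability issue for $V^{\reff}$: it is not a pre-specified vector but is frozen dynamically once $\sum_{a'} N(s,a')$ first reaches $N_1$, so some summands in $\mu^{\reff}$ may have been accumulated with the initial $V^{\reff}=\tfrac1{1-\gamma}$ and others with the learned value. I would address this by carefully defining the filtration $\mathcal{F}_t$ to include the event $\{V^{\reff}\text{ has been updated}\}$ and noting that at every accumulation step the summand $V^{\reff}(s'_j)$ is $\mathcal{F}_{\text{prev}}$-measurable, so the martingale structure needed by Lemma~\ref{self-norm} is preserved. The bounded range $V^{\reff}\in[0,\tfrac1{1-\gamma}]$ supplies the per-step increment bound $c$, and the lower-order $H\iota/n$ terms in \eqref{eq_thm2_checkb} absorb the Lemma~\ref{self-norm} additive slack; the rest is bookkeeping.
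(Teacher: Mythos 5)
Your proposal matches the paper's proof in all essentials: the same event $E_2$ built from per-stage self-normalized Bernstein/Azuma bounds (for the reference sum, the advantage sum, and the empirical-variance comparisons), the same union-bound accounting, and the same induction with the decomposition of $\check\mu/\check n+\mu^{\reff}/n$ into two martingale deviations plus $\tfrac1{\check n}\sum_i P_{s,a}V_{\check l_i}$. The only point worth tightening is that your displayed rearrangement is an exact identity only for a fixed $V^{\reff}$; with the dynamically frozen reference it becomes an inequality in the needed direction because $V^{\reff}_t$ is non-increasing and the type-\uppercase\expandafter{\romannumeral1} stage samples are the most recent among all $n$ samples, which is exactly how the paper handles it.
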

In the proof of Proposition~\ref{pro2} in Appendix~\ref{subsubsec_proof_pro2}, we introduce the desired event $E_{2}$ by \eqref{eq_def_E2}. Moreover, we use $\overline{E}_{2}$ to denote the complement event of $E_{2}$.
As will be shown later in \eqref{eq_thm2_0.5}, we have 
$$\mathbb{P}\left[E_2 \right]\geq \left(1-SA\left(4\check{J}(2\log_2(N_0 H)  +1 )  + \bar{J}\right) p \right),$$ and thus 
$$\mathbb{P}\left[ \overline{E}_2\right]\leq SA\left(4\check{J}(2\log_2(N_0 H)  +1 )  + \bar{J}\right) p. $$
The analysis will be done assuming the successful event $E_{2}$ throughout the rest of this section.

Since the type-\uppercase\expandafter{\romannumeral2} stages in \UCBSA are exactly the same as that in \UCBS, using the the same way as in the proof of~Lemma \ref{lemma_lvf}, we can prove the following lemma (and the proof is omitted).
\begin{lemma}\label{lemma_lvf1}
	Conditioned on  $E_{2}$, for any $\epsilon_{1}\in [\epsilon,\frac{1}{1-\gamma}]$, it holds that 
\[
\sum_{t=1}^{\infty}\mathbb{I} \left[ V_{t}(s_{t}) - V^*(s_{t})\geq \epsilon_{1}  \right]\leq \sum_{t=1}^{\infty}\mathbb{I} \left[ Q_{t}(s_{t},a_t) - Q^*(s_{t},a_t)\geq \epsilon_{1}  \right]\leq O\left(\frac{SAH^5\ln(\frac{4H}{\epsilon})\iota}{\epsilon_{1}^2}\right).
\]
\end{lemma}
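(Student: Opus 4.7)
The plan is to establish both inequalities by piggybacking on the proof of Lemma~\ref{lemma_lvf}, exploiting the fact that the type-\uppercase\expandafter{\romannumeral2} update rule \eqref{equpdate_alg2_2} of \UCBSA is syntactically identical to the type-\uppercase\expandafter{\romannumeral2} rule \eqref{equpdate2} of \UCBS. First, the inequality $\sum_{t}\mathbb{I}[V_t(s_t)-V^*(s_t)\geq\epsilon_1]\leq\sum_{t}\mathbb{I}[Q_t(s_t,a_t)-Q^*(s_t,a_t)\geq\epsilon_1]$ is immediate: $a_t=\arg\max_a Q_t(s_t,a)$ and $V(s)$ is refreshed to $\max_a Q(s,a)$ after each update, so $V_t(s_t)=Q_t(s_t,a_t)$, while $V^*(s_t)=\max_a Q^*(s_t,a)\geq Q^*(s_t,a_t)$. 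Hence $V_t(s_t)-V^*(s_t)\leq Q_t(s_t,a_t)-Q^*(s_t,a_t)$ termwise in $t$.

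For the quantitative bound, define $\delta^t:=Q_t(s_t,a_t)-Q^*(s_t,a_t)$ as in the proof of Lemma~\ref{lemma_lvf}. Because every update of $Q(s,a)$ in \UCBSA is a $\min$-update (both \eqref{equpdate_alg2_1} and \eqref{equpdate_alg2_2}), the current value $Q_t(s,a)$ is upper bounded by whatever the most recent type-\uppercase\expandafter{\romannumeral2}-triggered update would assign; type-\uppercase\expandafter{\romannumeral1} updates can only shrink $Q_t$ further, which is favourable for upper-bounding $\delta^t$. Combining this observation with the Azuma-type concentration bounds for $\tfrac{1}{\bar n^t}\sum_i V^*(s_{\bar l_i^t+1})$ and for the martingale $\tfrac{1}{\bar n^t}\sum_i\bigl(V_{\bar l_i^t}(s_{\bar l_i^t+1})-P_{s_t,a_t}V_{\bar l_i^t}\bigr)$, both of which are clauses of the event $E_2$ defined in \eqref{eq_def_E2} mirroring \eqref{eq_pro4_2}-\eqref{eq_pro4_2.1}, one recovers verbatim the recursion \eqref{eq_lemma2_24}:
\[
\delta^t\leq\frac{\mathbb{I}[\bar n^t=0]}{1-\gamma}+2\bar b^t+\frac{\gamma}{\bar n^t}\sum_{i=1}^{\bar n^t}\bigl(\delta^{\bar l_i^t+1}+\theta^{\bar l_i^t+1}\bigr),
\]
with $\theta^{\bar l_i^t+1}:=V_{\bar l_i^t}(s_{\bar l_i^t+1})-V_{\bar l_i^t+1}(s_{\bar l_i^t+1})$.

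From here the argument of Lemma~\ref{lemma_lvf} transfers essentially line by line. I would first re-establish the analog of Lemma~\ref{lemma_lemma2_aux} (that after $N_0$ visits one has $Q_t(s,a)-Q^*(s,a)\leq\epsilon/(2H)$) by the same induction on the order in which states reach $N_0$ visits, invoking the auxiliary weighted-sum bound of Lemma~\ref{lemma_lemma2_aux_2}, which is a purely combinatorial statement about the type-\uppercase\expandafter{\romannumeral2} stage lengths $\bar e_j$ and is untouched by the type-\uppercase\expandafter{\romannumeral1} machinery. Then I would sum the recursion against arbitrary non-negative weights $\{w_t\}$ with $\sup_t w_t\leq C$ and $\sum_t w_t\leq W$; each unrolling contracts the weight mass by a factor $1-\tfrac{1}{2H}$, so iterating $O(H\log(H/\epsilon))$ times drives the residual below $W\epsilon/4$. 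Specializing $w_t=\mathbb{I}[\delta^t>\epsilon_1]$ (so $C=1$ and $W=N(\epsilon_1):=\sum_t\mathbb{I}[\delta^t>\epsilon_1]$) and rearranging yields $N(\epsilon_1)=O(SAH^5\ln(4H/\epsilon)\iota/\epsilon_1^2)$, as required.

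The main obstacle is bookkeeping rather than new technical content: one must verify that the type-\uppercase\expandafter{\romannumeral2} concentration inequalities (both the one-sided bound against $V^*$ and the two-sided martingale bound for $V_{\bar l_i^t}$) are explicitly included as clauses of the success event $E_2$ defined in \eqref{eq_def_E2}, which they are by construction mirroring $E_1$; and that the $\min$ in \eqref{equpdate_alg2_2} is taken against the full current value of $Q(s,a)$ rather than only against the freshly computed type-\uppercase\expandafter{\romannumeral2} estimator, which is indeed the case in Algorithm~\ref{alg2}. Once these two points are checked, every intermediate inequality in the proof of Lemma~\ref{lemma_lvf} transfers under the substitution $E_1\mapsto E_2$, which is precisely why the proof can be omitted.
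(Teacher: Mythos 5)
Your proposal is correct and is exactly the paper's (omitted) argument: since the type-\uppercase\expandafter{\romannumeral2} update and stopping rule of \UCBSA coincide with those of \UCBS (up to the immaterial placement of $\bar b$ inside the factor $\gamma$, which only tightens the upper bound on $Q_t$) and the required type-\uppercase\expandafter{\romannumeral2} concentration clause is a conjunct of $E_2$ in \eqref{eq_def_E2}, the recursion \eqref{eq_lemma2_24}, Lemmas~\ref{lemma_lemma2_aux} and~\ref{lemma_lemma2_aux_2}, and the weighted unrolling all transfer verbatim under the substitution $E_1\mapsto E_2$. The only cosmetic inaccuracy is your claim that both \eqref{eq_pro4_2} and \eqref{eq_pro4_2.1} have analogues inside $E_2$: as defined, $\bar E^{(j')}(s,a)$ in the proof of Proposition~\ref{pro2} contains only the one-sided bound mirroring \eqref{eq_pro4_2}, but that is the only clause this argument uses, so nothing is lost.
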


Recall that $\mathcal{T}_1 = \{t| N_t(s_t,a_t)>N_0\}$. Similar as Lemma~\ref{lemma_lemma2_aux}, we have that (the proof is omitted)
\begin{lemma}\label{lemma_lvf2_aux}
	Conditioned on successful event $E_2$ , it holds that for any $t\in \mathcal{T}_{1}$ (if $\mathcal{T}_1$ is not empty)
	\begin{align}  Q_{t}(s_{t},a_t)-Q^*(s_{t},a_t)\leq \frac{\epsilon}{2H} . \nonumber \end{align}
\end{lemma}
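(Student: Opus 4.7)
The plan is to mirror the proof of Lemma~\ref{lemma_lemma2_aux} almost verbatim, exploiting the fact that the type-\uppercase\expandafter{\romannumeral2} update rule \eqref{equpdate_alg2_2} in \UCBSA is structurally identical to \eqref{equpdate2} in \UCBS: it only uses $\bar{\mu}/\bar{n}$ with the same Hoeffding-type bonus $\bar{b} = \min\{2\sqrt{H^2\iota/\bar{n}},1/(1-\gamma)\}$. Consequently, once we verify that the event $E_{2}$ contains the type-\uppercase\expandafter{\romannumeral2} concentration bounds analogous to \eqref{eq_pro4_2}--\eqref{eq_pro4_2.1} used in the proof of Lemma~\ref{lemma_lemma2_aux}, the entire argument carries over.

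First I would enumerate the state-action pairs (or states, as in the earlier proof) in the order $s^{(1)},s^{(2)},\ldots$ in which they first reach $N_0$ visits, letting $T_i$ be the corresponding time step (or $\infty$ if it never happens). The claim to prove by induction on $i$ is the stronger statement $V_{T_i}(s^{(i)})-V^*(s^{(i)})\leq \tfrac{i\epsilon}{2HS}$, from which the desired bound $Q_{t}(s_{t},a_{t})-Q^*(s_{t},a_{t})\leq \epsilon/(2H)$ follows for all $t\in\mathcal{T}_{1}$, since $Q_{t}$ is non-increasing (Proposition~\ref{pro2}) and the action is greedy.

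For the base case $i=1$ and the induction step, I would expand $\delta^{t}=Q_{t}(s_{t},a_{t})-Q^{*}(s_{t},a_{t})$ for $t\notin \mathcal{T}_{1}$ through the type-\uppercase\expandafter{\romannumeral2} update, exactly as in \eqref{eq_lemma2_24}, obtaining
\[
\delta^{t}\leq \frac{\mathbb{I}[\bar{n}^{t}=0]}{1-\gamma}+2\bar{b}^{t}+\frac{\gamma}{\bar{n}^{t}}\sum_{i=1}^{\bar{n}^{t}}\bigl(\delta^{\bar{l}^{t}_{i}+1}+\theta^{\bar{l}^{t}_{i}+1}\bigr),
\]
with $\theta^{u}:=V_{u-1}(s_{u})-V_{u}(s_{u})$. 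Then for any non-negative weight sequence $\{w_{t}\}$ with $\sup_t w_t\le C$ and $\sum_t w_t\le W$ and all $t<T_{k+1}$, after splitting the sum across $\mathcal{T}_{1}$ (where we use the inductive hypothesis $\delta^{t}\le k\epsilon/(2HS)$ via Lemma~\ref{lemma_lvf2_aux} applied up to index $k$) and $\notin\mathcal{T}_{1}$, I would invoke Lemma~\ref{lemma_lemma2_aux_2} to bound the bonus term by $60\sqrt{SAH^{3}WC\iota}$ and the $\theta$ and zero-visit contributions by $O(SACH^{2})$. This produces the recursion analogous to \eqref{eq_lemma2_29}, where the new weights $w'_{t}$ satisfy $\sup_t w'_t\le (1-\tfrac{1}{2H})C$ and $\sum_t w'_t\le (1-\tfrac{1}{2H})W_{1}$.

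Unrolling this recursion $2H\ln(4H^{2}S/\epsilon)$ times drives the coefficient of $\sum w'_{t}\delta^{t}$ below $\epsilon/(4HS)\cdot W$, yielding
\[
\sum_{t<T_{k+1}}w_{t}\delta^{t}\leq 2H\ln\!\tfrac{4H^{2}S}{\epsilon}\!\left(2SACH^{2}+60\sqrt{SAH^{3}WC\iota}\right)+\frac{Wk\epsilon}{2HS}+\frac{W\epsilon}{4HS}.
\]
Finally, I would specialize to $w_{t}=\mathbb{I}[s_{t}=s^{(k+1)},\,t<T_{k+1}]$, so that $C=1$ and $W=N_{0}$; if $V_{T_{k+1}}(s^{(k+1)})-V^{*}(s^{(k+1)})>(k+1)\epsilon/(2HS)$, then by monotonicity of $V_{t}$ the left side is at least $N_{0}(k+1)\epsilon/(2HS)$, and with the choice $N_{0}=c_{1}\cdot S^{3}AH^{5}\ln(4H^{2}S/\epsilon)\iota/\epsilon^{2}$ and $c_{1}$ sufficiently large, this contradicts the displayed upper bound.

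The main obstacle is essentially bookkeeping: ensuring that every concentration inequality needed here (the Hoeffding bound \eqref{eq_pro4_2} on the type-\uppercase\expandafter{\romannumeral2} empirical mean of $V^{*}$, and a matching one for $V_{\bar{l}^{t}_{i}}$) is included in the event $E_{2}$ defined in \eqref{eq_def_E2}. Since the type-\uppercase\expandafter{\romannumeral2} stage structure in \UCBSA is identical to that in \UCBS, these bounds are contained in $E_{2}$ by construction and no new martingale argument is required; the reference-advantage machinery that distinguishes \UCBSA from \UCBS is confined to the type-\uppercase\expandafter{\romannumeral1} updates and plays no role here.
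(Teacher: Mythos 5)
Your proposal is correct and follows exactly the route the paper intends: the paper omits this proof, stating only that it is obtained "similar as Lemma~\ref{lemma_lemma2_aux}", and your reconstruction --- induction over the states in the order they reach $N_0$ visits, the weighted recursion via Lemma~\ref{lemma_lemma2_aux_2}, unrolling $2H\ln(4H^2S/\epsilon)$ times, and the contradiction with the choice of $N_0$ --- is precisely that argument transplanted to the type-\uppercase\expandafter{\romannumeral2} updates of \UCBSA, which are indeed identical to those of \UCBS. Your closing caveat is well placed: the event $\bar{E}^{(j')}(s,a)$ as literally written in the definition \eqref{eq_def_E2} records only the one-sided concentration needed for optimism, so strictly one should enlarge $E_2$ to also include the two-sided analogue of \eqref{eq_pro4_2.1}, which costs only a constant factor in the failure probability.
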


  Define $\lambda_{t}$ to be the vector such that $\lambda_{t}(s) = \mathbb{I}\left[ \sum_{a}N_{t}(s,a)< N_1 \right]$ where $N_1:= c_{10}SAH^5B\ln(\frac{4H}{\epsilon})\iota$ for some large enough constant $c_{10}$. By Lemma \ref{lemma_lvf1}, $\lambda_{t}(s)=0$ implies that $V^{\reff}_{t}(s)=V^{\Reff}(s)$.
  
We then show that the Bellman error of the $Q$-function is properly bounded.
\begin{lemma}\label{lemma:bd_bellman} Define $l_{i}(s,a)$ to be the time the $i$-th visit of $(s,a)$ occurs and $\bar{N}_{t}(s,a)$ to be the visit count of $(s,a)$ before the current stage of $(s,a)$.
Conditioned on $E_2$, it holds that
\begin{align}
    Q_{t}(s,a)- r(s,a)-P_{s,a}V_{t} \leq   P_{s,a}(V_{\udl{\rho}_{t}}(s,a) -V_{t})+ P_{s,a}\tilde{\lambda}_{t}(s,a)\label{eq:bd_bellman}
\end{align}
for any $t\geq 1$ and any $(s,a)\in \mathcal{S}\times \mathcal{A}$, where
\begin{align}
    \tilde{\lambda}_{t}(s,a) : =\frac{1}{1-\gamma }\left(\frac{1}{\bar{N}_t(s,a)}\sum_{i=1}^{\bar{N}_t(s,a)} \lambda_{l_i(s,a)}  \right).\nonumber
\end{align}
\end{lemma}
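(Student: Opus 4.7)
The plan is to read off the bound directly from the last type-\uppercase\expandafter{\romannumeral1} update rule \eqref{equpdate_alg2_1}, then peel off each source of error using the reference-advantage decomposition together with the concentration inequalities baked into the definition of the successful event $E_{2}$.

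First I would let $t_0 \leq t$ be the end-time of the most recent type-\uppercase\expandafter{\romannumeral1} update of $(s,a)$ before time $t$ (the case when no such $t_0$ exists is handled separately since then $Q_t(s,a) = 1/(1-\gamma)$ and the first coordinate of $\tilde\lambda_t$ trivially gives the bound). By monotonicity (Proposition~\ref{pro2}) $Q_t(s,a)\leq Q_{t_0+1}(s,a)$, and by \eqref{equpdate_alg2_1} this in turn is at most $r(s,a)+\gamma\bigl(\check\mu/\check n+\mu^{\reff}/n+\check b\bigr)$, evaluated with the statistics at time $t_0$. Expanding the accumulators gives the reference-advantage split
\begin{align}
\frac{\check\mu}{\check n}+\frac{\mu^{\reff}}{n}
= \frac{1}{\check n}\sum_{i}\bigl(V_{\check l_i}-V^{\reff}_{\check l_i}\bigr)(s_{\check l_i+1})
+ \frac{1}{n}\sum_{i=1}^{n}V^{\reff}_{l_i}(s_{l_i+1}).\nonumber
\end{align}

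Next I would bound each piece. For the advantage sum, $V_{\check l_i}\leq V_{\udl\rho_t(s,a)}$ by monotonicity, so the martingale $\frac{1}{\check n}\sum_i(V_{\udl\rho_t}-V^{\Reff})(s_{\check l_i+1})-P_{s,a}(V_{\udl\rho_t}-V^{\Reff})$ has increments of order $O(1/(1-\gamma))$ and conditional variance controlled by the empirical variance $\check\sigma/\check n-(\check\mu/\check n)^2$ up to an additional Bernstein correction; this is exactly the first term in \eqref{eq_thm2_checkb}. For the reference sum, the martingale $\frac{1}{n}\sum_iV^{\Reff}(s_{l_i+1})-P_{s,a}V^{\Reff}$ is controlled analogously by $\sigma^{\reff}/n-(\mu^{\reff}/n)^2$, supplying the second term in \eqref{eq_thm2_checkb}. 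Wherever the learned reference $V^{\reff}_{l_i}$ differs from the target $V^{\Reff}$ in either sum, the difference is at most $\lambda_{l_i}/(1-\gamma)$ by the very definition of $\lambda$ and the line that copies $V\to V^{\reff}$ when $\sum_{a'}N(s,a')=N_1$; averaging those discrepancies over the $n\geq\bar N_t(s,a)$ visits produces the $P_{s,a}\tilde\lambda_t(s,a)$ term.

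Putting the pieces together, conditional on $E_2$ all concentration errors are dominated by $\gamma\check b$, the advantage sum contributes $\gamma P_{s,a}V_{\udl\rho_t(s,a)}+\gamma P_{s,a}(V^{\Reff}-V^{\Reff})$ (the reference parts cancel), and the $\lambda$-contaminations add up to at most $\gamma P_{s,a}\tilde\lambda_t(s,a)$. Thus
\begin{align}
Q_t(s,a)\leq r(s,a)+\gamma P_{s,a}V_{\udl\rho_t(s,a)}+\gamma P_{s,a}\tilde\lambda_t(s,a),\nonumber
\end{align}
and subtracting the stated baseline yields \eqref{eq:bd_bellman} after writing $V_{\udl\rho_t(s,a)}=V_t+(V_{\udl\rho_t(s,a)}-V_t)$.

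The main obstacle will be the bonus audit: verifying that the explicit constants in \eqref{eq_thm2_checkb} really do dominate the concentration errors produced by Bernstein/Freedman (Lemma~\ref{freedman} and Lemma~\ref{self-norm}) applied to sums whose true variances are only available via empirical proxies. This requires an auxiliary Bernstein step converting empirical variances $\check\sigma/\check n-(\check\mu/\check n)^2$ and $\sigma^{\reff}/n-(\mu^{\reff}/n)^2$ into true conditional variances, which is precisely what introduces the $H\iota^{3/4}/\check n^{3/4}$, $H\iota^{3/4}/n^{3/4}$ and $H\iota/\check n$, $H\iota/n$ correction terms in $\check b$. All of these concentration statements must be declared as part of $E_2$ in \eqref{eq_def_E2}, uniformly over the at most $SA(\check J+\bar J)$ update epochs, which then supplies the failure probability bookkeeping referenced in Proposition~\ref{pro2}.
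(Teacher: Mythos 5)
Your proposal follows essentially the same route as the paper's own proof: unroll the type-\uppercase\expandafter{\romannumeral1} update \eqref{equpdate_alg2_1}, invoke the concentration events packaged in $E_2$ to replace the empirical reference and advantage averages by their conditional expectations (with the errors absorbed by $\check{b}$), bound $\frac{1}{\check{n}}\sum_i V_{\check{l}_i}$ by $V_{\underline{\rho}_t(s,a)}$ via monotonicity, and convert the mismatch between $\frac{1}{n}\sum_i V^{\mathrm{ref}}_{l_i}$ and $V^{\mathrm{REF}}$ into the $P_{s,a}\tilde{\lambda}_t(s,a)$ term using the definition of $\lambda$ and $N_1$. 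The only differences (explicitly treating the never-updated case and passing through $Q_t\le Q_{t_0+1}$) are cosmetic and do not change the argument.
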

The proof of Lemma~\ref{lemma:bd_bellman} is given in Section~\ref{sec:proof-bellman}.
We now define the bandit loss
\begin{align}
b^*_{t}(s,a) : = \min\{2\sqrt{2}\sqrt{\frac{\mathbb{V}(P_{s,a},V^*) \iota}{n_{t}(s,a)}} ,\frac{1}{1-\gamma} \}. \label{eq_def_banditloss}
\end{align}

By \eqref{eq:bd_bellman}, with the definition that  $\tilde{w}_t(s,a) := w_{t}(s,a) \cdot \mathbb{I}[N_t(s,a)< N_0] $ we can show that
\begin{align}
&V_{t}(s)-V^{\pi_{t}}(s) \nonumber
\\ &\leq  \sum_{s,a}\tilde{w}_{t}(s,a)  \left(  2\check{b}_{t}(s,a) +P_{s,a}\tilde{\lambda}_{t}(s,a) +\gamma P_{s,a}(V_{ \udl{\rho}_{t}(s,a)} -V_{t}   )      \right) \nonumber
\\ & \qquad \qquad + \sum_{s,a}w_{t}(s,a)\mathbb{I}[N_t(s,a)\geq N_0]\cdot (Q_t(s,a)-Q^*(s,a))+ \frac{\epsilon}{8} \nonumber
\\ &  = 2\sum_{s,a}\tilde{w}_{t}(s,a)b^*_{t}(s,a) +2\sum_{s,a}\tilde{w}_{t}(s,a)(\check{b}_{t}(s,a)-b^*_{t}(s,a))  +\gamma\sum_{s,a}\tilde{w}_{t}(s,a)P_{s,a}(V_{ \udl{\rho}_{t}(s,a)} -V_{t}    )\nonumber
\\ & \qquad \qquad +\sum_{s,a}\tilde{w}_{t}(s,a)P_{s,a}\tilde{\lambda}_{t}(s,a)\nonumber
\\ & \qquad \qquad +\sum_{s,a}w_{t}(s,a)\mathbb{I}[N_t(s,a)\geq N_0]\cdot (Q_t(s,a)-Q^*(s,a))
+\frac{\epsilon}{8} \nonumber
\\ & \leq 2\sum_{s,a}\tilde{w}_{t}(s,a)b^*_{t}(s,a)  +2\sum_{s,a}\tilde{w}_{t}(s,a) \clip (\check{b}_{t}(s,a)-b^*_{t}(s,a), \frac{\epsilon}{16H}) \nonumber 
\\ & \qquad \qquad   +\gamma\sum_{s,a}\tilde{w}_{t}(s,a)P_{s,a } \clip(V_{ \udl{\rho}_{t}(s,a) }-V_{t}   ,\frac{\epsilon}{16H}) + \sum_{s,a}\tilde{w}_{t}(s,a)P_{s,a}\mathrm{clip}(\tilde{\lambda}_{t}(s,a),\frac{\epsilon}{16H} ) \nonumber
\\ & \qquad \qquad +\sum_{s,a}w_{t}(s,a)\mathbb{I}[N_t(s,a)\geq N_0]\cdot \mathrm{clip}(Q_t(s,a)-Q^*(s,a),\frac{3\epsilon}{4H})+ \frac{7\epsilon}{8}\label{eq_thm2_1}
\\ & =  2\sum_{s,a}\tilde{w}_{t}(s,a)b^*_{t}(s,a) + \beta_t + \frac{7\epsilon}{8}.\label{eq_thm2_1.01}
\end{align}
where we re-define $\beta_t$ as follows.
\begin{align}
& \beta_{t}:=    \sum_{s,a}\tilde{w}_{t}(s,a) \left(2\mathrm{clip}(\check{b}_{t}(s,a) -b^*_{t}(s,a), \frac{\epsilon}{16H}) + \gamma P_{s,a} \mathrm{clip}( V_{\underline{\rho}_t(s,a)} -V_{t},\frac{\epsilon}{16H} ) \right.\nonumber\\
 & \qquad\qquad\qquad\qquad\qquad\qquad\qquad\qquad\qquad\qquad\qquad\qquad\qquad\qquad\qquad\left. + P_{s,a}\mathrm{clip}(\tilde{\lambda}_t(s,a),\frac{\epsilon}{16H} ) \right)
 \nonumber
 \\& \qquad \qquad  + \sum_{s,a}w_{t}(s,a)\mathbb{I}[N_t(s,a)\geq N_0]\cdot \mathrm{clip}(Q_t(s,a)-Q^*(s,a),\frac{3\epsilon}{4H}). \nonumber
\end{align}
Plugging in the definition of $\tilde{w}_t$, we get that
\begin{align}
 \\&\beta_t   = \sum_{s,a}w_{t}(s,a)\mathbb{I}[N_t(s,a)< N_0] \Big(2\mathrm{clip}(\check{b}_{t}(s,a) -b^*_{t}(s,a), \frac{\epsilon}{16H}) + \gamma P_{s,a} \mathrm{clip}( V_{\underline{\rho}_t(s,a)} -V_{t},\frac{\epsilon}{16H} )\nonumber
\\& \qquad \qquad  \qquad \qquad  \qquad \qquad  \qquad \qquad  \qquad \qquad  \qquad \qquad  \qquad \qquad  \qquad  +P_{s,a}\mathrm{clip}(\tilde{\lambda}_t(s,a),\frac{\epsilon}{16H} )  \Big)\nonumber
\\& \quad \quad +\sum_{s,a}w_{t}(s,a)\mathbb{I}[N_t(s,a)\geq N_0]\cdot \mathrm{clip}(Q_t(s,a)-Q^*(s,a),\frac{3\epsilon}{4H}). \nonumber
\end{align}
We also re-define the following notations,
 \begin{align}
 &\alpha_{t}:=\mathbb{I}[N_t(s_t,a_t)< N_0]P_{s_t,a_t} \mathrm{clip} ( V_{\underline{\rho}_{t}(s_t,a_t)      } -V_t ,\frac{\epsilon}{16H}), \nonumber
 \\ & \upsilon_{t} :=\mathbb{I}[N_t(s_t,a_t)< N_0]P_{s,a}\mathrm{clip}(\tilde{\lambda}_{t}(s,a),\frac{\epsilon}{16H}), \nonumber
 \\ &  \tilde{\beta}_{t}: =  \mathbb{I}[N_t(s_t,a_t)<N_0]\cdot \Big(2\mathrm{clip}(\check{b}_{t}(s_t,a_t)- b^*_{t}(s_{t},a_{t}),\frac{\epsilon}{16H}) + P_{s_t,a_t} \mathrm{clip} ( V_{\underline{\rho}_{t}(s_t,a_t)      } -V_t ,\frac{\epsilon}{16H}) \nonumber
 \\ & \qquad \qquad \qquad \qquad \qquad \qquad \qquad \qquad \qquad \qquad \qquad \qquad \qquad \qquad \qquad  +P_{s,a}\mathrm{clip}(\tilde{\lambda}_{t}(s,a),\frac{\epsilon}{16H}) \Big) \nonumber 
\\& \qquad \qquad  + \mathbb{I}[N_t(s_t,a_t)\geq N_0]\cdot \mathrm{clip}(Q_t(s_t,a_t)-Q^*(s_t,a_t),\frac{3\epsilon}{4H}). \nonumber
 \end{align}
Therefore, we have that
\begin{align}
&\tilde{\beta}_t =   \mathbb{I}[N_t(s_t,a_t)< N_0]\cdot 2\mathrm{clip}(\check{b}_{t}(s_t,a_t)- b^*_{t}(s_{t},a_{t}),\frac{\epsilon}{16H}) +\alpha_t + \upsilon_t  \nonumber
\\ & \qquad \qquad  \qquad \qquad \qquad \qquad \qquad \qquad \qquad + \mathbb{I}[N_t(s_t,a_t)\geq N_0]\cdot \mathrm{clip}(Q_t(s_t,a_t)-Q^*(s_t,a_t),\frac{3\epsilon}{4H}) .\nonumber
\end{align}
To handle the first term in \textbf{RHS} of \eqref{eq_thm2_1}, we prove that
\begin{lemma}\label{boundnew}Define $\Lambda =\left\lceil\log_{2}(\frac{256H^4}{\epsilon^2})\right\rceil $. With probability $(1-2H\Lambda p)$, it holds that
	\begin{align}
	\sum_{t\geq 1}\mathbb{I}\left[   \sum_{s,a}w_{t}(s,a)\mathbb{I}[N_t(s,a)< N_0]b^*_{t}(s,a)\geq \frac{\epsilon}{16}  \right]\leq O\left(    \frac{SAH^3\Lambda^3\iota}{\epsilon^2}  +\frac{SAH^4B\Lambda^2 \ln(N_0) }{\epsilon}    \right) .\nonumber
	\end{align}
\end{lemma}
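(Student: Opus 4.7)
The plan is to follow the two-phase pattern used in Section~\ref{analysis}: first reduce the weighted-sum criterion $\sum_{s,a}w_t(s,a)\mathbb{I}[N_t(s,a)<N_0]b^*_t(s,a)\geq \epsilon/16$ that defines $\mathcal{T}'$ to a per-step bandit-loss sum $\sum_{t:N_t(s_t,a_t)<N_0} b^*_t(s_t,a_t)$, and then to control the latter using variance-aware concentration. Concretely, an infrequent-update reduction analogous to Lemma~\ref{lemma_bd_beta} should yield
\[|\mathcal{T}'|\cdot\tfrac{\epsilon}{16}\leq O(H)\sum_{t:N_t(s_t,a_t)<N_0} b^*_t(s_t,a_t)+\tilde{O}(SAH^4B\Lambda \ln N_0),\]
and a peeled concentration argument on the right-hand side should give $\sum_{t:N_t<N_0}b^*_t(s_t,a_t)=\tilde{O}(SAH^2\Lambda^2\iota/\epsilon)$; together these yield the claimed complexity.

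For the reduction step, I would split $\mathcal{T}'$ into the $H$ residue classes $\mathcal{V}_k = \{t : t\bmod H = k\}$ and, for each class, introduce a telescoped proxy
\[\hat{g}_t := 3H^2 U_t + (1-U_t)\sum_{i=0}^{H-1}\gamma^i b^*_{t+i}(s_{t+i},a_{t+i})\mathbb{I}[N_{t+i}(s_{t+i},a_{t+i})<N_0],\]
where $U_t$ flags any $Q$-function change in the window $[t,t+H-1]$. Applying Lemma~\ref{lemma_berbound} to $\hat{g}_t/(3H^2)$ in each residue class, and union-bounding over $k$, would transfer the per-step bandit-loss sum to the weighted sum; the $U_t$ contribution accounts for the $\tilde{O}(SAHB)$ stage-change events per state-action pair and constitutes the additive stage-change term.

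For the per-step bandit-loss sum, I would peel over the $\Lambda+1$ dyadic scales of $b^*_t(s_t,a_t)$: any scale finer than $\epsilon/(16H)$ cannot push $\sum_{s,a}w_t(s,a) b^*_t(s,a)$ above $\epsilon/16$ (since $\sum_{s,a}w_t(s,a)\leq H$), so only $O(\Lambda)$ scales matter. At scale $l$, $b^*_t\asymp 2^{-l}/(1-\gamma)$ forces $n_t(s_t,a_t)\asymp \mathbb{V}(P_{s_t,a_t},V^*)\cdot\iota\cdot 4^l/H^2$. The main obstacle is extracting the $SAH^3$ scaling (rather than the $(SA)^{3/2}H^4$ that a naive Cauchy--Schwarz using $\sum_{s,a}\mathbb{V}(P_{s,a},V^*)\leq SAH^2$ would give): this requires exploiting the total-variance identity $\sum_{s,a}N_{s,a}\mathbb{V}(P_{s,a},V^*)\leq O(T^*H)$, with $T^*:=|\{t:N_t<N_0\}|\leq SAN_0$, so that the variance budget is shared across state-action pairs rather than accumulated separately, and then invoking Freedman's inequality (Lemma~\ref{freedman}) at each scale to avoid a cruder martingale bound. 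The $2H\Lambda p$ failure probability comes from union-bounding over the $H$ residue classes in the reduction and the $\Lambda$ scales in the peeling. Combining the two steps and plugging in $N_0=\tilde{O}(SAH^5/\epsilon^2)$ produces $|\mathcal{T}'|\leq \tilde{O}(SAH^3\Lambda^3\iota/\epsilon^2)$ together with the lower-order stage-change term $\tilde{O}(SAH^4B\Lambda^2\ln N_0/\epsilon)$.
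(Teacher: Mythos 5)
Your reduction step (the residue classes $\mathcal{V}_k$, the $U_t$ flag, and Lemma~\ref{lemma_berbound}) mirrors the machinery the paper actually uses inside Lemma~\ref{lemma_newbound_1}, but the overall strategy has a genuine gap: the quantity you reduce to, $\sum_{t:N_t(s_t,a_t)<N_0}b^*_t(s_t,a_t)$, is too large to yield the lemma. A single pair $(s,a)$ with $\mathbb{V}(P_{s,a},V^*)=\Theta(H^2)$ that is visited $N_0=\tilde{\Theta}(S^3AH^5\iota/\epsilon^2)$ times already contributes $\sum_{n=1}^{N_0}\min\{2\sqrt{2}\sqrt{H^2\iota/n},H\}=\Omega(H\sqrt{N_0\iota})=\tilde{\Omega}(S^{3/2}A^{1/2}H^{7/2}\iota/\epsilon)$, which exceeds your claimed intermediate bound $\tilde{O}(SAH^2\Lambda^2\iota/\epsilon)$ by a polynomial factor in $S$ and $H$; multiplying by $O(H/\epsilon)$ then cannot recover $O(SAH^3\Lambda^3\iota/\epsilon^2)$. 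The cumulative ``total-variance identity'' you invoke does not rescue this: it constrains a trajectory-averaged sum of variances rather than forbidding a single high-variance pair from being visited $N_0$ times, and in any case it is not the inequality the argument needs.

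The paper never sums $b^*_t$ over time. It partitions the pairs at each step into dyadic visit-count classes $\mathcal{S}_{t,u}=\{(s,a):2^{u-1}\iota\leq n_t(s,a)<2^u\iota\}$ and uses the \emph{per-step} weighted variance bound $\nu_t=\sum_{s,a}w_t(s,a)\mathbb{V}(P_{s,a},V^*)\leq 5H^2$ (proved by telescoping $(V^*)^2$ and $(Q^*)^2$ along the $H$-step lookahead from $s_t$) together with Cauchy--Schwarz to show that $\sum_{(s,a)\in\mathcal{S}_{t,u}}w_t(s,a)b^*_t(s,a)>\epsilon/(16\Lambda)$ forces $w_{t,u}:=\sum_{(s,a)\in\mathcal{S}_{t,u}}w_t(s,a)\gtrsim 2^u\epsilon^2/(H^2\Lambda^2)$. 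It then counts such steps with the residue-class/Lemma~\ref{lemma_berbound} machinery applied to $w_{t,u}$ (not to $b^*$), against the budget $\sum_t\mathbb{I}[(s_t,a_t)\in\mathcal{S}_{t,u}]\leq 2^uSA\iota$: the $2^u$ in the threshold cancels the $2^u$ in the budget, giving $O(SAH^2\Lambda^2\iota/\epsilon^2)$ bad steps per scale and residue class and hence $O(SAH^3\Lambda^3\iota/\epsilon^2)$ overall, with the $SAH^4B\Lambda^2\ln(N_0)/\epsilon$ term coming from the small-$u$ scales and stage changes. Converting the indicator into a global sum of $b^*_t(s_t,a_t)$, as you propose, destroys exactly this cancellation.
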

We remark that our proof of Lemma~\ref{boundnew} is quite similar to the method of \emph{knowness} in \cite{lattimore2012pac}, in the sense that both methods rely on an argument based on the partition of the states. However, our way of partitioning seems to be simpler as we divide the states into different subsets only according to their numbers. The detailed proof is presented in Appendix~\ref{sec:proof-boundnew}. 

For the second term,   in Appendix~\ref{app:proof-lemma_bd_b_1}, we prove the pseudo-regret bounds as below.
\begin{lemma}\label{lemma_bd_b_1}
If we choose $B=H^3$, with probability $1 - SA\check{J} (2\mathbb{P}[\overline{E}_2] +4p   ) $  it holds that
\begin{align}
&	\sum_{t\geq 1}\mathrm{clip}( \check{b}_{t}(s_{t},a_{t})-b^*_{t}(s_{t},a_{t}) ,\frac{\epsilon}{16H} )\nonumber
\\& \leq  O\left(   \frac{SAH^2\iota}{\epsilon} \right) +\tilde{O}\left(   \frac{ S^{3/2}A^{3/2} H^{17/4} \iota  }{\epsilon^{1/2}} +  \frac{SAH^{59/12} \iota}{ \epsilon^{1/3}} +\frac{   S^{5/4}A^{5/4}H^{3}\iota     }{\epsilon^{1/4}}   +S^2A^2H^{9}\iota \right). \nonumber
\end{align}
\end{lemma}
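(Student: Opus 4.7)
The plan is to bound the per-step excess $\check{b}_t(s_t,a_t) - b^*_t(s_t,a_t)$ by replacing the empirical Bernstein bonus with a population-variance counterpart, using a triangle inequality for standard deviations to relate it to $\sqrt{\mathbb{V}(P_{s,a}, V^*)}$ plus $\|V - V^{\mathrm{ref}}\|_\infty$- and $\|V^{\mathrm{ref}} - V^*\|_\infty$-type errors, and then aggregating across $t$ via the usual visit-counting technique combined with Lemma \ref{lemma_lvf1}.

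First, conditioned on $E_2$, I would use Lemma \ref{self-norm} applied to the martingales underlying $\mu^{\mathrm{ref}}, \sigma^{\mathrm{ref}}, \check\mu, \check\sigma$ to show that the empirical variances in the definition of $\check b_t$ in \eqref{eq_thm2_checkb} are close to their conditional population counterparts, up to additive slack of the form $O(H\iota^{3/4}/n^{1/4} + H\iota/n^{1/2})$, which when divided by $\sqrt{n}$ is absorbed by the $H\iota^{3/4}/n^{3/4}$ and $H\iota/n$ terms already present in $\check b_t$. A key subtlety is that $\check\mu,\check\sigma$ accumulate $V_{l_i}(s') - V^{\mathrm{ref}}_{l_i}(s')$ at different sample times $l_i$ inside the stage, not a snapshot of $V$; I would exploit $V_{t+1} \leq V_t$ (Proposition \ref{pro2}) to replace each $V_{l_i}$ by $V_{\underline{\rho}_t(s,a)}$ at a telescoping cost that is charged to the $\alpha_t$-type sum already controlled.

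Next I would apply the elementary inequalities $\sqrt{\mathbb{V}(P_{s,a}, V^{\mathrm{ref}})} \leq \sqrt{\mathbb{V}(P_{s,a}, V^*)} + \|V^{\mathrm{ref}} - V^*\|_\infty$ and $\sqrt{\mathbb{V}(P_{s,a}, V_{\underline\rho_t} - V^{\mathrm{ref}})} \leq \|V_{\underline\rho_t} - V^{\mathrm{ref}}\|_\infty$, yielding
\begin{align*}
\check b_t(s_t,a_t) - b^*_t(s_t,a_t) \;\lesssim\; \sqrt{\tfrac{\|V_{\underline\rho_t} - V^{\mathrm{ref}}\|_\infty^2\,\iota}{\check n}} + \|V^{\mathrm{ref}} - V^*\|_\infty\sqrt{\tfrac{\iota}{n}} + \tfrac{H\iota^{3/4}}{n^{3/4}} + \tfrac{H\iota^{3/4}}{\check n^{3/4}} + \tfrac{H\iota}{\check n} + \tfrac{H\iota}{n}.
\end{align*}
To control $\|V^{\mathrm{ref}} - V^*\|_\infty$, note that $V^{\mathrm{ref}}(s)$ is frozen once $\sum_a N(s,a) = N_1$, and Lemma \ref{lemma_lvf1} with $\epsilon_1 = 1/\sqrt{B}$ ensures $V(s) \leq V^*(s) + 1/\sqrt{B}$ at that moment. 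Time steps for which $s = s_t$ still has fewer than $N_1$ visits are instead charged to a warm-up term of order $SA\,N_1 / (1-\gamma) = \tilde O(S^2A^2 H^9 \iota)$, contributing the last additive term in the claimed bound. Similarly, $\|V_{\underline\rho_t} - V^{\mathrm{ref}}\|_\infty$ is bounded via a dyadic level-set decomposition using Lemma \ref{lemma_lvf1} repeatedly at scales $\epsilon_1 = 2^i \epsilon / H$.

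Finally I would aggregate the bound over $t \geq 1$ and apply the clipping at $\epsilon/(16H)$. Using the visit-counting identities $\sum_t \sqrt{\iota/n_t(s_t,a_t)} \lesssim \sqrt{SA T \iota}$ and $\sum_t (n_t(s_t,a_t))^{-3/4} \lesssim (SA)^{3/4} T^{1/4}$, together with the fact that $T$ can be cut off at the number of non-clipped steps (i.e., where the clipping threshold $\epsilon/(16H)$ is non-trivially active), a straightforward Cauchy--Schwarz combined with the dyadic decomposition of suboptimality scales yields the four terms $\tilde O(SAH^2\iota/\epsilon)$, $\tilde O(S^{3/2}A^{3/2}H^{17/4}\iota/\epsilon^{1/2})$, $\tilde O(SAH^{59/12}\iota/\epsilon^{1/3})$, and $\tilde O(S^{5/4}A^{5/4}H^3\iota/\epsilon^{1/4})$ in the claimed bound, with the choice $B = H^3$ optimally balancing the $1/\sqrt{B}$-scale of the reference error against the $N_1 \propto B$-scale of the warm-up cost. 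The hardest part is keeping the time-varying $V_{l_i}$ inside the empirical advantage variance coherent with the snapshot $V_{\underline\rho_t}$; this requires a careful telescoping by stage and a second application of Freedman's inequality (via Lemma \ref{self-norm}) to the residual martingale, which is where the failure-probability term $2\mathbb{P}[\overline E_2] + 4p$ per state-action-stage triple in the statement arises.
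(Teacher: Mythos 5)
Your overall architecture matches the paper's: decompose $\check{b}_t-b^*_t$ into the reference-variance error, the advantage-variance term, and the two lower-order polynomial terms; show each falls below the clipping threshold once $n$ exceeds an explicit threshold; and sum the pre-threshold contributions per $(s,a)$, with $B=H^3$ balancing the $1/\sqrt{B}$ reference accuracy against the $N_1\propto B$ warm-up. The final terms you report are the right ones. However, there is a genuine gap in the central step: you control the reference error through the global sup-norms $\|V^{\mathrm{ref}}-V^*\|_\infty$ and $\|V_{\underline{\rho}_t}-V^{\mathrm{ref}}\|_\infty$, and propose to make them small by charging to a warm-up the time steps at which the \emph{current} state $s_t$ has fewer than $N_1$ visits. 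This does not work: the quantities $\mathbb{V}(P_{s,a},V^{\mathrm{ref}})$ and $\check{\sigma}$ depend on $V^{\mathrm{ref}}(s')$ for the \emph{next} states $s'$ in the support of $P_{s,a}$, and a low-probability successor state may never accumulate $N_1$ visits, so the sup-norm over all states never drops to $\omega=1/\sqrt{B}$ and your per-step bound $\omega\sqrt{\iota/n}$ is simply unavailable at any time. Excluding steps where $s_t$ is immature addresses the wrong states.

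The paper resolves exactly this difficulty by never passing to a sup-norm: it keeps the error in the weighted form $P_{s,a}(V^{\mathrm{ref}}_{l_i}-V^*)$ (respectively, evaluated at the realized successors $s_{\check{l}_i+1}$), splits it using the indicator vector $\lambda_t(s)=\mathbb{I}[\sum_a N_t(s,a)<N_1]$ as $V^{\mathrm{ref}}_{t}(s)-V^*(s)\le H\lambda_t(s)+\omega$, and then converts $\sum_i P_{s,a}\lambda_{l_i}$ into $\sum_i\lambda_{l_i}(s_{l_i+1})+\text{(martingale)}\le SN_1+SA(\check{J}+\bar{J})+O(\sqrt{n\iota})$ via Azuma (see \eqref{eq_thm2_26}--\eqref{eq_thm2_30}); the immature-successor contribution is thus bounded \emph{in total over all time} by $O(H^2 SN_1)$ rather than per step. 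This accounting (and the companion Lemma~\ref{lemma_bd_ups} for the $\upsilon_t$ term it spawns in the Bellman-error decomposition) is the missing ingredient. Two smaller remarks: the time-varying $V_{\check{l}_i}$ inside the advantage accumulator, which you propose to telescope to $V_{\underline{\rho}_t}$ at an $\alpha_t$-type cost, is handled in the paper much more simply via monotonicity, $0\le V^{\mathrm{ref}}_{\check{l}_i}(s')-V_{\check{l}_i}(s')\le V^{\mathrm{ref}}_{\check{l}_i}(s')-V^*(s')$, so the advantage second moment is bounded directly; and the correction to the reference variance is taken at the level of variances ($\mathbb{V}(P_{s,a},V^{\mathrm{ref}})-\mathbb{V}(P_{s,a},V^*)\lesssim H\,P_{s,a}(V^{\mathrm{ref}}-V^*)$) rather than standard deviations, which is what makes the $SN_1$ bookkeeping plug in cleanly.
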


Following the same arguments as  the proof of Lemma \ref{lemma_bound_alpha}, for the third term we show the following lemma (the proof of which is omitted).
\begin{lemma}\label{lemma_bd_alpha_1}
With probability $1-(\mathbb{P}\left[\overline{E}_2\right]+p)$  it holds that
	\begin{align}
  	\sum_{t\geq 1}\alpha_{t}\leq O\left(\frac{SAH^5\ln(\frac{4H}{\epsilon})\iota}{\epsilon B} +SABH^3+SAH\ln(N_0)\right)         . \nonumber
	\end{align}
\end{lemma}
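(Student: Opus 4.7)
The plan is to follow the proof of Lemma~\ref{lemma_bound_alpha} almost verbatim, replacing every use of the \UCBS-specific ingredients by their \UCBSA analogues. First, I would fix a state-action pair $(s,a)$ and decompose
\[
\alpha(s,a) \;:=\; \sum_{t\geq 1}\alpha_t\,\mathbb{I}[(s_t,a_t)=(s,a),\,N_t(s,a)<N_0]
\]
as a sum over the type-\uppercase\expandafter{\romannumeral1} stages of $(s,a)$. The monotonicity $V_{t+1}\leq V_t$ (granted by Proposition~\ref{pro2} on $E_2$) lets me upper bound, for each stage $j\geq 2$, the block $\sum_{t\in T(j,s,a)}\alpha_t$ by $\check e_j\cdot P_{s,a}\,\mathrm{clip}(V_{\rho(j-1,s,a)}-V_{\rho(j+1,s,a)},\,\epsilon/(16H))$, while the first $HB$ stages contribute at most $O(BH^2\check e_1)$, exactly as in \eqref{eqbound_alpha}.

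Next, I would introduce the thresholds $\epsilon_i=2^i\epsilon/H$ for $i=0,1,\dots,\lceil\log_2(H/((1-\gamma)\epsilon))\rceil$ and the indices $j(s,a,s',\epsilon')$, $\tilde\tau(s,a,s',\epsilon')$ used in the original proof. Partitioning the residual sum dyadically and using the geometric growth $\check e_{j+1}\leq (2/(HB))\sum_{i\leq j}\check e_i$ valid for $j\geq HB$, each dyadic block contributes at most
\[
\frac{2}{HB}\,\tilde\tau(s,a,s',\epsilon_{i-1})\,P_{s,a}(s')\,\epsilon_i,
\]
reproducing \eqref{eqbound_alphaex2} with the new threshold $\epsilon/(16H)$ absorbed into universal constants.

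The only substantive input needed is an analogue of Lemma~\ref{martingale_gap} bounding $\sum_{s,a,s'}\tilde\tau(s,a,s',\epsilon)\,P_{s,a}(s')$. I would obtain this by invoking Lemma~\ref{lemma_lvf1} (the \UCBSA counterpart of Lemma~\ref{lemma_lvf}), which gives the very same $\tilde O(SAH^5\ln(4H/\epsilon)\iota/\epsilon_1^2)$ bound on the number of steps with $V_t(s_t)-V^*(s_t)\geq \epsilon_1$, and then replaying the Freedman/Lemma~\ref{lemma_berbound} argument of Lemma~\ref{martingale_gap}, with the infrequent-update bound $\sum_{t\geq1}\mathbb{I}[V_t(s_{t+1})\neq V_{t+1}(s_{t+1})]\leq 2SAHB\ln(N_0)$ unchanged since \UCBSA uses the same stage schedule. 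This yields
\[
\sum_{s,a,s'}\tilde\tau(s,a,s',\epsilon)\,P_{s,a}(s')\;\leq\; O\!\left(\frac{SAH^5\ln(\tfrac{4H}{\epsilon})\iota}{\epsilon^2}+SAHB\ln(N_0)\right)
\]
with failure probability at most $\mathbb{P}[\overline{E}_2]+p$. Summing the dyadic contributions over $i$, the $1/\epsilon_{i-1}^2$ telescopes against $\epsilon_i$ to give a single $1/\epsilon$ (up to the log factor), and the resulting bound matches the claim.

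The main obstacle, such as it is, lies entirely in confirming that each ingredient from the \UCBS proof transfers cleanly: (i) Proposition~\ref{pro2} supplies monotonicity and optimism on $E_2$ with the probability already accounted for in the lemma's statement; (ii) Lemma~\ref{lemma_lvf1} delivers exactly the form of bound that Lemma~\ref{martingale_gap} consumed; and (iii) the stage partition $(\check e_j,\bar e_j)$ and the definitions of $\rho(j,s,a)$, $\underline{\rho}_t(s,a)$ are identical in \UCBSA, so the block-by-block telescoping requires no modification. Once these three checks are done, the computation in \eqref{eqbound_alphaex3} carries over without change.
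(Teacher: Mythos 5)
Your proposal is correct and is exactly the route the paper takes: the paper omits this proof, stating only that it ``follows the same arguments as the proof of Lemma~\ref{lemma_bound_alpha},'' and your substitutions (Proposition~\ref{pro2} for Proposition~\ref{pro1}, Lemma~\ref{lemma_lvf1} for Lemma~\ref{lemma_lvf}, the clip threshold $\epsilon/(16H)$ absorbed into constants, and the unchanged stage schedule) are precisely the checks needed to make that transfer rigorous.
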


Finally, in Appendix~\ref{app:pf-bd-ups}, we show the following lemma.
\begin{lemma}\label{lemma_bd_ups}
With probability $1-(\mathbb{P}\left[\overline{E}_2\right]+p)$, it holds that
\begin{align}
    \sum_{t\geq 1}\upsilon_{t}\leq O\left( \frac{H^2S(N_1+1)}{\epsilon} \right) .\nonumber
\end{align}
\end{lemma}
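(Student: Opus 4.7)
The plan is to bound $\sum_{t\geq 1}\upsilon_t$ by a sequence of three reductions. First, since $\tilde{\lambda}_t(s,a)(s') \leq 1/(1-\gamma) \leq H$ by construction and $\mathrm{clip}(x,y) \leq x^2/y$ for $x \geq 0$, one obtains the pointwise bound $\mathrm{clip}(\tilde{\lambda}_t(s,a)(s'), \epsilon/(16H)) \leq (16H^2/\epsilon)\,\tilde{\lambda}_t(s,a)(s')$. Applying $P_{s,a}$ pointwise and plugging into the definition of $\upsilon_t$ gives
\[
\upsilon_t \leq \frac{16H^2}{\epsilon}\mathbb{I}[N_t(s_t,a_t) < N_0]\,P_{s_t,a_t}\tilde{\lambda}_t(s_t,a_t),
\]
so it suffices to bound $\sum_t \mathbb{I}[N_t(s_t,a_t) < N_0]\,P_{s_t,a_t}\tilde{\lambda}_t(s_t,a_t)$ by $\tilde{O}(S(N_1+1))$ (absorbing polylog factors into the constant).

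Next, I would expand $\tilde{\lambda}_t(s,a) = (1-\gamma)^{-1}\bar{N}_t(s,a)^{-1}\sum_{i=1}^{\bar{N}_t(s,a)}\lambda_{l_i(s,a)}$ and decompose the outer sum over $t$ according to the type-II stages of $(s,a)$, within each of which $\bar{N}_t(s,a) = T_{j-1}(s,a) := \sum_{k<j}\bar{e}_k$ is constant and there are $\bar{e}_j$ visits. Interchanging the stage index $j$ with the past-visit index $i$ yields, for each $(s,a)$,
\[
\sum_{t:(s_t,a_t)=(s,a)} P_{s,a}\tilde{\lambda}_t(s,a) = \frac{1}{1-\gamma}\sum_i P_{s,a}\lambda_{l_i(s,a)}\sum_{j > j_i(s,a)}\frac{\bar{e}_j}{T_{j-1}(s,a)},
\]
where $j_i(s,a)$ denotes the stage containing the $i$-th visit to $(s,a)$. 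Using $\bar{e}_j/T_{j-1} = T_j/T_{j-1} - 1 \leq 2\ln(T_j/T_{j-1})$, the inner stage-sum telescopes to $O(\ln N_0)$. Summing over $(s,a)$ reduces the task to bounding $\sum_t P_{s_t,a_t}\lambda_t$.

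Third, I would apply Lemma~\ref{lemma_berbound} with $M_t := \lambda_t(s_{t+1}) \in \{0,1\}$ and $\mu_t := P_{s_t,a_t}\lambda_t = \mathbb{E}[M_t \mid \mathcal{F}_{t-1}]$ to conclude that with probability at least $1-p$, $\sum_t P_{s_t,a_t}\lambda_t \leq 4\sum_t \lambda_t(s_{t+1}) + O(\iota)$. For each state $s$, observe that $\lambda_t(s) = 0$ once $\sum_a N_t(s,a) \geq N_1$, and each occurrence of $s_{t+1} = s$ contributes to the visit count of $s$; hence the number of $t$ with $s_{t+1} = s$ and $\lambda_t(s_{t+1}) = 1$ is at most $N_1$. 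Summing over $s$ gives $\sum_t \lambda_t(s_{t+1}) \leq S N_1$, and combining all three reductions yields the claimed bound $\sum_t \upsilon_t = O(H^2 S(N_1+1)/\epsilon)$.

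The main technical obstacle I anticipate is Step 2 — carrying out the interchange of summations cleanly across all $(s,a)$ pairs and stages, and precisely controlling the inner stage-sum $\sum_{j > j_i(s,a)}\bar{e}_j/T_{j-1}(s,a)$ via the geometric growth rate $\bar{e}_j \approx H(1+1/H)^{j-1}$. By contrast, Step 1 is a direct clip manipulation, and Step 3 is a standard concentration-plus-counting argument whose key ingredient (the $SN_1$ bound on undervisited-state transitions) follows immediately from the definition of $\lambda_t$ and the fact that $\sum_a N_t(s,a)$ strictly increases at each visit to $s$.
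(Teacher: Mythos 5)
Your overall route is a legitimate alternative to the paper's: instead of splitting $\frac{1}{\bar N_t}\sum_i\lambda_{l_i}$ into an indicator plus a $\tilde T/N_t$ tail via the quantity $\tilde T(s,a,s')$ (the paper's device), you interchange the stage sum with the past-visit sum and use the geometric stage growth to telescope to an $O(\ln N_0)$ factor, then finish with Lemma~\ref{lemma_berbound} and the $S(N_1+1)$ count exactly as the paper does. Steps 2 and 3 are sound (modulo the cosmetic point that $\bar N_t(s,a)$ in the definition of $\tilde\lambda_t$ is the count before the current type-\uppercase\expandafter{\romannumeral1} stage, not type-\uppercase\expandafter{\romannumeral2}; the telescoping works either way).

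The genuine gap is in the bookkeeping of powers of $H$, and it makes your final bound a factor $\Theta(H\ln N_0)$ larger than claimed. Your Step~1 spends the entire $16H^2/\epsilon$ budget up front via $\clip(x,y)\le x^2/y\le (\sup x)\,x/y$, so to conclude you would genuinely need $\sum_t \mathbb{I}[N_t<N_0]P_{s_t,a_t}\tilde\lambda_t = O(S(N_1+1))$ with no further $H$ or log factors. But your own Step~2 display carries the $\frac{1}{1-\gamma}=\Theta(H)$ that sits inside the definition of $\tilde\lambda_t$, plus the $O(\ln N_0)$ from the telescoped stage sum; combined with Step~3 this gives $\sum_t P_{s_t,a_t}\tilde\lambda_t = O\bigl(H\,S(N_1+1)\ln N_0\bigr)$, not $\tilde O(S(N_1+1))$ --- and $H$ is not a polylogarithmic factor that can be ``absorbed into the constant.'' The net result is $\sum_t\upsilon_t = O(H^3S(N_1+1)\ln(N_0)/\epsilon)$, which overshoots the stated $O(H^2S(N_1+1)/\epsilon)$. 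The fix is easy and makes your argument match the paper's strength: replace Step~1 by the trivial bound $\clip(x,y)\le x$, so that $\upsilon_t\le P_{s_t,a_t}\tilde\lambda_t$ directly; Steps 2--3 then give $\sum_t\upsilon_t = O(HS(N_1+1)\ln N_0)$, and since $\ln N_0 = O(\ln(SAH/\epsilon)) = O(H/\epsilon)$ in the relevant parameter regime, this is within the claimed $O(H^2S(N_1+1)/\epsilon)$.
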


Similarly to the proof of Lemma~\ref{lemma_lvf2_aux}, we also have the following lemma.
\begin{lemma}\label{lemma_add_1}
With probability $1-(\mathbb{P}\left[\overline{E}_2\right]+p)$, for any $t$ it holds that 
\begin{align}
  \mathbb{I}[N_t(s_t,a_t)\geq N_0]\cdot \mathrm{clip}(Q_t(s_t,a_t)-Q^*(s_t,a_t),\frac{3\epsilon}{4H})  = 0.  \nonumber
\end{align}
\end{lemma}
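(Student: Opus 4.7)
The plan is to mirror the short proof of Lemma~\ref{eq:lemma_add}, adapted to the \UCBSA setting. The key observation is that Lemma~\ref{lemma_lvf2_aux} is precisely the \UCBSA analog of Lemma~\ref{lemma_lemma2_aux}, so the same one-line deduction will go through: conditioned on $E_2$, every time step $t$ with $N_t(s_t,a_t)\geq N_0$ lies in $\mathcal{T}_1$, and Lemma~\ref{lemma_lvf2_aux} then guarantees $Q_t(s_t,a_t)-Q^*(s_t,a_t)\leq \tfrac{\epsilon}{2H}$.

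First I would invoke Lemma~\ref{lemma_lvf2_aux} to obtain the bound $Q_t(s_t,a_t)-Q^*(s_t,a_t) \leq \tfrac{\epsilon}{2H}$ for all $t$ such that $N_t(s_t,a_t)\geq N_0$, on the event $E_2$. Next, since $\tfrac{\epsilon}{2H} < \tfrac{3\epsilon}{4H}$, the definition of the clip operator forces $\mathrm{clip}(Q_t(s_t,a_t)-Q^*(s_t,a_t),\tfrac{3\epsilon}{4H}) = 0$ for every such $t$. For time steps with $N_t(s_t,a_t) < N_0$, the indicator $\mathbb{I}[N_t(s_t,a_t)\geq N_0]$ vanishes by itself, so the product is trivially zero. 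Combining both cases shows that, deterministically on $E_2$, the claimed quantity equals zero at every time step $t$.

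Finally, I would package the probability: since $\mathbb{P}[E_2]\geq 1-\mathbb{P}[\overline{E}_2]$, the conclusion already holds with probability at least $1-\mathbb{P}[\overline{E}_2]$. The extra additive $p$ in the stated failure probability is harmless slack, consistent with the formulation of Lemmas~\ref{lemma_bd_alpha_1} and~\ref{lemma_bd_ups}, and can be absorbed without any additional argument. There is no real obstacle here; the entire content is already supplied by Lemma~\ref{lemma_lvf2_aux}, and the role of this lemma in the paper is simply to package its consequence into the clipped form needed to combine with the other bounds on $\tilde{\beta}_t$.
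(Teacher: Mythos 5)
Your proposal is correct and matches the paper's intended argument: the paper gives no separate proof for this lemma, stating only that it follows in the same way as Lemma~\ref{eq:lemma_add}, whose one-line proof is exactly your deduction (apply the analog of Lemma~\ref{lemma_lemma2_aux} — here Lemma~\ref{lemma_lvf2_aux} — to get $Q_t(s_t,a_t)-Q^*(s_t,a_t)\leq \frac{\epsilon}{2H}<\frac{3\epsilon}{4H}$ on $E_2$, so the clip vanishes). Your handling of the probability bookkeeping, including treating the additive $p$ as slack, is also consistent with the paper.
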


By Lemmas  \ref{lemma_bd_b_1}, \ref{lemma_bd_alpha_1}, \ref{lemma_bd_ups} and \ref{lemma_add_1}, we obtain that 
\begin{lemma}
With probability $1 - \big(SA\check{J} (2\mathbb{P}[\overline{E}_2] +4p   )+3\mathbb{P}\left[ \overline{E}_2\right]+3p\big) $,  it holds that
	\begin{align}
	\sum_{t\geq 1}\tilde{\beta}_{t}\leq O\left(\frac{SAH^2\ln(\frac{4H}{\epsilon})\iota}{\epsilon} \right) +\tilde{O}\left(\frac{S^2A^2 H^{10}\iota }{\epsilon^{1/2}}\right).
	\end{align}
\end{lemma}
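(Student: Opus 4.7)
The strategy is a direct combination of the four preceding lemmas through the termwise decomposition of $\tilde{\beta}_t$ recorded just above the statement, followed by a union bound over the four failure events. No new probabilistic argument is required.

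First, I would rewrite
\begin{align}
\sum_{t\geq 1}\tilde{\beta}_t
&= 2\sum_{t\geq 1}\mathbb{I}[N_t(s_t,a_t)<N_0]\,\mathrm{clip}\!\left(\check{b}_t(s_t,a_t)-b^*_t(s_t,a_t),\tfrac{\epsilon}{16H}\right) + \sum_{t\geq 1}\alpha_t + \sum_{t\geq 1}\upsilon_t \nonumber \\
&\quad + \sum_{t\geq 1}\mathbb{I}[N_t(s_t,a_t)\geq N_0]\,\mathrm{clip}\!\left(Q_t(s_t,a_t)-Q^*(s_t,a_t),\tfrac{3\epsilon}{4H}\right), \nonumber
\end{align}
which is simply the definition of $\tilde{\beta}_t$ unpacked. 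Then I would apply the four lemmas one by one: Lemma~\ref{lemma_bd_b_1} bounds the first sum with failure probability $SA\check{J}(2\mathbb{P}[\overline{E}_2]+4p)$; Lemma~\ref{lemma_bd_alpha_1} bounds $\sum\alpha_t$; Lemma~\ref{lemma_bd_ups} bounds $\sum\upsilon_t$; and Lemma~\ref{lemma_add_1} forces the fourth sum to vanish identically. Each of the latter three lemmas fails with probability at most $\mathbb{P}[\overline{E}_2]+p$, so a union bound yields exactly the overall failure probability $SA\check{J}(2\mathbb{P}[\overline{E}_2]+4p)+3\mathbb{P}[\overline{E}_2]+3p$ that appears in the statement.

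Next, I would substitute the \UCBSA parameter choices $B=H^3$ and $N_1=\tilde{\Theta}(SAH^8\iota)$ into the resulting bounds. The estimate from Lemma~\ref{lemma_bd_alpha_1} reduces to $\tilde{O}(SAH^2\iota/\epsilon)$ once the dominant numerator is divided by $B=H^3$, with the extra $SABH^3$ and $SAH\ln N_0$ summands being $\epsilon$-free lower-order terms. Lemma~\ref{lemma_bd_b_1} already displays a leading $O(SAH^2\iota/\epsilon)$ piece together with several slower-in-$\epsilon$ pieces ($\epsilon^{-1/2},\epsilon^{-1/3},\epsilon^{-1/4}$) and an $\epsilon$-free $S^2A^2H^9\iota$ term. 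Lemma~\ref{lemma_bd_ups} contributes a polynomial-in-$S,A,H,\iota$ multiple of $1/\epsilon$.

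Finally I would group the leading $1/\epsilon$ contributions into the single clean expression $O(SAH^2\ln(4H/\epsilon)\iota/\epsilon)$ and absorb every remaining residual into one $\tilde{O}(S^2A^2H^{10}\iota/\epsilon^{1/2})$ envelope, producing the stated bound. The only mildly tedious point is checking, by a routine exponent comparison, that each residual piece is majorized by this envelope throughout the regime $\epsilon\in(0,1/\mathrm{poly}(S,A,1/(1-\gamma)))$ assumed in Theorem~\ref{thm2}; I do not foresee any conceptual obstacle, since the whole argument is just bookkeeping on top of the four already-established lemmas.
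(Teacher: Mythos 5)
Your decomposition and union bound are exactly what the paper does: the paper offers no proof of this lemma beyond citing Lemmas~\ref{lemma_bd_b_1}, \ref{lemma_bd_alpha_1}, \ref{lemma_bd_ups} and \ref{lemma_add_1}, so your write-up is if anything more explicit, and the failure-probability accounting matches. The one place where your ``routine exponent comparison'' is not actually routine is the $\upsilon_t$ term. Taken at face value, Lemma~\ref{lemma_bd_ups} as stated gives $\sum_{t\geq 1}\upsilon_t \leq O\left(H^2S(N_1+1)/\epsilon\right)$ with $N_1=\Theta\left(SAH^5B\ln(\tfrac{4H}{\epsilon})\iota\right)=\tilde{\Theta}(SAH^{8}\iota)$ (since $B=H^3$), i.e.\ a contribution of order $\tilde{O}\left(S^2AH^{10}\iota/\epsilon\right)$; this is majorized neither by $O\left(SAH^2\ln(\tfrac{4H}{\epsilon})\iota/\epsilon\right)$ nor by $\tilde{O}\left(S^2A^2H^{10}\iota/\epsilon^{1/2}\right)$ when $\epsilon$ is small, so your plan to fold it into the leading $1/\epsilon$ term and the $\epsilon^{-1/2}$ envelope would fail as written. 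The fix is to use the sharper bound actually established in the proof of that lemma (its restated form), $\sum_{t\geq 1}\upsilon_t\leq 64\log(\tfrac{16N_0H^2}{\epsilon})N_1=\tilde{O}(SAH^{8}\iota)$, which is $\epsilon$-free and is absorbed by the $\tilde{O}\left(S^2A^2H^{10}\iota/\epsilon^{1/2}\right)$ envelope; with that substitution your argument closes and is identical in structure to the paper's.
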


Following the same arguments in Section \ref{subsection:pet}, we obtain that with probability 
\[
1 - \left(SA\check{J} (2\mathbb{P}[\overline{E}_2] +4p   )+3\mathbb{P}\left[ \overline{E}_2\right]+34p \right) ,
\]
it holds that
\begin{align}
\sum_{t\geq 1}\mathbb{I}\left[   \beta_{t}> \frac{\epsilon}{8}    \right]   \leq O(\frac{SAH^2\ln(\frac{4H}{\epsilon})\iota}{\epsilon^2} ) +\tilde{O}(\frac{S^2A^2 H^{10}\iota }{\epsilon^{3/2}}).\label{eq_thm2_2}
\end{align}

By Proposition \ref{pro2},\eqref{eq_thm2_1.01} and  \eqref{eq_thm2_2}, we conclude that with probability $1 - \big(SA\check{J} (2\mathbb{P}[\overline{E}_2] +4p   )+3\mathbb{P}\left[ \overline{E}_2\right]+2H\Lambda p+3p \big) $, it holds that
\begin{align}
& \sum_{t\geq 1}\mathbb{I}\left[ V^*(s_{t})-V^{\pi_{t}}(s_{t})>\epsilon  \right]  \nonumber \\ & \leq             \sum_{t\geq 1}\mathbb{I}\left[   \sum_{s,a}w_{t}(s,a)b^*_{t}(s,a)>\frac{\epsilon}{8}  \right] +\sum_{t\geq 1}\mathbb{I}\left[  \beta_{t}> \frac{\epsilon}{4}    \right]       \nonumber
\\ & \leq O\left(\frac{SAH^3\Lambda^2\ln(\frac{4H}{\epsilon})\iota}{\epsilon^2} \right) +O\left(\frac{SAH^7\Lambda^2 \ln(N_0)}{\epsilon}\right)+ \tilde{O}\left(\frac{S^2A^2 H^{10}\iota }{\epsilon^{3/2}}\right) .\nonumber
\end{align}
The proof is finished by replacing $p$ with $\frac{p}{34 S^2A^2\check{J}^2 \log_{2}(N_0 H) +4H\Lambda}$.

\subsection{Missing Proofs in Appendix~\ref{App:C.2}}\label{App:C.3}

\subsubsection{Proof of Proposition \ref{pro2}} \label{subsubsec_proof_pro2}
\textbf{Proposition \ref{pro2} (restated).}\emph{ 
	With probability $\left(1-SA\left(4\check{J}(2\log_2(N_0 H)  +1 )  + \bar{J}\right) p \right)$, it holds that $Q_{t}(s,a)\geq Q^*(s,a)$ and $Q_{t+1}(s,a)\leq Q_{t}(s,a)$ for any $t\geq 1$ and $(s,a)\in \mathcal{S}\times \mathcal{A}$ .}
The rest of this subsection is devoted to the proof of Proposition \ref{pro2}.

Let $(s,a,j)$ be fixed.
Let $\udl{\mu}^{\reff}$ , $\udl{\check{\mu} }$, $\udl{\sigma}^{\reff}$, $\udl{\check{\sigma}}$ and $\udl{\check{b}}$ be the values of $\mu^{\reff}$, $\check{\mu}$, $\sigma^{\reff}$, $\check{\sigma}$ and  $\check{b}$ in \eqref{equpdate_alg2_1} in the $j$-th type-\uppercase\expandafter{\romannumeral1} update.
Define $\check{l}_{i}$ to be the time when the $i$-th visit in the $j$-th type-\uppercase\expandafter{\romannumeral1} stage of $(s,a)$ occurs and $l_{i}$ to be the time the $i$-th visit of $(s,a)$ occurs respectively. Let $\check{n}$ and $n$ be the shorthands of $\check{e}_{j}$ and $\sum_{i=1}^{j}\check{e}_{i}$ respectively.

Define
\begin{align}
   &  \chi^{(j)}_{1}(s,a):=  \frac{1}{ n }\sum_{i=1}^{n}\left(   V^{\reff}_{l_{i}}(s_{l_{i}+1}) -P_{s,a}V^{\reff}_{l_{i}}      \right) ;\nonumber
   \\ & \chi^{(j)}_{2}(s,a): =   \frac{1}{\check{n}}\sum_{i=1}^{\check{n}}\left(    W_{\check{l}_i}(s_{\check{l}_i+1}) -P_{s,a}W_{\check{l}_i}              \right).\nonumber
\end{align}

We consider the events:
\begin{align}
&\check{E}_{1}^{(j)}(s,a):= \left\{  |\chi^{(j)}_{1}(s,a)|  \leq 2\sqrt{2}\sqrt{ \frac{ \udl{\sigma}^{\reff}/n-(\udl{\mu}^{\reff}/n )^2    }{ n } \iota} +\frac{7H\iota^{3/4}}{ n^{3/4}  } +\frac{4H\iota}{n}    \right \} \nonumber
\\ & \text{and}\nonumber
\\ & \check{E}_{2}^{(j)}(s,a): =\left \{  |\chi^{(j)}_{2}(s,a)|
\leq 2\sqrt{2}\sqrt{\frac{ \udl{\check{\sigma}}/\check{n}  -( \udl{\check{\mu}} /\check{n} )^2   }{\check{n}}\iota}       +\frac{7H\iota^{3/4}}{\check{n}^{3/4} } +\frac{4H\iota}{\check{n}}  \right \},\nonumber
\end{align}
where $W_{t} = V_{t}- V^{\reff}_{t}$. If both $\check{E}_{1}^{(j)}(s,a)$ and $\check{E}_{2}^{(j)}(s,a)$ occurs, then we have that
\begin{align}
  &  r(s,a)+ \frac{\udl{\mu }^{\reff} }{n} +\frac{\udl{\check{\mu}}}{\check{n}}+ \udl{\check{b}}  \nonumber
  \\ &  = r(s,a) +P_{s,a}\left(\frac{1}{n}\sum_{i=1}^n V^{\reff}_{l_i}\right) +P_{s,a}\left(  \frac{1}{\check{n}}\sum_{i=1}^{\check{l}_i }  (V_{\check{l}_i}  -V^{\reff}_{\check{l}_i} )\right)+\chi_1^{(j)}(s,a)+ \chi_2^{(j)}(s,a)+ \udl{\check{b}}  \nonumber
  \\ & \geq r_h(s,a)+ P_{s,a}\left(\frac{1}{\check{n}}\sum_{i=1}^{\check{l}_i }  V_{\check{l}_i}  \right)  + \chi_1^{(j)}(s,a)+ \chi_2^{(j)}(s,a)+ \udl{\check{b}}  \label{eq_thm2_3}
  \\ & \geq r_h(s,a)+ P_{s,a} \left(\frac{1}{\check{n}}   \sum_{i=1}^{\check{l}_i }  V_{\check{l}_i}  \right) ,\label{eq_thm2_4}
\end{align}
where Inequality \eqref{eq_thm2_3} holds by the fact $V^{\reff}_{t}$ is non-increasing in $t$ and Inequality \eqref{eq_thm2_4} follows by the definition of $\udl{\check{b}}$.

On the other hand, for the $j'$-th type-\uppercase\expandafter{\romannumeral2} update, we  consider the following same events as in the proof of Proposition \ref{pro1},
\begin{align}
\bar{E}^{(j')}(s,a) = \left\{    \frac{1}{\bar{e}_{j'}}\sum_{i=1}^{\bar{e}_{j'}}V^*(s_{ \bar{l}_{i}+1  })+ \bar{b}^{(j)} \geq P_{s,a}V^*     \right\}.
\end{align}
Assuming $\bar{E}^{(j')}(s,a)$ holds, we then have 
\begin{align}
&r(s,a)+ \frac{\gamma}{\bar{e}_{j'}}\sum_{i=1}^{\bar{e}_{j'}}V_{\bar{l}_i }(s_{ \bar{l}_{i}+1  }) + \bar{b}^{(j)}  \nonumber
\\& \geq r(s,a) + \gamma P_{s,a}V^* +\gamma \left(\frac{1}{\bar{e}_{j'}}\sum_{i=1}^{\bar{e}_{j'}}(V_{\bar{l}_i }(s_{ \bar{l}_{i}+1  })  - V^*(  s_{ \bar{l}_{i}+1  } )) \right).\label{eq_thm2_5}
\end{align}
 Let 
 \begin{align}
 E_{2} = (\cap_{s,a,j}\check{E}_{1}^{(j)}(s,a) )\cap (\cap_{s,a,j}\check{E}_{2}^{(j)}(s,a)) \cap (\cap_{s,a,j'}\bar{E}^{(j')}(s,a) ). \label{eq_def_E2}
 \end{align}
Assuming $E_{2}$ holds, by the update rule \eqref{equpdate_alg2_1} and \eqref{equpdate_alg2_2} and noting that $V_{t}$ is non-increasing , for any $t\geq 2$ and $(s,a)$, it holds either $Q_{t}(s,a) = Q_{t-1}(s,a)$ or
\begin{align}
Q_{t}(s,a)\geq r_{s,a}+ \gamma P_{s,a}V^* +\sum_{t'<t} v_{t'}(V_{t'}-V^*) \nonumber
\end{align}
for some non-negative $S$-dimensional vectors $v_{1},v_{2},\dots,v_{t-1}$. Noting that $Q_{1}(s,a)= \frac{1}{1-\gamma}\geq Q^*(s,a)$ for any $(s,a)$, the conclusion follows easily by induction.

Therefore, it suffices to bound $\mathbb{P}\left[E_2\right]$.
\begin{lemma} For any $(s,a,j)$, $\mathbb{P}\left[ \check{E}_{1}^{(j)}(s,a) \right]\geq 1-2(\log_2(N_0H)+1)p$.
\end{lemma}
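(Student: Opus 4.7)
The plan is to apply the self-normalized Bernstein inequality (Lemma~\ref{self-norm}) to the martingale difference sequence $X_i := V^{\reff}_{l_i}(s_{l_i+1}) - P_{s,a}V^{\reff}_{l_i}$, adapted to the filtration $\mathcal{F}_i$ generated by the algorithm's history up to state $s_{l_i+1}$. Since the reference value function is only ever overwritten by the current $V$-function, which is bounded by $1/(1-\gamma)\leq H$, each $|X_i|\leq H$. Invoking Lemma~\ref{self-norm} with $c=H$ and a base scale chosen so that the peeling budget $nc^2/\epsilon$ is at most $N_0 H$ for all $n\leq N_0$, I would obtain that with probability at least $1-2(\log_2(N_0H)+1)p$,
\[
\Bigl|\sum_{i=1}^{n}X_i\Bigr| \;\leq\; 2\sqrt{2}\sqrt{V_n\,\iota} \;+\; O(H\iota),
\]
where $V_n := \sum_{i=1}^n \bigl(P_{s,a}(V^{\reff}_{l_i})^2 - (P_{s,a}V^{\reff}_{l_i})^2\bigr)$ is the sum of true conditional variances.

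The second step is to replace the unobservable $V_n$ by the empirical quantity $\udl{\sigma}^{\reff}/n - (\udl{\mu}^{\reff}/n)^2$ that actually appears in the definition of $\check{b}$. Decomposing the difference
\[
\frac{V_n}{n} - \left(\frac{\udl{\sigma}^{\reff}}{n} - \Bigl(\frac{\udl{\mu}^{\reff}}{n}\Bigr)^2\right),
\]
I would bound it by three contributions: (i) an Azuma-type concentration on the martingale with increments $(V^{\reff}_{l_i}(s_{l_i+1}))^2 - P_{s,a}(V^{\reff}_{l_i})^2$, which are bounded in magnitude by $H^2$, giving deviation $O(H^2\sqrt{n\iota})$; (ii) a cross term of the form $(A-B)(A+B)$ with $A-B = M_n/n$ already controlled by the first step and $A+B = O(H)$, bounded by $O(H^2\sqrt{\iota n})$; and (iii) a Jensen-type gap $(\frac{1}{n}\sum_i P_{s,a}V^{\reff}_{l_i})^2 - \frac{1}{n}\sum_i (P_{s,a}V^{\reff}_{l_i})^2$ which is non-positive and hence in our favour. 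Combined, $V_n \leq n\bigl(\udl{\sigma}^{\reff}/n - (\udl{\mu}^{\reff}/n)^2\bigr) + O(H^2\sqrt{n\iota})$ with high probability.

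Substituting this back and using $\sqrt{a+b}\leq \sqrt{a}+\sqrt{b}$, together with the arithmetic identity $\sqrt{\iota\cdot H^2\sqrt{n\iota}}/n = H\iota^{3/4}/n^{3/4}$, would yield precisely the form of $\check{E}_1^{(j)}(s,a)$ once $O$-constants are absorbed into the displayed $7$ and $4$. The main obstacle I anticipate is the bookkeeping for the failure probability: the auxiliary Azuma concentration for the squared-increment martingale must either share the same dyadic peeling budget as the primary application of Lemma~\ref{self-norm}, or be spent with a single fixed $p$ and then absorbed, so that the overall failure probability does not inflate past $2(\log_2(N_0H)+1)p$. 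Tuning the base scale $\epsilon$ in Lemma~\ref{self-norm} to keep $nc^2/\epsilon\leq N_0H$ while simultaneously absorbing the resulting $2\sqrt{\epsilon\iota}$ residual into the additive $O(H\iota/n)$ term is the subtle piece of parameter selection.
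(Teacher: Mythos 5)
Your proposal is correct and follows essentially the same route as the paper: Lemma~\ref{self-norm} with base scale $\epsilon=H$ applied to the martingale $V^{\reff}_{l_i}(s_{l_i+1})-P_{s,a}V^{\reff}_{l_i}$, then replacement of the true conditional variance sum by the empirical $\udl{\sigma}^{\reff}/n-(\udl{\mu}^{\reff}/n)^2$ via exactly the paper's three-term decomposition (Azuma on the squared-value martingale, the cross term controlled through the already-bounded first-moment martingale, and the non-positive Cauchy--Schwarz/Jensen gap), with the extra $O(H^2\sqrt{n\iota})$ slack turning into the $7H\iota^{3/4}/n^{3/4}$ term. The failure-probability accounting ($2\log_2(nH)p$ from the peeling plus $2p$ from the two auxiliary Azuma bounds) matches the stated $2(\log_2(N_0H)+1)p$.
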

\begin{proof} Define $\mathbb{V}(x,y) = xy^2-(xy)^2$ for two vectors with the same dimension.
Noticing that $s_{l_{i}+1}$ is independent of $V^{\reff}_{l_{i}}$ conditioned on $\mathcal{F}_{l_{i}-1}$,  by Lemma \ref{self-norm} with $\epsilon = H$,  we have that with probability $(1-  2\log_{2}(nH  )p  )$, it holds that
\begin{align}
|\chi_{1}^{(j)}(s,a)|  &= \left|  \frac{1}{ n }\sum_{i=1}^{n}\left(   V^{\reff}_{l_{i}}(s_{l_{i}+1}) -P_{s,a}V^{\reff}_{l_{i}}      \right)\right| \nonumber
\\ &  \leq 2\sqrt{2}\sqrt{\frac{  (\sum_{i=1}^n \mathbb{V} (P_{s,a}, V^{\reff}_{l_{i}} )  ) \iota }{  n^2  }} + \frac{\sqrt{2H\iota}}{n}+ \frac{2H\iota}{n} \nonumber
\\ & \leq   2\sqrt{2}\sqrt{\frac{  (\sum_{i=1}^n \mathbb{V} (P_{s,a}, V^{\reff}_{l_{i}} )  ) \iota }{  n^2  }}  +\frac{4H\iota}{n}.\label{eq_thm2_6}
\end{align}
	
 By definition of $\udl{\sigma}^{\reff}$ and $\udl{\mu}^{\reff}$, we have that 
\begin{align}
\sum_{i=1}^{n}\mathbb{V}(P_{s,a},V^{\reff}_{l_i})  &= \sum_{i=1}^n \left( P_{s,a}(V^{\reff}_{l_i})^2-  (P_{s,a} V^{\reff}_{l_i})^2  \right)  \nonumber
\\ & = \sum_{i=1}^n ( V_{l_i}^{\reff}( s_{l_i+1}   )   )^2 -\frac{1}{n}\left(       \sum_{i=1}^n V_{l_i}^{\reff}( s_{l_i+1  } )       \right)^2+\chi_3  +\chi_{4}+ \chi_5\nonumber
\\ & = \udl{\sigma}^{\reff} -\frac{1}{n}(\udl{\mu}^{\reff})^2 +\chi_3+\chi_4+\chi_5,\nonumber
\end{align}
	where 
	\begin{align}
 &\chi_3  : =	\sum_{i=1}^n \left(   P_{s,a}(V^{\reff}_{l_i})^2  -( V^{\reff}_{l_i} (s_{l_i+1}) )^2  \right) \nonumber
 \\ & \chi_4: = \frac{1}{n} \left(         \sum_{i=1}^n V^{\reff}_{l_i} (s_{l_i+1})    \right)^2 -\frac{1}{n} \left(      \sum_{i=1}^n P_{s,a}V^{\reff}_{l_i}  \right) \nonumber
 \\ & \chi_5 = \frac{1}{n} \left(       \sum_{i=1}^n P_{s,a}V_{l_i}^{\reff}    \right)^2 -\sum_{i=1}^n (P_{s,a}V^{\reff}_{l_i})^2.\nonumber
	\end{align}

By Azuma's inequality, we have that
\begin{align}
&\mathbb{P}\left[|\chi_3|> H^2\sqrt{2n\iota} \right]\leq p \nonumber
\end{align}
and
\begin{align}
 & \mathbb{P}\left[  |\chi_{4}| > 2H^2\sqrt{2n\iota}       \right]\leq \mathbb{P}\left[  2H \cdot|\sum_{i=1}^n \left(  V^{\reff}_{l_i}(s_{l_i+1})-P_{s,a}V^{\reff}_{l_i}  \right)  |  > 2H^2\sqrt{2n\iota}\right] \leq p.\nonumber
\end{align}
On the other hand, by Cauchy-Schwartz inequality, we have $\chi_5\leq 0$. It then follow that
\begin{align}
\mathbb{P}\left[   \sum_{i=1}^n \mathbb{V}(P_{s,a},V^{\reff}_{l_i})> \udl{\sigma}^{\reff}-\frac{1}{n}( \udl{\mu}^{\reff} )^2+ 5H^2\sqrt{n\iota}    \right]\leq 2p.\label{eq_thm2_7}
\end{align}

Combining \eqref{eq_thm2_6} and \eqref{eq_thm2_7}, we have that
\begin{align}
\mathbb{P}\left[\check{E}^{(j)}_{1}(s,a)  \right] &\geq 1- \mathbb{P}\left[    |\chi_1^{(j)}(s,a)|>2\sqrt{2}\sqrt{\frac{  (\sum_{i=1}^n \mathbb{V} (P_{s,a}, V^{\reff}_{l_{i}} )  ) \iota }{  n^2  }}  +\frac{4H\iota}{n}\right]\nonumber
 \\& \quad -\mathbb{P}\left[   \sum_{i=1}^n \mathbb{V}(P_{s,a},V^{\reff}_{l_i})> \udl{\sigma}^{\reff}-\frac{1}{n}( \udl{\mu}^{\reff} )^2+ 5H^2\sqrt{n\iota}    \right] \nonumber
\\ & \geq 1-2(\log_2(nH)+1)p\nonumber
\\ & \geq 1-2(\log_2(N_0H)+1)p.\nonumber \end{align}
\end{proof}

Following similar arguments as above, we can prove that $\mathbb{P}\left[  \check{E}_2^{(j)}(s,a) \right]\geq 1-2(\log_2(N_0H)+1)p$ for any $1\leq j\leq \check{J}$. At last, by Azuma's inequality, $\mathbb{P}\left[ \bar{E}^{(j')}(s,a) \right]\geq 1-p$ for any $j'$ and $(s,a)$.  Via a union bound over $1\leq j\leq \check{ J}$ and $1\leq j'\leq \bar{J}$, we obtain that 
\begin{align}
    \mathbb{P}\left[E_2\right]\geq 1 -4SA\check{J}(\log_{2}(N_0 H)+1)p -SA\bar{J}p.\label{eq_thm2_0.5}
\end{align}
The proof is completed.

\subsubsection{Proof of Lemma \ref{lemma:bd_bellman}}\label{sec:proof-bellman}
\medskip
\noindent \textbf{Lemma~\ref{lemma:bd_bellman} (restated).}{ Define $l_{i}(s,a)$ to be the time the $i$-th visit of $(s,a)$ occurs and $\bar{N}_{t}(s,a)$ to be the visit count of $(s,a)$ before the current stage of $(s,a)$.
Conditioned on $E_{2}$,  it holds that
\begin{align}
 Q_{t}(s,a)- r(s,a)-P_{s,a}V_{t} \leq    P_{s,a}(V_{\udl{\rho}_{t}}(s,a) -V_{t})+\frac{1}{1-\gamma} P_{s,a}\left(\frac{1}{n}\sum_{i=1}^n \lambda_{l_i}  \right).\nonumber
\end{align}
for any $t\geq 1$ and any $(s,a)\in \mathcal{S}\times\mathcal{A}$.
}

Let $(s,a,j)$ be fixed. We use the same notations as that of Section \ref{subsubsec_proof_pro2}. For any $t$ in the $j+1$-th type-\uppercase\expandafter{\romannumeral1} stage, by the arguments to derive \eqref{eq_thm2_4}, we have that
\begin{align}
    Q_{t}(s,a)& = r(s,a) +\frac{\udl{\mu^{\reff}}} {n}+ \frac{\udl{\check{u}}}{\check{n}}+ \udl{\check{b}} \nonumber
    \\ &\leq  r(s,a)  +P_{s,a}\left(\frac{1}{n}\sum_{i=1}^n V^{\reff}_{l_i}\right) +P_{s,a}\left(  \frac{1}{\check{n}}\sum_{i=1}^{\check{l}_i }  (V_{\check{l}_i}  -V^{\reff}_{\check{l}_i} )\right)  \nonumber
    \\ & \leq r(s,a)+P_{s,a}V_{t} + P_{s,a}(V_{\udl{\rho}_{t}}(s,a) -V_{t})+ P_{s,a}\left(\frac{1}{n}\sum_{i=1}^n V^{\reff}_{l_i} -V^{\Reff}  \right)\nonumber
    \\ & \leq   r(s,a)+P_{s,a}V_{t} + P_{s,a}(V_{\udl{\rho}_{t}}(s,a) -V_{t})+\frac{1}{1-\gamma} P_{s,a}\left(\frac{1}{n}\sum_{i=1}^n \lambda_{l_i}  \right)  .\label{eq:pf-bellman1}
\end{align}
The proof is completed.

\subsubsection{Proof of Lemma \ref{boundnew}} \label{sec:proof-boundnew}
\medskip
\noindent \textbf{Lemma~\ref{boundnew} (restated).} {\it Define $\Lambda =\left\lceil\log_{2}(\frac{256H^4}{\epsilon^2})\right\rceil $. With probability $(1-2H\Lambda p)$, it holds that
	\begin{align}
	\sum_{t\geq 1}\mathbb{I}\left[   \sum_{s,a}w_{t}(s,a)b^*_{t}(s,a)>\frac{\epsilon}{8}  \right]\leq O\left(    \frac{SAH^3\Lambda^3\iota}{\epsilon^2}  +\frac{SAH^4B\Lambda^2 \ln(N_0) }{\epsilon}    \right) .\nonumber
	\end{align}
}

The rest of this subsection is devoted to the proof of Lemma~\ref{boundnew}.

Define $\mathcal{S}_{t,0} :=\{(s,a)| n_{t}(s,a) <\iota \}$, $\mathcal{S}_{t,u} := \{ (s,a)|  2^{u-1}\iota\leq n_{t}(s,a)<2^{u}\iota  \}$ for $u=1,2,\dots,\Lambda =\lceil \log_{2}(\frac{256H^4}{\epsilon^2})\rceil$ and $\overline{\mathcal{S}}_t := \{ (s,a)| n_{t}(s,a)> \frac{H^4}{\epsilon^2}\}$ . Furthermore, we define 
\begin{align}
\beta^*_{t,u}:= \sum_{(s,a)\in \mathcal{S}_{t,u}}w_{t}(s,a)b_{t}^*(s,a) \nonumber
\end{align}
 and 
 \begin{align}
 \beta^*_{t} : = \sum_{u}\beta^*_{t,u} =\sum_{s,a}w_{t}(s,a) b_{t}^*(s.a) . \nonumber
 \end{align}
 
By the definition of $b_{t}^*(s,a)$, we obtain that for $1\leq u\leq \Lambda$, 
\begin{align}
\beta^*_{t,i} &= \sum_{(s,a)\in \mathcal{S}_{t,u}}w_{t}(s,a)b_{t}^*(s,a) \nonumber
\\ & \leq 2\sqrt{2\iota}\sum_{(s,a) \in \mathcal{S}_{t,u}}w_{t}(s,a)\sqrt{\frac{\mathbb{V}(P_{s,a},V^*) }{n_{t}(s,a)}}\nonumber
\\ & \leq  2\sqrt{\frac{2}{2^{u-1}}}\sum_{(s,a)\in \mathcal{S}_{t,u}} w_{t}(s,a)\sqrt{\mathbb{V}(P_{s,a},V^*)}\nonumber
\\ & \leq 2\sqrt{\frac{2}{2^{u-1}}} \cdot \sqrt{ \sum_{(s,a)\in \mathcal{S}_{t,u}  } w_{t}(s,a) } \cdot \sqrt{\sum_{ (s,a)\in \mathcal{S}_{t,u}   }w_{t}(s,a)\mathbb{V}(P_{s,a},V^*) },\label{eq_thm2_8}
\end{align}
and for $0\leq u \leq \Lambda$,
\begin{align}
\beta^*_{t,u} \leq  \frac{1}{1-\gamma}\sum_{(s,a)\in  \mathcal{S}_{t,u}}w_{t}(s,a) .\nonumber
\end{align}
 Define $w_{t,u}: = \sum_{(s,a)\in \mathcal{S}_{t,u}} w_{t}(s,a)$ and $\nu_{t}  = \sum_{s,a} w_{t}(s,a)\mathbb{V}(P_{s,a},V^*) $.
 Note that 
 \begin{align}
 \nu_{t} &= \sum_{s,a}w_{t}(s,a) (P_{s,a}(V^*)^2-  (P_{s,a}V^*)^2 )  \nonumber
 \\ & = \sum_{s,a}w_{t}(s,a)P_{s,a}(V^*)^2 -\frac{1}{\gamma^2} \sum_{s,a}w_{t}(s,a)(Q^*(s,a)-r(s,a) )^2  \nonumber
 \\ & \leq \sum_{s,a}w_{t}(s,a)P_{s,a}(V^*)^2 - \sum_{s,a}w_{t}(s,a)(Q^*(s,a)-r(s,a) )^2 \nonumber
 \\ & \leq \sum_{s,a}w_{t}(s,a)(P_{s,a}(V^*)^2 - (Q^*(s,a))^2)  + \frac{2H}{1-\gamma} \nonumber
 \\ & =\sum_{s,a}w_{t}(s,a) (P_{s,a}(V^*)^2 -(V^*(s))^2    ) + \sum_{s,a}w_{t}(s,a) ((V^*(s))^2-(Q^*(s,a))^2)  )+ \frac{2H}{1-\gamma}  \nonumber
 \\ & \leq \sum_{s,a}w_{t}(s,a) (P_{s,a}(V^*)^2 -(V^*(s))^2    ) + \frac{2}{1-\gamma}\sum_{s,a}w_{t}(s,a) (V^*(s)-Q^*(s,a)) + \frac{2H}{1-\gamma}  \nonumber
 \\ & \leq \frac{1}{(1-\gamma)^2}+   \frac{2}{1-\gamma}\sum_{s,a}w_{t}(s,a) (V^*(s)-Q^*(s,a)) +\frac{2H}{1-\gamma} \label{eq_thm2_9}
 \\ & \leq          \frac{1}{(1-\gamma)^2} +           \frac{2}{(1-\gamma)}(V^*(s_{t})-V^{\pi_{t}}(s_{t}) )+\frac{2H}{1-\gamma} \label{eq_thm2_10}
 \\ & \leq 5H^2.  \label{eq_thm2_11}
 \end{align}
Here Inequality   \eqref{eq_thm2_9} holds by the fact that 
\begin{align}
\sum_{s,a}w_{t}(s,a)(P_{s,a}-\textbf{1}_{s})(V^*)^2 &=   \sum_{s,a}\big(\mathbb{I}\left[a = \pi_{t}(s) \right] \sum_{i=0}^{H-1} \mathbf{1}_{s_t}^{\top} (\gamma P_{\pi_t})^i \mathbf{1}_{s}\big)\cdot  (P_{s,a}-\textbf{1}_{s})(V^*)^2 \nonumber
\\ & = \sum_{s,a} \mathbb{I}\left[a = \pi_{t}(s) \right]  \big(\mathbf{1}_{s_t}^{\top} (\gamma P_{\pi_t})^H \mathbf{1}_{s} -\mathbb{I}\left[ s=s_{t} \right]\big)  (V^*(s))^2\nonumber
\\ & \leq \frac{1}{(1-\gamma)^2},\nonumber
\end{align}
and Inequality \eqref{eq_thm2_10} is due to the bound on the following  telescoping sum,
 \begin{align}
 V^{*}(s_{t})-V^{\pi_{t}}(s_{t}) & = \sum_{s,a}\big(\mathbb{I}\left[a = \pi_{t}(s) \right] \sum_{i=0}^{\infty} \mathbf{1}_{s_t}^{\top} (\gamma P_{\pi_t})^i \mathbf{1}_{s}\big) \cdot (V^*(s)-Q^*(s,a)) \nonumber
 \\ &  \geq \sum_{s,a}w_{t}(s,a)(V^*(s)-Q^*(s,a)).\nonumber
 \end{align}
 
Combining \eqref{eq_thm2_11} with the fact that  $\sum_{(s,a)\in \overline{\mathcal{S}}_t} w_{t}(s,a)b^*_{t}(s,a)\leq \frac{\epsilon}{16}$,  we obtain that ,  if $\beta^*_{t}> \frac{\epsilon}{8} $, there exists $u$ such that  $\beta^*_{t,u} > \frac{\epsilon}{16\Lambda}$, which implies that $w_{t,u}>\max\{ \frac{1}{10240} \cdot \frac{2^{u-1}\epsilon^2}{H^2 \Lambda^2 } ,\frac{\epsilon(1-\gamma	)}{16\Lambda} \}$.

We will  bound the number of steps in which   there exists $u$ satisfying  $w_{t,u}>\max\{ \frac{1}{10240} \cdot \frac{2^{u-1}\epsilon^2}{H^2 \Lambda^2 } ,\frac{\epsilon(1-\gamma	)}{16\Lambda} \}$ by following lemma.

\begin{lemma}\label{lemma_newbound_1}
For any $k\in \{1,2,\dots,H\}$ and $u\in \{ 1,2,\dots,\Lambda\}$,  with probability $1-p$, 
\begin{align}
\sum_{t\geq 0} \mathbb{I}\left[    w_{tH+k,u}> \frac{1}{10240} \cdot \frac{2^{u-1}\epsilon^2}{H^2 \Lambda^2 }     \right]\leq O \left(      \frac{SAB H^4\Lambda^2 \ln(N_0)}{2^{u-1}\iota\epsilon^2} +\frac{SAH^2\Lambda^2\iota}{\epsilon^2}      \right)  . \label{eq_thm2_12}
\end{align}
Moreover, for any $u\geq 0$, with probability $1-p$,
\begin{align}
\sum_{t\geq 0}\mathbb{I} \left[ w_{tH+k,u}>\frac{\epsilon(1-\gamma)}{16\Lambda}    \right] \leq  O\left(\frac{H\Lambda }{\epsilon}(SAH^2B\ln(N_0)+SAH +2^{u+2}SA\iota ) \right)   .\label{eq_thm2_15}
\end{align}
\end{lemma}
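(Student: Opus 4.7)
The plan is to relate the pseudo-quantity $w_{tH+k,u}$ to the actual visit count
$Y_t := \sum_{i=0}^{H-1}\mathbb{I}[(s_{tH+k+i},a_{tH+k+i})\in \mathcal{S}_{tH+k,u}]$
of level-$u$ state-actions over the $H$-step window $[tH+k,\, tH+k+H-1]$, and then apply Lemma~\ref{lemma_berbound} to bound the number of windows in which $w_{tH+k,u}$ exceeds the given threshold. The core coupling step is: consider a virtual trajectory that starts at $s_{tH+k}$ and follows the fixed policy $\pi_{tH+k}$ for the next $H$ steps, ignoring any $Q$-updates. Its analogous count $\tilde{Y}_t$ satisfies $\mathbb{E}[\tilde Y_t\mid\mathcal{F}_{tH+k-1}]\geq w_{tH+k,u}$, since the discount factors $\gamma^i\leq 1$ in $w$ can only shrink relative to the undiscounted occupation measure. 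Introducing $U_t\in\{0,1\}$ to indicate that at least one $Q$-update occurs inside the window, the coupling yields $\tilde Y_t\leq Y_t+HU_t$ pointwise (the two trajectories agree whenever $U_t=0$, and differ by at most $H$ otherwise), so $M_t:=(Y_t+HU_t)/(2H)\in[0,1]$ is adapted with conditional mean $\mu_t=\mathbb{E}[M_t\mid\mathcal{F}_{tH+k-1}]\geq w_{tH+k,u}/(2H)$.

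Two deterministic counting bounds then come into play. Since $(s,a)\in\mathcal{S}_{tH+k,u}$ requires $n_{tH+k}(s,a)\in[2^{u-1}\iota,2^u\iota)$ and at most $H$ additional visits occur inside the window, each $(s,a)$ contributes to $\sum_t Y_t$ at most $2^{u-1}\iota+H$ times, giving $\sum_t Y_t\leq SA(2^{u-1}\iota+H)$. The windows (for fixed $k$) are $H$-disjoint, so $\sum_t U_t$ is bounded by the total number of $Q$-updates, namely $O(SA(\check J+\bar J))=O(SABH\ln N_0)$. Applying Lemma~\ref{lemma_berbound} to $(M_t)$ on its natural filtration yields, with probability at least $1-p$, $\sum_t w_{tH+k,u}\leq O(\sum_t Y_t + H\sum_t U_t + H\iota)$. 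Since $c\cdot\sum_t\mathbb{I}[w_{tH+k,u}>c]\leq \sum_t w_{tH+k,u}$, substituting the deterministic bounds and dividing by the threshold $c=\tfrac{1}{10240}\cdot\tfrac{2^{u-1}\epsilon^2}{H^2\Lambda^2}$ produces the first inequality after collecting terms; replacing $c$ by $\tfrac{\epsilon(1-\gamma)}{16\Lambda}$, and using that $\sum_t Y_t \leq SA(\iota + H)$ also covers the $u=0$ case, yields the second inequality.

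The main obstacle is the coupling step, i.e., verifying simultaneously that the virtual count $\tilde Y_t$ has conditional mean at least $w_{tH+k,u}$ and that $\tilde Y_t\leq Y_t+HU_t$ holds pointwise. The subtlety is that $U_t$ is not $\mathcal{F}_{tH+k-1}$-measurable, so one cannot simply condition on $\{U_t=0\}$; the coupling circumvents this by building the virtual trajectory purely from $\mathcal{F}_{tH+k-1}$-measurable data (the starting state $s_{tH+k}$ and the frozen policy $\pi_{tH+k}$), and then absorbs the possible discrepancy into the additive term $HU_t$. Once the lower bound on $\mu_t$ and the coupling are in place, the rest is bookkeeping: Lemma~\ref{lemma_berbound} is applied as a black box, and the deterministic bounds on $\sum_t Y_t$ and $\sum_t U_t$ follow from the stage structure and level widths.
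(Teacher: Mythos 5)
Your proposal is correct and follows essentially the same route as the paper: the paper's surrogate $\hat{w}_{t}=(1-\tilde{U}_{t,u})\sum_{i=0}^{H-1}\mathbb{I}[(s_{t+i},a_{t+i})\in\mathcal{S}_{t+i,u}]+H\tilde{U}_{t,u}$ is exactly your $Y_t+HU_t$ (up to normalization), the inequality $\mathbb{E}[\hat{w}_{tH+k}\mid\mathcal{F}^{k}_{t-1}]\geq w_{tH+k,u}$ is the coupling you spell out, and the remaining steps (deterministic bounds $\sum_t Y_t=O(2^uSA\iota+SAH)$ and $\sum_t U_t=O(SA(\check{J}+\bar{J}))$, Lemma~\ref{lemma_berbound}, then dividing by the threshold) coincide. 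The only cosmetic difference is that you freeze the set $\mathcal{S}_{tH+k,u}$ over the window while the paper uses the time-varying set and therefore also folds set changes into its update indicator $\tilde{U}_{t,u}$; both bookkeeping choices yield the same bounds.
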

\begin{proof}
Define 
\begin{align}
\tilde{U}_{t,u} =\mathbb{I}\left[  \exists (s,a), i\in \{1,2,\dots,H-1\}, \text{ such that } \mathcal{S}_{t+i,u}\neq \mathcal{S}_{t,u} \text{ or }  Q_{t+i}(s,a)\neq Q_{t}(s,a)        \right], \nonumber
\end{align}
 and 
 \begin{align}\hat{w}_{t}(s,a) =(1-\tilde{U}_{t,u})\sum_{i=0}^{H-1} \mathbb{I}\left[ (s_{t+i},a_{t+i}) \in \mathcal{S}_{t+i,u} \right]+H \tilde{U}_{t,u}. \nonumber
 \end{align}
 Note that $\hat{w}_{tH+k}$ is measurable with respect to $\mathcal{F}_{t}^{k} = \mathcal{F}_{(t+1)H+k-1}$ and $\mathbb{E}\left[\hat{w}_{tH+k} |\mathcal{F}^{t-1}_{k} \right]\geq w_{tH+k}$, we then have that by Lemma \ref{martingale_gap},
 \begin{align}
 &\mathbb{P}\Big[ \sum_{t\geq 0}w_{tH+k}> 8SAH^2B\ln(N_0)+8SAH +2^{u+2}SA\iota  , \nonumber \\
 & \qquad \qquad \qquad  \sum_{t\geq 0}\hat{w}_{tH+k}\leq 2SAH^2B\ln(N_0)+2SAH +2^{u}SA\iota  \Big]\leq p.\label{eq_thm2_13}
 \end{align}
 On the other hand, we have that
 \begin{align}
 \sum_{t\geq 0}\hat{w}_{tH+k} &\leq H\sum_{t\geq 0}  \hat{U}_{tH+k}+\sum_{t\geq 1}\mathbb{I}\left[(s_{t},a_{t})\in \mathcal{S}_{t,u}  \right] \nonumber
 \\ & \leq 2SAH^2B\ln(N_0)+2SAH  +\sum_{t\geq 1}\mathbb{I}\left[(s_{t},a_{t})\in \mathcal{S}_{t,u}  \right]  \label{eq_thm2_14}
  \\ & \leq  2SAH^2B\ln(N_0)+2SAH +2^{u}SA\iota , \label{eq_thm2_14.5}
 \end{align}
 where Inequality \eqref{eq_thm2_14} is because $\mathcal{S}_{t,u}$ changes at most $2SA$ times in $t$,  and Inequality \eqref{eq_thm2_14.5} is by the fact that $ 2^{u-1}\iota \leq n_{t}(s,a)< 2^u\iota$ implies that $2^{u}\iota \leq N_{t}(s,a) <2^{u+1}\iota$. It then follows that
 \begin{align}
 \mathbb{P}\left[    \sum_{t\geq 0}w_{tH+k}> 8SAH^2B\ln(N_0)+8SAH +2^{u+2}SA\iota  \right]\leq p,\nonumber
 \end{align}
which means
 \begin{align}
 \mathbb{P}\left[ \sum_{t\geq 0}   \mathbb{I}\left[    w_{tH+k,u}> \frac{1}{10240} \cdot \frac{2^{u-1}\epsilon^2}{H^2 \Lambda^2 }     \right]> 10240\left(      \frac{16SAB H^4\Lambda^2 \ln(N_0)}{2^{u-1}\iota\epsilon^2} +\frac{8SAH^2\Lambda^2\iota}{\epsilon^2}      \right)   \right]\leq p \nonumber
 \end{align}
and 
\begin{align}
\mathbb{P}\left[   \sum_{t\geq 0} \mathbb{I}\left[   w_{tH+k,u}> \frac{\epsilon(1-\gamma)}{16\Lambda}    \right]>   \frac{16H\Lambda }{\epsilon}(8SAH^2B\ln(N_0)+8SAH +2^{u+2}SA\iota )    \right]\leq p.\nonumber
\end{align}
The proof is completed.
\end{proof}

For $u$ such that $2^{u}\leq \frac{BH^2\ln(N_0)}{\iota}$ or $u=0$, we plug $u$ and $k=1,2,\dots,H$ into \eqref{eq_thm2_15} and obtain that with probability $1-Hp$,
\begin{align}
\sum_{t\geq 1}\mathbb{I}\left[ w_{t,u}>\frac{\epsilon(1-\gamma)}{16\Lambda}     \right]\leq O\left(       \frac{SAH^4B\Lambda \ln(N_0) }{\epsilon}            \right).\label{eq_thm2_16}
\end{align}
For $u$ such that $2^{u}> \frac{BH^2\ln(N_0)}{\iota}$, we plug $u$ and $k=1,2,\dots,H$ into \eqref{eq_thm2_12} and obtain that with probability $1-Hp$,
\begin{align}
\sum_{t\geq 1}\mathbb{I}\left[  w_{t,u}> \frac{1}{10240} \cdot \frac{2^{u-1}\epsilon^2}{H^2 \Lambda^2 }  \right]\leq O\left(  \frac{SAH^3\Lambda^2\iota}{\epsilon^2} \right).\label{eq_thm2_17}
\end{align}

Via a union bound over $u$, we have that with probability $1-2H\Lambda p$, it holds that 
\begin{align}
     \sum_{t\geq 1}\mathbb{I}\left[   \beta_{t}^* >\frac{\epsilon}{8}           \right]     & \leq \sum_{t\geq 1}\mathbb{I}\left[\exists u, w_{t,u}>  \max\{ \frac{1}{10240} \cdot \frac{2^{u-1}\epsilon^2}{H^2 \Lambda^2 } ,\frac{\epsilon(1-\gamma	)}{8\Lambda} \}         \text{~~and~~} w_{t,0}>\frac{\epsilon(1-\gamma)}{8\Lambda}       \right]\nonumber
\\& \leq O\left(    \frac{SAH^3\Lambda^3\iota}{\epsilon^2}  +\frac{SAH^4B\Lambda^2 \ln(N_0) }{\epsilon}    \right).\label{eq_thm2_18}
\end{align}

\subsubsection{Proof of Lemma~\ref{lemma_bd_b_1}} \label{app:proof-lemma_bd_b_1}

\medskip
\noindent \textbf{Lemma~\ref{lemma_bd_b_1} (restated).}
{\it
	 With probability $1 - SA\check{J} (2\mathbb{P}[\overline{E
	 }_2] +4p   ) $,  it holds that
	\begin{align}
&	\sum_{t\geq 1}\mathrm{clip}( \check{b}_{t}(s_{t},a_{t})-b^*_{t}(s_{t},a_{t}) ,\frac{\epsilon}{16H} )
	\\& \leq    O\left(   \frac{SAH^2\iota}{\epsilon} \right) +\tilde{O}\left(   \frac{ S^{3/2}A^{3/2} H^{17/4} \iota  }{\epsilon^{1/2}} +  \frac{SAH^{59/12} \iota}{ \epsilon^{1/3}} +\frac{   S^{5/4}A^{5/4}H^{3}\iota     }{\epsilon^{1/4}}   +S^2A^2H^{9}\iota \right). \nonumber
	\end{align}
}

The rest of this subsection is devoted to the proof of Lemma~\ref{lemma_bd_b_1}.

Let $s,a,j$ be fixed. We follow the notations in Appendix~\ref{subsubsec_proof_pro2}.  For $t$ in the $(j+1)$-th type-\uppercase\expandafter{\romannumeral1} stage of $(s,a)$, recalling the definition 
\begin{align}
\check{b}_{t}(s_{t},a_{t}) &= \min\{2\sqrt{2} \left(\sqrt{\frac{  \udl{ \check{\sigma}}/ \check{n}  - ( \udl{\check{\mu}} /\check{n} )^2     }{ \check{n}  }\iota}  +\sqrt{     \frac{\udl{\sigma}^{\reff}/n     -(\udl{\mu}^{\reff}  /n)^2  }{n}      \iota} \right)  \nonumber
\\ & \quad \quad \quad +   7\left(    \frac{H\iota^{3/4}}{n^{3/4}}  +    \frac{H\iota^{3/4}}{\check{n}^{3/4}}   \right)  +  5 \left(   \frac{H\iota}{n}  +\frac{H\iota}{\check{n}}      \right) ,\;  \frac{1}{1-\gamma} \},\nonumber
\end{align}
we have that
\begin{align}
&\clip(\check{b}_{t}(s_{t},a_{t})-b^*_{t}(s_{t},a_{t}),\frac{\epsilon}{16H} )\nonumber \\& \leq \underbrace{4\clip( 2\sqrt{2}\left(\sqrt{     \frac{\udl{\sigma}^{\reff}/n     -(\udl{\mu}^{\reff}  /n)^2  }{n}      \iota  } -\sqrt{\frac{\mathbb{V}(P_{s,a},V^*) }{n}\iota}\right) ,\frac{\epsilon}{64H})}_{ \textcircled{1}} + \underbrace{4\clip(  2\sqrt{2} \sqrt{\frac{  \udl{ \check{\sigma}}/ \check{n}  - ( \udl{\check{\mu}} /\check{n} )^2     }{ \check{n}  }\iota}  ,\frac{\epsilon}{64H} )}_{\textcircled{2} } \nonumber
\\ & \quad + \underbrace{4\clip(    7\left(    \frac{H\iota^{3/4}}{n^{3/4}}  +    \frac{H\iota^{3/4}}{\check{n}^{3/4}}   \right) ,\frac{\epsilon}{64H}  ) }_{      \textcircled{3} }+\underbrace{4\clip(  5\left(   \frac{H\iota}{n}  +\frac{H\iota}{\check{n}}      \right),\frac{\epsilon}{64H}) }_{  \textcircled{4}  }  ,\label{eq_thm2_19}
\end{align}
and the trivial bound 
\begin{align}
\clip(\check{b}_{t}(s_t,a_t)- b^*_{t}(s_t,a_t),\frac{\epsilon}{16H} )\leq \frac{1}{1-\gamma} .\label{eq_thm2_20}
\end{align}
Here, \eqref{eq_thm2_19} is because $\clip(a+b,2\epsilon)\leq 2\clip(a,\epsilon) +2\clip(b,\epsilon)$ for any non-negative  $a,b,\epsilon$.

Let $V^{\reff}_{t}$ be the value of $V^{\reff}$ immediately before the beginning of the $t$-th step and $V^{\Reff} := \lim_{t\to \infty}V^{\reff}_{t}$ (by the update rule of Algorithm \ref{alg2}, this limit exists). Recall that $\lambda_{t}$ is defined as  the vector such that $\lambda_{t}(s) = \mathbb{I}\left[ \sum_{a}N_{t}(s,a)< N_1 \right]$.
By Lemma \ref{lemma_lvf1} with $\epsilon_{1}=\omega := \frac{1}{\sqrt{B}}$ (assuming $\epsilon \leq \frac{1}{\sqrt{B}}$), we have that 
\begin{align}
\mathbb{P}\left[ \forall t\geq 1,  V^{\reff}_{t}(s_{t})-V^*(s_t)\leq   H\lambda_{t}(s_{t})+ \omega
\right]\geq \mathbb{P}\left[E_2\right]  .\label{eq_thm2_21}
\end{align}


We will deal with the four terms in \textbf{RHS} of \eqref{eq_thm2_19} separately.

\paragraph{The \textcircled{1} term} To handle this term, we introduce a lemma to bound $\frac{\udl{ \sigma}^{\reff} }{n} - (\frac{\udl{\mu}^{\reff} }{n })^2 - \mathbb{V}(P_{s,a},V^*)$.
\begin{lemma}\label{lemma_last_1}
With probability  $1-(\mathbb{P}[\overline{E}_2] +4p)$, it holds that 
\begin{align}
&\frac{\udl{ \sigma}^{\reff} }{n} - (\frac{\udl{\mu}^{\reff} }{n })^2 - \mathbb{V}(P_{s,a},V^*) \leq 9\sqrt{2}H^3 \sqrt{\frac{\iota}{n}} +\frac{1}{n}\left(   2H^2SA(\check{J}+ \bar{J}) + 10H^2SN_1 \right )+4H\omega.\nonumber
\end{align}
\end{lemma}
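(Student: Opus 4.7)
The plan is to decompose
\[
\tfrac{\udl{\sigma}^{\reff}}{n}-\bigl(\tfrac{\udl{\mu}^{\reff}}{n}\bigr)^{2}-\mathbb{V}(P_{s,a},V^{*})
=\Bigl(\tfrac{\udl{\sigma}^{\reff}}{n}-P_{s,a}(V^{*})^{2}\Bigr)+\Bigl((P_{s,a}V^{*})^{2}-\bigl(\tfrac{\udl{\mu}^{\reff}}{n}\bigr)^{2}\Bigr)
\]
and handle each summand separately. Throughout, I work on $E_{2}$, so that optimism ($V^{\reff}_{l_{i}}\ge V^{*}$ pointwise) and Lemma~\ref{lemma_lvf1} together yield the key pointwise bound
\[
0\;\le\;V^{\reff}_{l_{i}}(s')-V^{*}(s')\;\le\; H\lambda_{l_{i}}(s')+\omega,
\]
using the fact that $V^{\reff}(s')$ is frozen after the first moment when $\sum_{a}N(s',a)=N_{1}$, together with the monotonicity of $V_{t}$ and the choice $N_{1}=c_{10}SAH^{5}B\ln(4H/\epsilon)\iota$.

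First, I would apply Azuma's inequality to the two martingales
\[
\sum_{i=1}^{n}\!\bigl((V^{\reff}_{l_{i}}(s_{l_{i}+1}))^{2}-P_{s,a}(V^{\reff}_{l_{i}})^{2}\bigr),\qquad \sum_{i=1}^{n}\!\bigl(V^{\reff}_{l_{i}}(s_{l_{i}+1})-P_{s,a}V^{\reff}_{l_{i}}\bigr),
\]
whose increments are bounded by $H^{2}$ and $H$ respectively; this gives the usual $O(H^{2}\sqrt{\iota/n})$ and $O(H\sqrt{\iota/n})$ deviations, each with failure probability $p$. Next I convert the expectations $P_{s,a}(V^{\reff}_{l_{i}})^{2}$ and $P_{s,a}V^{\reff}_{l_{i}}$ into $P_{s,a}(V^{*})^{2}$ and $P_{s,a}V^{*}$ via the pointwise inequality above, in which the cross term $(V^{\reff}_{l_{i}}-V^{*})(V^{\reff}_{l_{i}}+V^{*})\le 2H(V^{\reff}_{l_{i}}-V^{*})$ reduces everything to a sum of $P_{s,a}\lambda_{l_{i}}$'s plus an $O(H\omega)$ term.

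The combinatorial heart of the argument is to bound $\sum_{i=1}^{n}P_{s,a}\lambda_{l_{i}}$. The deterministic fact $\sum_{t\ge 1}\lambda_{t}(s_{t+1})\le SN_{1}$ (each state $s'$ can be the successor only $N_{1}$ times before $\sum_{a}N(s',a)$ reaches $N_{1}$) yields $\sum_{i=1}^{n}\lambda_{l_{i}}(s_{l_{i}+1})\le SN_{1}$, and a further Azuma step turns this into $\sum_{i=1}^{n}P_{s,a}\lambda_{l_{i}}\le SN_{1}+\sqrt{2n\iota}$ with probability $1-p$. For the remaining square-of-means piece, I would use $(P_{s,a}V^{*})^{2}-(\udl{\mu}^{\reff}/n)^{2}\le 2H\,|P_{s,a}V^{*}-\udl{\mu}^{\reff}/n|$ and plug in the linear bound just obtained, losing one factor of $H$. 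Summing all contributions and absorbing constants produces the stated $9\sqrt{2}H^{3}\sqrt{\iota/n}+\tfrac{1}{n}(2H^{2}SA(\check{J}+\bar{J})+10H^{2}SN_{1})+4H\omega$.

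The main obstacle will be the careful accounting of the $SA(\check{J}+\bar{J})$ contribution: it arises because the accumulator $\udl{\sigma}^{\reff}$ records $(V^{\reff}_{l_{i}}(s_{l_{i}+1}))^{2}$ at the visit time $l_{i}$, whereas we want to compare it to a quantity expressed in terms of $V^{*}$; the $H^{2}$-bounded change of the summand at each stage boundary of $(s,a)$ must be bled off (there are at most $SA(\check{J}+\bar{J})$ such boundaries globally), and this is exactly the origin of the $2H^{2}SA(\check{J}+\bar{J})/n$ term. Four fresh Azuma applications plus the inherited event $E_{2}$ yield the claimed failure probability $\mathbb{P}[\overline{E}_{2}]+4p$.
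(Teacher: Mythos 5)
Your proof is correct and arrives at the stated bound (in fact with a slightly better leading term, $O(H^2\sqrt{\iota/n})$ rather than $O(H^3\sqrt{\iota/n})$), but the decomposition is genuinely different from the paper's. The shared ingredients are Azuma's inequality for the empirical-versus-conditional-mean gaps, the pointwise accuracy bound $0\le V^{\reff}_{t}(s')-V^*(s')\le H\lambda_{t}(s')+\omega$ (optimism, monotonicity, the freezing rule, and Lemma~\ref{lemma_lvf1} with $\epsilon_1=\omega$, which implicitly needs $\epsilon\le\omega$), and the counting bound $\sum_{t}\lambda_{t}(s_{t+1})\le S(N_1+1)$. The paper, however, splits the error as $\frac{1}{n}(\chi_6+\chi_7+\chi_8+\chi_9)$, passing through the intermediate quantity $\sum_{i}\mathbb{V}(P_{s,a},V^{\reff}_{l_i})$; this forces it to control the extra nonnegative spread term $\chi_8=\sum_{i}(P_{s,a}V^{\reff}_{l_i})^2-\frac{1}{n}\bigl(\sum_{i}P_{s,a}V^{\reff}_{l_i}\bigr)^2$, and that term (together with the index shift $\lambda_{l_i}\to\lambda_{l_i+1}$) is the actual source of the $2H^2SA(\check{J}+\bar{J})$ contribution in the statement. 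Your two-term split compares the second moment directly to $P_{s,a}(V^*)^2$ and the squared mean directly to $(P_{s,a}V^*)^2$, so no $\chi_8$-type term ever appears and the $SA(\check{J}+\bar{J})/n$ term is simply not needed; your attribution of it to ``stage boundaries'' of $(s,a)$ is therefore a harmless misdiagnosis rather than a gap, since you only need to fit under the stated upper bound. One point worth making explicit in a write-up: the pointwise bound must hold at all successor states $s'$, not just at the visited state, which does follow from the per-state freezing of $V^{\reff}$ exactly as you indicate. Your probability accounting (three to four fresh Azuma events plus $\overline{E}_2$) matches the claimed $\mathbb{P}[\overline{E}_2]+4p$.
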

\begin{proof}
Note that 
\begin{align}
\frac{\udl{ \sigma}^{\reff} }{n} - (\frac{\udl{\mu}^{\reff} }{n })^2 - \mathbb{V}(P_{s,a},V^*)= \frac{1}{n}(\chi_6+\chi_7+\chi_8+\chi_{9}), \label{eq_thm2_22}
\end{align}
where
\begin{align}
& \chi_{6}: = \sum_{i=1}^n  \left(  (  V^{\reff}_{l_i}(s_{l_i+1}) )^2 -P_{s,a}(V^{\reff}_{l_i} )^2     \right) ,\nonumber
\\ & \chi_7 : = \frac{1}{n}\left(     \sum_{i=1}^{n}P_{s,a}V^{\reff}_{l_i}      \right)^2-\frac{1}{n}\left(     \sum_{i=1}^n V_{l_i}^{\reff}(s_{l_i+1})  \right)^2  \nonumber
\\ & \chi_8 := \sum_{i=1}^{n}(P_{s,a}V^{\reff}_{l_i})^2 -\frac{1}{n}\left(   \sum_{i=1}^n P_{s,a}V^{\reff}_{l_i}  \right)^2 ,\nonumber
\\ & \chi_9 :=\sum_{i=1}^n \mathbb{V}(P_{s,a},V^{\reff}_{l_i})-n\mathbb{V}(P_{s,a},V^*).\nonumber
\end{align}

According to Azuma's inequality, with probability $(1-2p)$ it holds that
\begin{align}
& |\chi_{6}|\leq H^2\sqrt{2n\iota} ,\label{eq_thm2_23}
\\ & |\chi_7|\leq 2H \left| \sum_{i=1}^{n} \left(V^{\reff}_{l_i}(s_{l_i+1}) -P_{s,a}V^{\reff}_{l_i} \right)  \right|\leq 2H^2\sqrt{2n\iota}.\label{eq_thm2_24}
\end{align}

On the other hand, by direct computation, we have that
\begin{align}
\chi_{8} & = \sum_{i=1}^{n}(P_{s,a}V^{\reff}_{l_i})^2- \frac{1}{n}\left(   \sum_{i=1}^n P_{s,a}V^{\reff}_{l_i}  \right)^2 \nonumber
\\ & \leq  \sum_{i=1}^{n}(P_{s,a}V^{\reff}_{l_i})^2- \frac{1}{n}\left(   \sum_{i=1}^n P_{s,a}V^{\Reff}  \right)^2  \label{eq_thm2_25}
\\ &  =\sum_{i=1}^{n} \left( (P_{s,a}V^{\reff}_{l_i} )^2  - (P_{s,a}V^{\Reff})^2    \right)\nonumber
\\ & \leq 2H^2 \sum_{i=1}^{n}P_{s,a}\lambda_{l_i}      \label{eq_thm2_26} 
\\ &  = 2H^2 \left(    \sum_{i=1}^n \lambda_{l_i}(s_{l_i+1})  +\sum_{i=1}^{n}(P_{s,a}-\textbf{1}_{s_{l_i+1}})\lambda_{l_i}  \right) \nonumber
\\ & = 2H^2 \sum_{i=1}^n (\lambda_{l_i}(s_{l_i+1})- \lambda_{l_i+1} (s_{l_i+1}) )   +2H^2\sum_{i=1}^n \lambda_{l_i+1}(s_{l_i+1})  +2H^2\sum_{i=1}^{n}(P_{s,a}-\textbf{1}_{s_{l_i+1}})\lambda_{l_i}  \nonumber
\\ & \leq  2H^2SA(\check{J}+ \bar{J}) +2H^2SN_1 +2H^2\sum_{i=1}^{n}(P_{s,a}-\textbf{1}_{s_{l_i+1}})\lambda_{l_i} ,\label{eq_thm2_27}
\end{align}
where Inequality \eqref{eq_thm2_25} is by the fact that $V^{\reff}_{t}\geq V^{\Reff}$ for any $t\geq 1$, Inequality \eqref{eq_thm2_26} is by the definition of $\lambda_{t}$
and Inequality \eqref{eq_thm2_27} holds because  $\lambda_{t}\neq \lambda_{t+1}$ implies an update occurs at the $t$-th step and $\sum_{t\geq 1}\lambda_{t}(s_t)\leq SN_1$.
Therefore,  by Azuma's inequality it holds that
\begin{align}
    \mathbb{P}\left[ \chi_8 > 2H^2SA(\check{J}+ \bar{J})+ 2H^2SN_1+2H^3\sqrt{2n\iota}\right] 
    & \leq  \mathbb{P}\left[  2H^2\sum_{i=1}^{n}(P_{s,a}-\textbf{1}_{s_{l_i+1}})\lambda_{l_i}>2H^3\sqrt{2n\iota}  \right]    \leq  p.\label{eq_thm2_27.5}
\end{align}

At last,  the term $\chi_{9}$ could be bounded by
\begin{align}
\chi_{9}  &= \sum_{i=1}^n \mathbb{V}(P_{s,a},V^{\reff}_{l_i})-n\mathbb{V}(P_{s,a},V^*)  \nonumber
\\ & \leq \frac{4H}{n}\sum_{i=1}^{n}P_{s,a}(V^{\reff}_{l_i}-V^*) \nonumber
\\ & = 4H\sum_{i=1}^{n}(    V^{\reff}_{l_i}(s_{l_i+1}) -  V^{\reff}_{l_i+1}(s_{l_i+1}) + V^{\reff}_{l_i+1}(s_{l_i+1})  -V^{*}(s_{l_i+1})  )+4H\sum_{i=1}^n  (P_{s,a}-  \textbf{1}_{s_{l_i+1}} )(V^{\reff}_{l_i}-V^{*} ) \nonumber
\\ & \leq 4H^2S + 4H\sum_{i=1}^{n} (    V^{\reff}_{l_i+1}(s_{l_i+1})  -V^{*}(s_{l_i+1}) )  +4H\sum_{i=1}^n  (P_{s,a}-  \textbf{1}_{s_{l_i+1}} )(V^{\reff}_{l_i}-V^{*} ) , \label{eq_thm2_28}
\end{align} 
where Inequality \eqref{eq_thm2_28} is by  the fact that the number of updates of $V^{\reff}$ is at most $S$.
Similarly, we have that 
\begin{align}
    & \mathbb{P}\left[ \chi_9>   4H^2S  +4H^2SN_1 +4Hn\omega   +4H^2\sqrt{2n\iota}   \right]  \nonumber
    \\ & \leq \mathbb{P}\left[  \sum_{i=1}^{n} (    V^{\reff}_{l_i+1}(s_{l_i+1})  -V^{*}(s_{l_i+1}) ) > \sum_{i=1}^{n}(H \lambda_{l_i+1  }(s_{l_i+1}) +\omega ) \right] \nonumber\\
    &\qquad\qquad\qquad \qquad\qquad\qquad \qquad\qquad\qquad + \mathbb{P}\left[ \sum_{i=1}^n  (P_{s,a}-  \textbf{1}_{s_{l_i+1}} )(V^{\reff}_{l_i}-V^{*} )>  H\sqrt{2n\iota}  \right] \nonumber
    \\ & \leq \mathbb{P}[\overline{E}_2]+p,\label{eq_thm2_30}
\end{align}
where \eqref{eq_thm2_30} holds by \eqref{eq_thm2_21}.

Combining \eqref{eq_thm2_22},  \eqref{eq_thm2_23}, \eqref{eq_thm2_24}, \eqref{eq_thm2_27.5} and \eqref{eq_thm2_30}, with probability $1-(\mathbb{P}[\overline{E}_2] +5p)$ it holds that 
\begin{align}
&\frac{\udl{ \sigma}^{\reff} }{n} - (\frac{\udl{\mu}^{\reff} }{n })^2 - \mathbb{V}(P_{s,a},V^*)\nonumber
\\&\leq \frac{1}{n}\left(3H^2\sqrt{2n\iota} +2H^2SA(\check{J}+ \bar{J})+ 2H^2SN_1+2H^3\sqrt{2n\iota} +4H\left(  S+ SN_1   \right) +4H^2\sqrt{2n\iota} \right)+4Hw \nonumber
\\& \leq 9\sqrt{2}H^3 \sqrt{\frac{\iota}{n}} +\frac{1}{n}\left(   2H^2SA(\check{J}+ \bar{J}) + 10H^2SN_1 \right )+4Hw.\nonumber
\end{align}
\end{proof}

By Lemma \ref{lemma_last_1}, with probability $1-(\mathbb{P}\left[\overline{E}_2\right] +4p)$ it holds that
\begin{align}
&\left(\sqrt{     \frac{\udl{\sigma}^{\reff}/n     -(\udl{\mu}^{\reff}  /n)^2  }{n}      \iota  } -\sqrt{\frac{\mathbb{V}(P_{s,a},V^*) }{n}\iota}\right) \nonumber\\
&\qquad\qquad\qquad\qquad\qquad \leq \sqrt{  \frac{9\sqrt{2}H^3 \iota^{3/2}}{n^{3/2} }  +\frac{(2H^2SA(\check{J}+ \bar{J}) + 10H^2SN_1)\iota }{n^2}   +\frac{4H\omega \iota}{n}}.\label{eq_thm2_31}
\end{align}
As a result, for $n> N_2:= c_{3}\frac{H^3\omega \iota}{\epsilon^2}+ c_{4}\frac{H^{10/3}\iota }{\epsilon^{4/3}}+c_{5}\frac{H\sqrt{ (H^2SA (\check{J}+ \bar{J})+ H^2SN_1)\iota  } }{\epsilon}$ with sufficient large constants $c_{4}$ and $c_{5}$, it holds that
\begin{align}
2\sqrt{2}\left(\sqrt{     \frac{\udl{\sigma}^{\reff}/n     -(\udl{\mu}^{\reff}  /n)^2  }{n}      \iota  } -\sqrt{\frac{\mathbb{V}(P_{s,a},V^*) }{n}\iota}\right)<\frac{\epsilon}{64H}.\label{eq_thm2_32}
\end{align}




\paragraph{The \textcircled{2} term} 
Direct computation gives that 
\begin{align}
	\frac{  \udl{ \check{\sigma}}/ \check{n}  - ( \udl{\check{\mu}} /\check{n} )^2     }{ \check{n}  }\leq \frac{ \udl{\check{\sigma}}} {\check{n}^2 } & = \frac{1}{\check{n}^2}   \sum_{i=1}^{\check{n}} \left( V_{\check{l}_i }(s_{\check{l}_i+1 })- V^{\reff }_{\check{l}_i}(s_{ \check{l}_i+1 })   \right)^2  \leq  \frac{1}{\check{n}^2}   \sum_{i=1}^{\check{n}} \left( V^{\reff }_{\check{l}_i}(s_{ \check{l}_i+1 })  - V^*(s_{\check{l}_i+1 })  \right)^2 .\label{eq_thm2_33}
\end{align} 
Also note that 
\begin{align}
&\left|\sum_{i=1}^{\check{n}} \left( \left( V^{\reff }_{\check{l}_i}(s_{ \check{l}_i+1 })  - V^*(s_{\check{l}_i+1 })  \right)^2  -\left( V^{\reff }_{\check{l}_i+1}(s_{ \check{l}_i+1 })  - V^*(s_{\check{l}_i+1 })  \right)^2   \right)\right| \nonumber\\
& \qquad\qquad\qquad \leq  2H \cdot \left| \sum_{i=1}^{\check{n}}  \left(V^{\reff }_{\check{l}_i}(s_{ \check{l}_i+1 })-V^{\reff }_{\check{l}_i +1}(s_{ \check{l}_i+1 })\right)\right| \leq 2H^2(SA(\check{J} + \bar{J})) .
\end{align}
It then follows that
\begin{align}
&\mathbb{P}\left[     	\frac{  \udl{ \check{\sigma}}/ \check{n}  - ( \udl{\check{\mu}} /\check{n} )^2     }{ \check{n}  } >  \frac{H^2(2 SN_1+2SA(\check{J}+\bar{J})) }{\check{n}^2}+ \frac{2\omega^2}{\check{n}}   \right] \nonumber
\\ \leq ~ & \mathbb{P}\left[    \sum_{i=1}^{\check{n}} \left( V^{\reff }_{\check{l}_i}(s_{ \check{l}_i+1 })  - V^*(s_{\check{l}_i+1 })  \right)^2  >  H^2(2 SN_1+2SA(\check{J}+\bar{J})) + 2 \omega^2 \check{n}\right]    \nonumber
\\ \leq ~ & \mathbb{P}\left[    \sum_{i=1}^{\check{n}} \left( V^{\reff }_{\check{l}_i+1}(s_{ \check{l}_i+1 })  - V^*(s_{\check{l}_i+1 })  \right)^2  >  2 H^2 SN_1 + 2 \omega^2 \check{n}\right]    \nonumber\\
\leq ~ & \mathbb{P}\left[    \sum_{i=1}^{\check{n}} \left( V^{\reff }_{\check{l}_i+1}(s_{ \check{l}_i+1 })  - V^*(s_{\check{l}_i+1 })  \right)^2  >   \sum_{i=1}^{\check{n} }\left(H \lambda_{\check{l}_i+1 }(   s_{\check{l}_i+1 })  + \omega \right)^2 \right]    \nonumber
\\  \leq ~& \mathbb{P}[\overline{E}_{2}], \nonumber 
\end{align}
where the last inequality is due to \eqref{eq_thm2_21}. Therefore, we have that 
\begin{align}
    \mathbb{P}\left[  \sqrt{   	\frac{  \udl{ \check{\sigma}}/ \check{n}  - ( \udl{\check{\mu}} /\check{n} )^2     }{ \check{n}  }  } >  \sqrt{\frac{H^2(2 SN_1+2SA(\check{J}+\bar{J})) }{\check{n}^2}+ \frac{2\omega^2}{\check{n}}  } \right] \leq  \mathbb{P}[\overline{E}_{2}]. \label{eq_thm2_34}
\end{align}
Note that $\check{n}\geq \frac{n}{2HB}$. For $n> N_3 = c_{6} \frac{\omega^2 H^3B \iota}{\epsilon^2} +c_{7} \frac{\sqrt{H^4BSN_1\iota}}{\epsilon}$ with large enough constants $c_{6}$ and $c_{7}$, we have that the following inequality holds with probability at least $1-\mathbb{P}[\overline{E}_2]$,
\begin{align}
 2\sqrt{2} \sqrt{\frac{  \udl{ \check{\sigma}}/ \check{n}  - ( \udl{\check{\mu}} /\check{n} )^2     }{ \check{n}  }\iota}< \frac{\epsilon}{64H} \label{eq_thm2_35}.
\end{align}

\paragraph{The \textcircled{3} term} For $n>N_4:=c_{8}\frac{H^{11/3}B\iota}{\epsilon^{4/3 }}$ with large enough constant $c_{8}$, we have 
\begin{align}
 7\left(    \frac{H\iota^{3/4}}{n^{3/4}}  +    \frac{H\iota^{3/4}}{\check{n}^{3/4}}   \right)  <\frac{\epsilon}{64H}.\label{eq_thm2_36}
\end{align}

\paragraph{The \textcircled{4} term}  For $n>N_5:=c_{9} \frac{H^3B\iota}{\epsilon}$ with large enough constant $c_{8}$, we have
\begin{align}
 5\left(   \frac{H\iota}{n}  +\frac{H\iota}{\check{n}}      \right)< \frac{\epsilon}{64H}.\label{eq_thm2_37}
\end{align}

 Combining \eqref{eq_thm2_19} with the bounds \eqref{eq_thm2_31}, \eqref{eq_thm2_32}, \eqref{eq_thm2_34}, \eqref{eq_thm2_35}, \eqref{eq_thm2_36} and \eqref{eq_thm2_37}, using the trivial bound $\clip(\check{b}_{t}(s_{t},a_{t})-b^*_{t}(s_{t},a_{t}),\frac{\epsilon}{16H} )\leq 1/(1-\gamma)$ for early stages, and summing over all possible $s,a,j$ with a union bound, we obtain that with probability $1 - SA\check{J} (2\mathbb{P}[\overline{E}_2] +4p   ) $,
 
 \begin{align}
\sum_{t\geq 1}\clip(\check{b}_{t}(s_{t},a_{t})-b^*_{t}(s_t,a_t),\frac{\epsilon}{16H} ) \leq O(\mathcal{M}_1 + \mathcal{M}_2 + \mathcal{M}_3 + \mathcal{M}_4), \label{eq_thm2_40}
\end{align}
where (noting that $\check{n} \geq n/(2HB)$ in  \eqref{eq_thm2_34}, \eqref{eq_thm2_36} and \eqref{eq_thm2_37}) 
\begin{align}
    \mathcal{M}_1 &= \sum_{s, a} \left(H \iota  + \sum_{n=\max\{ \left\lfloor \iota \right\rfloor ,1 \} }^{N_2}\sqrt{  \frac{9\sqrt{2}H^3 \iota^{3/2}}{n^{3/2} }  +\frac{(2H^2SA(\check{J}+ \bar{J}) + 10H^2SN_1)\iota }{n^2}   +\frac{4H\omega \iota}{n}}\right),\\
    \mathcal{M}_2 &= \sum_{s, a} \left(  H \iota  + \sum_{n= \max\{ \left\lfloor \iota \right\rfloor, 1 \}}^{N_3}\sqrt{  \frac{ H^4B^2(2 SN_1+2SA(\check{J}+\bar{J})) }{n^2}+ \frac{2HB\omega^2}{n} }\right),\\
     \mathcal{M}_3 &= \sum_{s, a} \left(H\iota  + \sum_{n=\max\{ \left\lfloor \iota \right\rfloor, 1\} }^{N_4}   \left(\frac{H\iota^{3/4}}{n^{3/4}}  +    \frac{H^{7/4}B^{3/4}\iota^{3/4}}{n^{3/4}}\right) \right), \\
     \mathcal{M}_4 &= \sum_{s, a} \left(H\iota  + \sum_{n= \max\{\left\lfloor \iota \right\rfloor, 1 \}}^{N_5}   \left(   \frac{H\iota}{n}  +\frac{H^2 B\iota}{n}      \right) \right).
\end{align}
Straightforward calculation shows that
\begin{align}
    \mathcal{M}_1 &\leq SA\cdot O\left(   H\iota+   N_2^{1/4}H^{3/2}\iota^{3/4}+  \ln(\frac{N_2}{\iota})\sqrt{H^2SA\check{J}+H^2SN_1 } + \sqrt{N_2H\omega\iota}    \right) \nonumber
    \\ & \leq  O\left( \frac{SAH^{5/4} \iota}{\epsilon} \right)       +\tilde{O}\Big(   \frac{SAH^{17/12}\iota}{\epsilon^{2/3}} + \frac{ (S^{3/2}A^{3/2}H^{7/4}+ S^{3/2}A^{5/4}H^{7/2}+SAH^{15/8}) \iota }{\epsilon^{1/2}} \nonumber \\
    & \quad \quad  +\frac{SA H^{7/3}\iota }{\epsilon^{1/3}}  + \frac{  (S^{5/4}A^{5/4}H^{5/2} +S^{5/4}A^{9/8}H^{3}  )\iota }{\epsilon^{1/4}}+ S^2A^2H^3\iota +S^2A^{3/2}H^{7/2}\iota\Big) ,   
  \\\  \mathcal{M}_2 &\leq  SA\cdot O\left(H\iota+    \ln(\frac{N_3}{\iota})\sqrt{ H^2B^2 (H^2SN_1+H^2SA\check{J})  } + \sqrt{N_3 HB\omega^2\iota} \right)   \nonumber
  \\ & \leq O \left(    \frac{SAH^2\iota}{\epsilon}     \right) +\tilde{O} \Big(    \frac{  S^{3/2}A^{5/4}H^{17/4}\iota }{\epsilon^{1/2}} +   S^2A^{3/2}H^{9}\iota+S^2A^2H^7\iota   \Big) , 
  \\  \mathcal{M}_3 &\leq SA\cdot O\left( H\iota + N_4^{1/4}  H^{7/4}B^{3/4}\iota^{3/4}     \right)   \leq  O\left(    \frac{SA H^{59/12}\iota }{\epsilon^{1/3}} +SAH\iota \right) \\
    \mathcal{M}_4 &\leq  SA\cdot O\left( H\iota+ \ln(\frac{N_5}{\iota})H^2B\iota    \right)\leq \tilde{O}\left(SAH^5\iota \right). 
\end{align}
Finally, together with \eqref{eq_thm2_40}, we conclude that
\begin{align}
   & \sum_{t\geq 1}\clip(\check{b}_{t}(s_{t},a_{t})-b^*_{t}(s_t,a_t),\frac{\epsilon}{16H} ) \nonumber \\
   & \leq  O\left(   \frac{SAH^2\iota}{\epsilon} \right) +\tilde{O}\left(   \frac{ S^{3/2}A^{3/2} H^{17/4} \iota  }{\epsilon^{1/2}} +  \frac{SAH^{59/12} \iota}{ \epsilon^{1/3}} +\frac{   S^{5/4}A^{5/4}H^{3}\iota     }{\epsilon^{1/4}}   +S^2A^2H^{9}\iota \right).\label{eq_thm2_38}
\end{align}
 
 \subsubsection{Proof of Lemma \ref{lemma_bd_ups}}\label{app:pf-bd-ups}
 \medskip
\noindent \textbf{Lemma~\ref{lemma_bd_b_1} (restated).}
{ With probability $1-(\mathbb{P}\left[\overline{E}_2\right]+p)$, it holds that
\begin{align}
    \sum_{t\geq 1}\upsilon_{t}\leq  64\log(\frac{16N_0 H^2}{\epsilon})N_1 .\nonumber
\end{align}
}

By definition, we have that
\begin{align}
 &   \sum_{t\geq 1}\upsilon_t = \sum_{t\geq 1}\sum_{s}P_{s_{t},a_{t},s}\mathrm{clip}\left( \frac{1}{1-\gamma}\left( \frac{1}{\bar{N}_t(s,a)}\sum_{i=1}^{\bar{N}_{t}(s,a)}\lambda_{l_i(s_t,a_t)}(s) \right) ,\frac{\epsilon}{16H}\right) \nonumber
 \\ & \leq H\sum_{s}\sum_{t\geq 1}P_{s_{t},a_{t},s}\mathrm{clip}\left( \left( \frac{1}{\bar{N}_t(s,a)}\sum_{i=1}^{\bar{N}_{t}(s,a)}\lambda_{l_i(s_t,a_t)}(s) \right) ,\frac{\epsilon}{8H^2}\right).\label{eq-bd-ups1}
\end{align}
Let $\tilde{T}(s,a,s')$ be the visit count of $(s,a)$ before the smallest time $t$ such that $\lambda_{t}(s')=0$. Then we have that
\begin{align}
    \frac{1}{\bar{N}_t(s,a)}\sum_{i=1}^{\bar{N}_{t}(s,a)}\lambda_{l_i(s_t,a_t)}(s)  \leq \mathbb{I}\left[\bar{N}_{t}(s,a) \leq (1+\frac{1}{H})\tilde{T}(s,a,s')  \right] + \frac{\tilde{T}(s,a,s')}{\bar{N}_t(s,a)}.\nonumber
\end{align}
Noting that $\bar{N}_t(s,a)\leq N_t(s,a)\leq (1+\frac{1}{H})\bar{N}_t(s,a)$, we obtain that
\begin{align}
    \mathrm{clip}\left( \left( \frac{1}{\bar{N}_t(s,a)}\sum_{i=1}^{\bar{N}_{t}(s,a)}\lambda_{l_i(s_t,a_t)}(s) \right) ,\frac{\epsilon}{8H^2}\right)\leq \mathbb{I}\left[N_t(s,a) \leq 4\tilde{T}(s,a,s')  \right]  + \mathrm{clip}(\frac{2\tilde{T}(s,a,s') }{N_t(s,a)},\frac{\epsilon}{8H^2}).\nonumber
\end{align}

Combining this with \eqref{eq-bd-ups1},  with probability $1-p$ it holds that
\begin{align}
    & \sum_{t\geq 1}\upsilon_t  \leq   H\sum_{s}\sum_{t\geq 1}P_{s_t,a_t,s'}\mathbb{I}\left[ N_t(s_t,a_t)\leq 4\tilde{T}(s_t,a_t,s') \right] +H\sum_{s'}\sum_{t\geq 1}P_{s_t,a_t,s'}\mathrm{clip} (\frac{2\tilde{T}(s_t,a_t,s') }{N_t(s_t,a_t)},\frac{\epsilon}{8H^2})  \nonumber
     \\ & \leq 4H\sum_{s,a,s'}P_{s,a,s'}\tilde{T}(s,a,s')+    4H\sum_{s,a,s'}P_{s,a,s'} \tilde{T}(s,a,s') \log(\frac{16\tilde{T}(s,a,s')H^2}{\epsilon}) \nonumber
     \\ & \leq 8\log(\frac{16N_0 H^2}{\epsilon})\sum_{s,a,s'}P_{s,a,s'}\tilde{T}(s,a,s') \nonumber
     \\ & = 8\log(\frac{16N_0 H^2}{\epsilon})\sum_{s'}\sum_{t\geq 1}P_{s_t,a_t,s'}\lambda_t(s')\nonumber 
     \\ & \leq 32\log(\frac{16N_0 H^2}{\epsilon})(\sum_{t\geq 1}\lambda_{t}(s_{t+1})
     \\ & \leq   64\log(\frac{16N_0 H^2}{\epsilon})N_1.
\end{align}
The second last inequality holds with probability $1-p$ by Lemma \ref{lemma_berbound}, and the last inequality is by the facts  $\sum_{t\geq 1}\lambda_t(s_t) \leq SN_1$  and $\sum_{t\geq 1}(\lambda_{t}(s_{t+1} -\lambda_{t+1}(s_{t+1}) )\leq S$. The proof is completed.

\end{document}